\pdfoutput=1 \documentclass[twoside,11pt]{article}
\newif\ifarxiv
\arxivtrue

\usepackage{amsmath,amssymb,amsthm,mathtools,bm}
\let\amsproof\proof
\let\endamsproof\endproof
\usepackage[preprint,nohyperref]{jmlr2e}
\let\proof\amsproof
\let\endproof\endamsproof
\usepackage[utf8]{inputenc}
\usepackage[T1]{fontenc}
\usepackage{graphicx,xcolor,booktabs}
\usepackage{tikz}
\usetikzlibrary{fit}
\usepackage{pgfplots}
\pgfplotsset{compat=1.18}
\usepgfplotslibrary{groupplots}
\usepackage{enumitem}
\usepackage{algorithm,algorithmic}
\usepackage{hyperref}
\usepackage{url}
\hypersetup{colorlinks=true,linkcolor=blue,citecolor=blue,urlcolor=blue,
  pdftitle={Embedding-Bias in Conditional Independence Testing},
  pdfauthor={Nikolaj Thams and Anton Rask Lundborg}}

\usepackage[capitalize,nameinlink,noabbrev]{cleveref}
\crefname{assumption}{Assumption}{Assumptions}
\Crefname{assumption}{Assumption}{Assumptions}
\crefname{algorithm}{Algorithm}{Algorithms}
\Crefname{algorithm}{Algorithm}{Algorithms}
\crefname{appendix}{Appendix}{Appendices}
\Crefname{appendix}{Appendix}{Appendices}
\crefname{subappendix}{Appendix}{Appendices}
\Crefname{subappendix}{Appendix}{Appendices}
\crefname{equation}{Equation}{Equations}
\Crefname{equation}{Equation}{Equations}

\makeatletter
\@for\@tempa:=theorem,lemma,proposition,remark,corollary,definition,conjecture,axiom,example\do{\expandafter\let\csname\@tempa\endcsname\relax
  \expandafter\let\csname end\@tempa\endcsname\relax}
\makeatother
\newtheorem{theorem}[theorem]{Theorem}
\numberwithin{theorem}{section}
\newtheorem{proposition}[theorem]{Proposition}
\newtheorem{lemma}[theorem]{Lemma}
\newtheorem{corollary}[theorem]{Corollary}
\newtheorem{definition}[theorem]{Definition}
\newtheorem{assumption}[theorem]{Assumption}
\newtheorem{example}[theorem]{Example}
\newtheorem*{examplecontinner}{Example~\ref{\examplecontlabel}}
\newenvironment{examplecont}[1]{\def\examplecontlabel{#1}\begin{examplecontinner}[continued]}{\end{examplecontinner}}
\theoremstyle{remark}
\newtheorem{remark}[theorem]{Remark}

\newcommand{\indep}{\mathrel{\perp\!\!\!\perp}}

\newcommand{\R}{\mathbb{R}}
\newcommand{\E}{\mathbb{E}}
\DeclareMathOperator{\Var}{Var}
\DeclareMathOperator{\Cov}{Cov}
\DeclareMathOperator{\Corr}{Corr}
\DeclareMathOperator{\id}{id}
\DeclareMathOperator{\RV}{RV}
\newcommand{\rbench}{\rho_{\psi \to \psi'}}
\newcommand{\rbenchhat}{\hat\rho_{\psi \to \psi'}}

\newcommand{\resOneDimZ}{20}

\newcommand{\resOneQinsuff}{5}

\newcommand{\resOneNrep}{500}

\newcommand{\resOneRtwoA}{0.1}

\newcommand{\resOneBound}{0.1}

\newcommand{\resOnePowerN}{1000}
\newcommand{\resOnePowerBudgets}{\{0.05, 0.1, 0.2\}}
\newcommand{\resOnePowerBudgetRef}{0.1}
\newcommand{\resOnePowerNaive}{0.90}
\newcommand{\resOnePowerRobust}{0.04}

\newcommand{\resOneRVTrue}{0.1}

\newcommand{\resOneRVCoverage}{95\%}

\newcommand{\resTwoN}{15{,}521}
\newcommand{\resTwoPopN}{8000}
\newcommand{\resTwoNrep}{500 (200 for the high-dimensional encoder)}
\newcommand{\resTwoNmin}{500}
\newcommand{\resTwoNmax}{10{,}000}
\newcommand{\resTwoDimWeak}{384}
\newcommand{\resTwoDimStrong}{768}

\newcommand{\resTwoRichDim}{1152}

\newcommand{\resTwoTrainRowsSmall}{400}

\newcommand{\resTwoOLSFromRich}{2{,}000}

\newcommand{\resTwoPopRtwo}{0.109}
\newcommand{\resTwoPopBound}{0.109}
\newcommand{\resTwoPopBench}{0.109}
\newcommand{\resTwoPopRtwoKRR}{0.095}
\newcommand{\resTwoPopBenchKRR}{0.092}

\newcommand{\resTwoTypeIOrthSmall}{0.51}
\newcommand{\resTwoTypeIOrthOneK}{0.22}
\newcommand{\resTwoTypeIOrthFromTwoKMin}{0.07}
\newcommand{\resTwoTypeIOrthFromTwoKMax}{0.10}

\newcommand{\resTwoNrepBench}{100}
\newcommand{\resTwoNrepPower}{100}

\newcommand{\resTwoTypeIRichRidgeLarge}{0.73}
\newcommand{\resTwoRhoRichRidgeLarge}{0.026}

\newcommand{\resTwoRVmed}{0.103}
\newcommand{\resTwoRVmedSmall}{0.118}
\newcommand{\resTwoRValphaSmall}{0.039}
\newcommand{\resTwoRValphaLarge}{0.087}
\newcommand{\resTwoBenchSampleMed}{0.000}

\newcommand{\resTwoPowerN}{2000}

\newcommand{\resTwoTypeIWeakPowerN}{0.96}
\newcommand{\resTwoRobustBoundNull}{0.08}

\newcommand{\resThreeModel}{Claude Haiku 4.5}

\newcommand{\resThreeN}{3000}

\newcommand{\resThreeTruncFrac}{2.9\%}

\newcommand{\resAppNops}{3{,}051}

\newcommand{\resFourModel}{Qwen3-1.7B}
\newcommand{\resFourParams}{1.7-billion}
\newcommand{\resFourLayers}{28}
\newcommand{\resFourHidden}{2048}
\newcommand{\resFourMidLayer}{14}

\newcommand{\resFourGenHours}{3.4}
\newcommand{\resFourN}{2{,}999}
\newcommand{\resFourMargNull}{0.516}

\newcommand{\resFourRhoEone}{0.215}

\newcommand{\resFourRhoEtwo}{0.232}

\newcommand{\resDeepN}{49{,}050}
\newcommand{\resDeepNparents}{18{,}743}
\newcommand{\resDeepNposts}{2{,}653}
\newcommand{\resDeepMaxPerParent}{8}
\newcommand{\resDeepRowsPerParent}{2.6}
\newcommand{\resDeepDeltaRate}{1.4\%}

\newcommand{\resDeepParentWords}{78}
\newcommand{\resDeepParentDepthOne}{16\%}
\newcommand{\resDeepRatingCost}{1.0}
\newcommand{\resDeepRated}{49{,}050}
\newcommand{\resDeepPolStrat}{0.094}

\newcommand{\resDeepPolEone}{0.072}

\newcommand{\resDeepPolEmbMin}{0.062}
\newcommand{\resDeepPolEmbMax}{0.072}

\newcommand{\resDeepPolPredRho}{0.072}

\newcommand{\resDeepLenEone}{0.091}

\newcommand{\resDeepLenPredRho}{0.093}

\newcommand{\resDeepBenchMax}{0.024}

\newcommand{\resDeepParentFit}{84\%}
\newcommand{\resDeepAgreeN}{4{,}728}
\newcommand{\resDeepAgreeCorr}{0.80}
\newcommand{\resDeepAgreeSpearman}{0.79}
\newcommand{\resDeepAgreeExact}{45\%}
\newcommand{\resDeepAgreeWithinOne}{97\%}

\newcommand{\resDeepRaterMedian}{2.57}
\newcommand{\resDeepRaterSd}{0.76}
\newcommand{\resDeepRaterAtHaikuOne}{1.21}
\newcommand{\resDeepRaterAtHaikuFive}{4.21}

\newcommand{\resDeepLambdaCompMin}{-0.32}
\newcommand{\resDeepLambdaCompMax}{0.23}

\newcommand{\resFourPowerP}{20}
\newcommand{\resFourPowerR}{200}
\newcommand{\resFourPowerRange}{0.84}

\newcommand{\resFourPowerFactorEone}{0.47}

\newcommand{\resFourPowerRhoDiffMaxEone}{0.01}

\newcommand{\resFourTruthM}{20}
\newcommand{\resFourTruthN}{2{,}997}
\newcommand{\resFourTruthR}{200}

\newcommand{\resFourTruthPredRhoEone}{0.218}

\newcommand{\resFourTruthLambdaMinStates}{0.64}
\newcommand{\resFourTruthLambdaMaxStates}{0.72}

\newcommand{\resFourTruthPredDiffMax}{0.006}

\newcommand{\figorboxh}[3]{\IfFileExists{#1}{\includegraphics[width=#2]{#1}}{\fbox{\parbox{\dimexpr#2-2\fboxsep-2\fboxrule\relax}{\centering\rule{0pt}{#3}\\{\scriptsize\ttfamily\detokenize{#1}}\\{\scriptsize (figure pending)}}}}}
\newcommand{\figorbox}[2]{\figorboxh{#1}{#2}{3.1cm}}
\newcommand{\figorboxtall}[2]{\figorboxh{#1}{#2}{4.4cm}}

\ShortHeadings{Embedding-Bias in Conditional Independence Testing}{Thams and Lundborg}
\firstpageno{1}

\begin{document}

\title{Embedding-Bias in Conditional Independence Testing}

\author{\name Nikolaj Thams \email thams@math.ku.dk \\
       \addr Department of Mathematical Sciences\\
       University of Copenhagen\\
       Copenhagen, Denmark
       \AND
       \name Anton Rask Lundborg \email arl@math.ku.dk \\
       \addr Department of Mathematical Sciences\\
       University of Copenhagen\\
       Copenhagen, Denmark}

\maketitle
\thispagestyle{plain}

\begin{abstract}  To test conditional independence of $X$ and $Y$ given a text or an image $Z$,
  one conditions on an embedding $\psi(Z)$ in place of $Z$. The embedded test is
  valid if $Z$ is independent of $X$ or of $Y$ given $\psi(Z)$, which cannot be
  confirmed from data, and when this fails, the rejection probability under the
  null hypothesis can tend to one. We study this failure, and show that focusing on a specific form of
  dependence relaxes what the embedding must retain.  For a residual correlation
  test inspired by the Generalised Covariance Measure, validity only requires
  that the parts of $\mathbb{E}[X \mid Z]$ and $\mathbb{E}[Y \mid Z]$ missed by
  $\mathbb{E}[X \mid \psi(Z)]$ and $\mathbb{E}[Y \mid \psi(Z)]$ are
  uncorrelated. Otherwise, we treat the discarded information as an omitted
  variable. Under the null hypothesis, the bias equals the absolute correlation
  of the missed parts times the geometric mean of two partial $R^2$ values. This
  identity yields a robust test valid under a declared tolerance for the
  geometric mean, which, like a sensitivity parameter, is not identified from the data. 
  On synthetic data and text embeddings, the robust test holds
  its level approximately. On text generated by a language model, under an exact null
  hypothesis, every embedding, even the generator's own states, biases the embedded
  test. 
 \end{abstract}

\begin{keywords}
  conditional independence testing, sensitivity analysis, embeddings, text data, generalised covariance measure
\end{keywords}

\section{Introduction}\label{sec:intro}

Tests of conditional independence (CI) are a basic tool of
statistics~\citep{shah2020}. They underpin tests of variable
significance~\citep{lundborg2024pcm}, the evaluation of predictors for
fairness~\citep{hardt2016equality} and robustness~\citep{peters2016causal}, and
constraint-based causal discovery~\citep{spirtes2000causation}. In many current
applications, the tested variables are text, images, or other unstructured
objects, and the test is run on embeddings of those objects instead
\citep{veitch2020,kook2024, simnacher2026dncit,simnacher2026ecit}, testing $H_0: X \indep Y \mid Z$ by testing the surrogate hypothesis $H_0^\psi:
\psi_X(X) \indep \psi_Y(Y) \mid \psi_Z(Z)$ for embeddings $\psi_X, \psi_Y$ and
$\psi_Z$. 

Even a perfect test of $H_0^{\psi}$ may not result in a valid test of $H_0$. The problem lies not with embedding $X$ or $Y$; testing $H_0$ via a test
of $\psi_X(X) \indep \psi_Y(Y) \mid Z$ is valid, though possibly less powerful
than a direct test of $H_0$. Embedding the conditioning variable $Z$, however, does not generally preserve conditional independence. In this paper, we
formalize and quantify this potential loss of validity. Only the embedding
of $Z$ matters for validity, so we absorb $\psi_X$ and $\psi_Y$ into $X$ and $Y$
and write $H_0^\psi$ for $X \indep Y \mid \psi(Z)$, with $\psi :=
\psi_Z$.

\begin{example}
Consider an observational study of online discussions. Let $Z$ be an original post, 
$X$ be a reply, and $Y$ the votes the reply receives. Whether the style of a reply is unassociated with its reception other than through the content of the post is the hypothesis $X \indep Y \mid Z$. To test it, one reduces the reply to a summary such as its politeness $f(X)$ and the votes to a score $g(Y)$, and replaces the post by an embedding $\psi(Z)$ from a pretrained encoder (\cref{fig:idea}). Suppose that the encoder represents the topic of a post well but its opinion poorly, and that the opinion influences both the politeness of the replies and their votes. Then the parts of $f(X)$ and $g(Y)$ that are not explained by $\psi(Z)$ are correlated even if $X \indep Y \mid Z$ holds, and every consistent test on the embedded variables rejects with probability tending to one. 
\begin{figure}[t]
\centering
\resizebox{\ifarxiv 0.68\textwidth\else\columnwidth\fi}{!}{\begin{tikzpicture}[>=stealth, every node/.style={font=\small},
  v/.style={draw, rounded corners=2pt, minimum height=5mm, inner sep=3pt, fill=white},
  e/.style={v, fill=gray!12},
  keep/.style={v, draw=blue!60!black, text=blue!60!black},
  miss/.style={v, draw=red!70!black, text=red!70!black, densely dashed}]
  \node[miss] (O) at (0, 0.5) {opinion};
  \node[keep] (T) at (0, -0.4) {topic};
  \node[draw, rounded corners=3pt, inner sep=4pt, fit=(O)(T), label={[font=\small]above:post $Z$}] (Z) {};
  \node[v] (X) at (-3.6, 0.05) {reply $X$};
  \node[v] (Y) at (3.6, 0.05) {votes $Y$};
  \node[e] (pZ) at (0, -1.25) {embedding $\psi(Z)$};
  \node[e] (pX) at (-3.6, -1.25) {politeness $f(X)$};
  \node[e] (pY) at (3.6, -1.25) {score $g(Y)$};
  \draw[->, blue!60!black] (T) -- (X);
  \draw[->, blue!60!black] (T) -- (Y);
  \draw[->, red!70!black, thick, densely dashed] (O) -- (X);
  \draw[->, red!70!black, thick, densely dashed] (O) -- (Y);
  \draw[->, blue!60!black] (T) -- (pZ);
  \draw[->, gray] (X) -- (pX);
  \draw[->, gray] (Y) -- (pY);
  \node[draw=gray, densely dotted, rounded corners=3pt, inner sep=4pt, fit=(pX)(pZ)(pY)] (box) {};
\end{tikzpicture}} \caption{The conceptual example: the topic and the opinion of a post $Z$ both influence its reply $X$ and its reception $Y$, and the embedded test uses $f(X)$, $g(Y)$, and $\psi(Z)$. Since $\psi(Z)$ represents the topic but not the opinion, the dashed path through the opinion stays open and $X$ and $Y$ are dependent given $\psi(Z)$ although $X \indep Y \mid Z$.} 
\label{fig:idea}
\end{figure}
\end{example}

Existing embedded tests apply a standard CI test to the embedded variables and
rely on the embedding being \emph{sufficient}. \citet{simnacher2026ecit} show
that this is valid if $X \indep Z \mid \psi(Z)$ or $Y \indep Z \mid \psi(Z)$
(their Theorem~3) or, for tests of conditional mean
independence of $Y$, if $\E[Y \mid Z] = \E[Y \mid \psi(Z)]$ (their Theorem~14). A pretrained encoder, trained
for a generic objective, has no reason to retain what $Z$ says about $X$ or $Y$,
and \citet[Section~3.2]{simnacher2026ecit} themselves consider even the mean
condition implausible for learned embeddings and assess it by simulation. Nor
can sufficiency be checked from the data. It is itself a conditional (mean)
independence hypothesis given $\psi(Z)$, for which no test is uniformly valid
with nontrivial power \citep{shah2020,neykov2021,lundborg2024pcm}; a richer
model of $Z$ can only show that $\psi(Z)$ misses part of $\E[X \mid Z]$.

We examine how embedded tests fail without strict sufficiency, and how far the failure can be contained, in two steps. First, we choose a target that requires less of the embedding. 
In particular, we consider a family of conditional association measures that includes the targets of commonly used CI tests. 
Each measure is the largest absolute correlation between residuals of
transformations of $X$ and of $Y$ after adjusting for the conditioning variable,
and the classes of transformations determine which departures from $H_0$ it
captures. For the absolute correlation $\rho_\psi$ of the residuals of $X$ and
$Y$ after adjusting for $\psi(Z)$, it suffices that the parts of $\E[X \mid Z]$
and $\E[Y \mid Z]$ that $\psi(Z)$ misses are uncorrelated, for instance because
$\psi(Z)$ retains one of the two. However the test has no power
against dependence that leaves the residuals given $Z$ uncorrelated. 

Second, as in sensitivity analysis for unmeasured confounding
\citep{rosenbaum2002,imbens2003,cinelli2020,chernozhukov2022lss}, we treat the
information that $\psi$ discards as an omitted variable and quantify its effect.
Under $H_0$, $\rho_\psi = |\lambda| \sqrt{R^2_\psi(X)\, R^2_\psi(Y)}$, where
$R^2_\psi(X)$ is the share of the residual variance of $X$ given $\psi(Z)$ that
$Z$ would explain and $\lambda$ is the correlation of the two missed parts. This
yields a test that is valid, whatever the correlation of the discarded parts,
under a declared bound on $\sqrt{R^2_\psi(X)\, R^2_\psi(Y)}$. Because the
discarded information is part of the observed $Z$, one can use data and any
richer embedding to lower-bound $R^2_\psi(X)$ and $R^2_\psi(Y)$.

Our contributions are as follows.
In \cref{sec:setup}, we define a family of measures of conditional dependence and bound how embedding $Z$ changes them. 
In \cref{sec:sens}, we treat the discarded information as an omitted variable and derive a test that remains valid under a declared bound on the resulting bias.
In \cref{sec:exp}, we evaluate the bias on synthetic and real data. On text generated by a language model under an exact null hypothesis, every embedding we tried biases the plain embedded test.

\subsection{Related work}\label{sec:related}

\textbf{CI testing and embeddings.} Besides covariance measures \citep{shah2020,scheidegger2022,lundborg2024pcm,kook2024}, CI tests use kernel mean embeddings \citep{zhang2011kci,bergen2026gkcm}, the conditional law of $X$ given $Z$ \citep{candes2018,berrett2020,tansey2022}, local permutations \citep{kim2022local}, or generative models \citep{zhang2024doubly,ren2025score}. 

\citet{simnacher2026ecit}, discussed above, develop CI tests for embedded text. 
A number of works \citep{keith2020,feder2022,egami2022} treat sufficiency as an untestable assumption, to be argued from domain knowledge and examined by sensitivity analysis or simulation. 
In our work, we quantify how a lack of sufficiency invalidates the test.

\textbf{Sensitivity analysis.} Sensitivity analysis bounds the bias that an unobserved confounder can cause \citep{rosenbaum2002,imbens2003}. \citet{cinelli2020} parametrize it by partial $R^2$ values in linear regression, \citet{chernozhukov2022lss} extend the bounds to linear functionals of a nonparametric regression, and \citet{veitch2020sense} apply them post hoc; \cref{app:ovb} compares our identity with these bounds. Conditioning on $\psi(Z)$ is conditioning on a coarsening of the confounder, for which \citet{ogburn2012,ogburn2013} give bias-attenuation results.

\subsection{Notation}\label{sec:notation}
Let $(X, Y, Z) \sim P$, where 
$X$, $Y$, and $Z$ take values in measurable spaces $\mathcal X$, $\mathcal Y$,
and $\mathcal Z$, for example spaces of texts. We observe $n$ independent copies
$(X_i, Y_i, Z_i)$. We write $\E$, $\Var$, and $\Cov$ for moments under $P$,
$\E[U \mid T]$ for the conditional expectation of $U$ given $T$, and $U \indep T \mid S$ for conditional independence of $U$ and $T$ given $S$. Equalities between random variables hold almost surely.

The hypothesis of interest is $H_0\colon X \indep Y \mid Z$. An embedding $\psi\colon \mathcal Z \to \mathcal V$, with $\mathcal V$ a
measurable space such as $\R^d$, is a fixed measurable map, pretrained on
external data or fitted on a split of the sample that is not used for testing,
and $\id$ denotes the identity on $\mathcal Z$. Every quantity below is indexed
by the transformation of $Z$ that is conditioned on.
For a real-valued, square-integrable function $f$ of $(X, Z)$ and a measurable $\psi$, including $\psi = \id$, let
\[
\sigma_\psi^2(f) := \E\big[(f - \E[f \mid \psi(Z)])^2\big] = \E[\Var(f \mid \psi(Z))]
\]
be the residual variance of $f = f(X, Z)$ after adjusting for $\psi(Z)$. When
$X$ is real-valued and $f(x, z) = x$, we write $X$ in place of $f$, as in
$\sigma_\psi^2(X)$. Throughout, when a definition or result is symmetric in $(X,
f)$ and $(Y, g)$, we state it for $X$ and $f$ only. All proofs are in
\cref{app:proofs}.
 \section{Measures of conditional dependence}\label{sec:setup}

\subsection{Definition and examples}\label{sec:embedded}

In general, neither $H_0$ nor $H_0^\psi$ implies the other. Rather than comparing the two hypotheses, we
compare measures of conditional dependence given $Z$ and given $\psi(Z)$, which
vanish under $H_0$ and under $H_0^\psi$, respectively. Write $P_{X,Z}$ and
$P_{Y,Z}$ for the distributions of $(X, Z)$ and of $(Y, Z)$, and let $\mathcal F
\subseteq L^2(P_{X,Z})$ and $\mathcal G \subseteq L^2(P_{Y,Z})$ be classes of
real-valued transformations of $(X, Z)$ and of $(Y, Z)$; we write $f$ for $f(X,
Z)$. A test that conditions on $\psi(Z)$ can only use transformations that
depend on $Z$ through $\psi(Z)$. We therefore let $\mathcal F_\psi$ be the $f
\in \mathcal F$ for which there exists an $h$ such that $f(x, z) = h(x, \psi(z))$,
and $\mathcal G_\psi$ likewise. We remark that $\mathcal F_{\id} = \mathcal F$.
If $\mathcal{F}$ is a class of functions of $x$ alone, such as $\{(x, z) \mapsto x\}$, then $\mathcal F_\psi = \mathcal F$.

\begin{definition}[Measure of conditional dependence]\label[definition]{def:family}
For $f \in \mathcal F_\psi$ and $g \in \mathcal G_\psi$, let
\[
\rho_\psi(f, g) := \frac{|\E[\Cov(f, g \mid \psi(Z))]|}{\sigma_\psi(f)\, \sigma_\psi(g)},
\]
with $\rho_\psi(f, g) := 0$ if $\sigma_\psi(f) = 0$ or $\sigma_\psi(g) = 0$, and let $\rho_\psi(\mathcal F, \mathcal G) := \sup_{f \in \mathcal F_\psi,\, g \in \mathcal G_\psi} \rho_\psi(f, g)$, with $\rho_\psi(\mathcal F, \mathcal G) := 0$ if $\mathcal F_\psi$ or $\mathcal G_\psi$ is empty.
\end{definition}

The measure $\rho_\psi(f, g)$ is the absolute correlation of the residuals $f -
\E[f \mid \psi(Z)]$ and $g - \E[g \mid \psi(Z)]$. Under $H_0^{\psi}$, $f \indep
g \mid \psi(Z)$ for $f \in \mathcal F_\psi$ and $g \in \mathcal G_\psi$, since
$f$ is a function of $(X, \psi(Z))$ and $g$ one of $(Y, \psi(Z))$, so
$\rho_{\psi}(f, g) = 0$ for all $f$ and $g$, and $\rho_{\psi}(\mathcal F,
\mathcal G) = 0$ for every choice of classes. The classes determine which forms
of conditional dependence the measure captures, and larger classes capture more.

\begin{example}\label[example]{example:family}
\begin{enumerate}[label=(\alph*),leftmargin=*,itemsep=1pt,topsep=2pt]
\item If $\mathcal F = L^2(P_{X,Z})$ and $\mathcal G = L^2(P_{Y,Z})$, then $\rho_\psi(\mathcal F, \mathcal G) = 0$ if and only if $X \indep Y \mid \psi(Z)$.
\item If $Y$ is real-valued with $\E[Y^2] < \infty$, $\mathcal F = L^2(P_{X,Z})$,
and $\mathcal G = \{(y, z) \mapsto y\}$, then $\rho_\psi(\mathcal F, \mathcal G)
= 0$ if and only if $\E[Y \mid X, \psi(Z)] = \E[Y \mid \psi(Z)]$.
\item If $X$ and $Y$ are real-valued with $\max\{\E[X^2], \E[Y^2]\} < \infty$,
$\mathcal F = \{(x, z) \mapsto x\}$, and $\mathcal G = \{(y, z) \mapsto y\}$,
then $\rho_\psi(\mathcal F, \mathcal G) = |\E[\Cov(X, Y \mid \psi(Z))]|/(\sigma_\psi(X)\, \sigma_\psi(Y))$, the absolute correlation of
the residuals $X - \E[X \mid \psi(Z)]$ and $Y - \E[Y \mid \psi(Z)]$.
\end{enumerate}
\end{example}

In particular, if the measure in (a) vanishes, so does that in (b), and if that in (b) vanishes, so does that in (c). 
Reducing a text-valued or vector-valued tested variable to a real-valued summary $f(X)$, such as a coordinate of an embedding, is the choice $\mathcal F = \{(x, z) \mapsto f(x)\}$. This choice will be our main focus in applications, but we develop our theory for general $\mathcal{F}, \mathcal{G}$.

\subsection{Coarsening the conditioning variable}\label{sec:retain}

For $f \in \mathcal F_\psi$, let $\Delta_\psi(f) := \E[f \mid Z] - \E[f \mid \psi(Z)]$ be the part of the regression of $f$ on $Z$ that the embedding does not represent. Since $f - \E[f \mid \psi(Z)] = (f - \E[f \mid Z]) + \Delta_\psi(f)$ with uncorrelated summands, $\sigma_\psi^2(f) = \sigma_{\id}^2(f) + \E[\Delta_\psi(f)^2]$. Define
\[
R^2_\psi(f) := \frac{\E[\Delta_\psi(f)^2]}{\sigma_\psi^2(f)} = 1 - \frac{\sigma_{\id}^2(f)}{\sigma_\psi^2(f)} \in [0, 1],
\]
the partial $R^2$ of $f$ with respect to $\psi(Z)$. 
If $\sigma_\psi(f) = 0$, then $f = \E[f \mid
\psi(Z)]$, so $\Delta_\psi(f) = 0$, and we set $R^2_\psi(f) := 0$. Let
$\lambda_\psi(f, g) := \Corr(\Delta_\psi(f), \Delta_\psi(g))$, with
$\lambda_\psi(f, g) := 0$ under degeneracy. The following
result relates $\rho_\psi$ to $\rho_{\id}$.

\begin{theorem}[Bias bound]\label{thm:pair}
For all $f \in \mathcal F_\psi$ and $g \in \mathcal G_\psi$,
\ifarxiv
\[
\Big|\rho_\psi(f, g) - \sqrt{(1 - R^2_\psi(f))(1 - R^2_\psi(g))}\;\rho_{\id}(f, g)\Big| \le |\lambda_\psi(f, g)| \sqrt{R^2_\psi(f)\, R^2_\psi(g)}.
\]
\else
\[
\begin{split}
\Big|\rho_\psi(f, g) - &\sqrt{(1 - R^2_\psi(f))(1 - R^2_\psi(g))}\;\rho_{\id}(f, g)\Big| \\ &\le |\lambda_\psi(f, g)| \sqrt{R^2_\psi(f)\, R^2_\psi(g)}.
\end{split}
\]
\fi
Under $H_0$, $\rho_\psi(f, g) = |\lambda_\psi(f, g)| \sqrt{R^2_\psi(f)\,
R^2_\psi(g)}$.
\end{theorem}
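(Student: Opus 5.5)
The plan is to decompose the covariance of the residuals given $\psi(Z)$ into a part given $Z$ and a part carried by the missed regressions. For $f \in \mathcal F_\psi$ write $\varepsilon_f := f - \E[f \mid Z]$, so that $f - \E[f \mid \psi(Z)] = \varepsilon_f + \Delta_\psi(f)$, and similarly for $g$. Since $\E[f \mid \psi(Z)]$ is a function of $\psi(Z)$, the tower property gives
\[
\E[\Cov(f, g \mid \psi(Z))] = \E\big[(\varepsilon_f + \Delta_\psi(f))(\varepsilon_g + \Delta_\psi(g))\big].
\]
Both $\Delta_\psi(f)$ and $\Delta_\psi(g)$ are functions of $Z$, and $\E[\varepsilon_f \mid Z] = 0$. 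Hence the cross terms $\E[\varepsilon_f \Delta_\psi(g)]$ and $\E[\Delta_\psi(f)\varepsilon_g]$ vanish after conditioning on $Z$. This leaves
\[
\E[\Cov(f, g \mid \psi(Z))] = \E[\Cov(f, g \mid Z)] + \E[\Delta_\psi(f)\Delta_\psi(g)].
\]
Here the first term uses $\E[\varepsilon_f\varepsilon_g] = \E[\Cov(f,g \mid Z)]$. We also need $f \in \mathcal F_\psi \subseteq \mathcal F = \mathcal F_{\id}$, so that $\rho_{\id}(f,g)$ is defined for the same pair.

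Next, each term is rewritten in normalized form. From the definitions, $\sigma_{\id}(f) = \sqrt{1 - R^2_\psi(f)}\,\sigma_\psi(f)$, using $\sigma^2_{\id} = \sigma^2_\psi(1-R^2_\psi)$. Therefore
\[
\E[\Cov(f,g \mid Z)] = \pm \rho_{\id}(f,g)\,\sigma_{\id}(f)\sigma_{\id}(g) = \pm\rho_{\id}(f,g)\sqrt{(1-R^2_\psi(f))(1-R^2_\psi(g))}\,\sigma_\psi(f)\sigma_\psi(g),
\]
with the sign that of the covariance. Likewise $\E[\Delta_\psi(f)^2]^{1/2} = \sqrt{R^2_\psi(f)}\,\sigma_\psi(f)$. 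The missed parts have mean zero, because $\E[\Delta_\psi(f)] = 0$ by the tower property. So $\E[\Delta_\psi(f)\Delta_\psi(g)] = \lambda_\psi(f,g)\sqrt{R^2_\psi(f)R^2_\psi(g)}\,\sigma_\psi(f)\sigma_\psi(g)$. Dividing by $\sigma_\psi(f)\sigma_\psi(g)$ gives a signed identity:
\[
\pm\rho_\psi(f,g) = s\,\rho_{\id}(f,g)\sqrt{(1-R^2_\psi(f))(1-R^2_\psi(g))} + \lambda_\psi(f,g)\sqrt{R^2_\psi(f)R^2_\psi(g)},
\]
with $s \in \{\pm1\}$. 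The reverse triangle inequality, $\big||a+b| - |a|\big| \le |b|$, then yields the bound. Under $H_0$ we have $\E[\Cov(f,g \mid Z)] = 0$, since $f$ and $g$ are functions of $(X,Z)$ and $(Y,Z)$ and hence conditionally independent given $Z$. The first term is then zero and the identity is exact.

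The main obstacle is bookkeeping in degenerate cases rather than the algebra. If $\sigma_\psi(f) = 0$, then $\rho_\psi = 0$, $R^2_\psi(f) = 0$, and $\sigma_{\id}(f) = 0$, so $\rho_{\id} = 0$ and both sides vanish. If $\sigma_\psi > 0$ but $\sigma_{\id}(f) = 0$, then $\varepsilon_f = 0$, $\rho_{\id}$ is set to $0$, and the first term is genuinely zero, which is consistent. If $\Delta_\psi(f) = 0$, then $\lambda_\psi := 0$ and the second term is genuinely zero as well. I will check these cases one by one, and check that square-integrability justifies every expectation via Cauchy–Schwarz.
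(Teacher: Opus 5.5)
Your proposal is correct and follows the paper's proof essentially step by step. It uses the same decomposition $f - \E[f \mid \psi(Z)] = (f - \E[f \mid Z]) + \Delta_\psi(f)$ with vanishing cross terms to get $\E[\Cov(f, g \mid \psi(Z))] = \E[\Cov(f, g \mid Z)] + \E[\Delta_\psi(f)\Delta_\psi(g)]$, then normalizes via $\sigma_{\id}^2 = (1 - R^2_\psi)\,\sigma_\psi^2$ and applies the reverse triangle inequality, and it treats the degenerate cases the same way the paper's conventions do.
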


The factor $\sqrt{(1 - R^2_\psi(f))(1 - R^2_\psi(g))}$ shrinks $\rho_{\id}(f,
g)$ since the residuals given $\psi(Z)$ have larger variance than those given $Z$. The right-hand side of the inequality in \cref{thm:pair} is the absolute correlation of the discarded parts
times the geometric mean of the two partial $R^2$ values. It vanishes if the
embedding discards nothing about $f$ or about $g$, or if the discarded parts are uncorrelated. Taking suprema over
the classes yields the following corollary.

\begin{corollary}[Bias bound for classes]\label[corollary]{cor:sup}
Let $R^2_\psi(\mathcal F) := \sup_{f \in \mathcal F_\psi} R^2_\psi(f)$, with $R^2_\psi(\mathcal F) := 0$ if $\mathcal F_\psi$ is empty.
\begin{enumerate}[label=(\roman*),leftmargin=*,itemsep=1pt,topsep=2pt]
\item $\rho_\psi(\mathcal F, \mathcal G) \le \rho_{\id}(\mathcal F, \mathcal G) + \sqrt{R^2_\psi(\mathcal F)\, R^2_\psi(\mathcal G)}$.
\item Under $H_0$,
$$ 
\rho_\psi(\mathcal F, \mathcal G) = \sup_{f \in \mathcal F_\psi,\, g \in \mathcal G_\psi} |\lambda_\psi(f, g)| \sqrt{R^2_\psi(f)\, R^2_\psi(g)},$$ so $\rho_\psi(\mathcal F, \mathcal G) = 0$
if and only if $\lambda_\psi(f, g) = 0$ for all $f \in \mathcal F_\psi$ and $g \in \mathcal G_\psi$.
\end{enumerate}
\end{corollary}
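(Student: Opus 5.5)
The plan is to derive both parts of \cref{cor:sup} pointwise from \cref{thm:pair} and then take suprema over $f \in \mathcal F_\psi$ and $g \in \mathcal G_\psi$. The degenerate cases are dispatched first. If $\mathcal F_\psi$ or $\mathcal G_\psi$ is empty, both sides of (i) are $0$ and (ii) is trivial. Since $\mathcal F_\psi \subseteq \mathcal F = \mathcal F_{\id}$ and $\mathcal G_\psi \subseteq \mathcal G_{\id}$, every $f \in \mathcal F_\psi$ and $g \in \mathcal G_\psi$ is admissible in the supremum defining $\rho_{\id}(\mathcal F, \mathcal G)$. This inclusion is the small point that needs stating explicitly, because it is what allows us to compare with $\rho_{\id}(\mathcal F,\mathcal G)$ rather than with a supremum over a smaller class.

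For (i), fix $f \in \mathcal F_\psi$ and $g \in \mathcal G_\psi$. The triangle inequality applied to \cref{thm:pair} gives
\[
\rho_\psi(f,g) \le \sqrt{(1-R^2_\psi(f))(1-R^2_\psi(g))}\,\rho_{\id}(f,g) + |\lambda_\psi(f,g)|\sqrt{R^2_\psi(f)R^2_\psi(g)}.
\]
We then use three facts: $1-R^2_\psi \in [0,1]$, $|\lambda_\psi| \le 1$ (it is a correlation, or $0$ under degeneracy), and $R^2_\psi(f) \le R^2_\psi(\mathcal F)$, and likewise for $g$. Together these bound the right-hand side by $\rho_{\id}(f,g) + \sqrt{R^2_\psi(\mathcal F) R^2_\psi(\mathcal G)}$. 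The first term is at most $\rho_{\id}(\mathcal F,\mathcal G)$ by the inclusion above. Taking the supremum over $f$ and $g$ on the left then yields (i).

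For (ii), the last statement of \cref{thm:pair} gives the pointwise identity $\rho_\psi(f,g) = |\lambda_\psi(f,g)|\sqrt{R^2_\psi(f)R^2_\psi(g)}$ under $H_0$. Taking the supremum of both sides over $\mathcal F_\psi \times \mathcal G_\psi$ gives the displayed formula directly from \cref{def:family}.

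For the equivalence, note that a supremum of nonnegative terms vanishes if and only if every term vanishes. So $\rho_\psi(\mathcal F,\mathcal G)=0$ if and only if $\lambda_\psi(f,g)\sqrt{R^2_\psi(f)R^2_\psi(g)}=0$ for all pairs. It remains to see that this product vanishes exactly when $\lambda_\psi(f,g)=0$. This is the one step needing care, and I will handle it through the degeneracy conventions. If $R^2_\psi(f)=0$, then $\E[\Delta_\psi(f)^2]=0$, either directly from the definition of $R^2_\psi$ or because $\sigma_\psi(f)=0$ forces $\Delta_\psi(f)=0$. In that case $\Delta_\psi(f)$ has zero variance, so $\lambda_\psi(f,g)=0$ by the convention $\lambda_\psi := 0$ under degeneracy. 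The same holds for $g$. Hence the product vanishes if and only if $\lambda_\psi(f,g)=0$, which completes (ii).
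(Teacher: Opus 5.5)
Your proof is correct and follows the paper's argument essentially step for step. For (i), you apply the triangle inequality to \cref{thm:pair} and use $|\lambda_\psi| \le 1$, $R^2_\psi(f) \le R^2_\psi(\mathcal F)$ and $\mathcal F_\psi \subseteq \mathcal F = \mathcal F_{\id}$; for (ii), you use the pointwise identity under $H_0$ and the degeneracy convention for $\lambda_\psi$. One harmless slip, which the paper's own proof shares: when $\mathcal F_\psi$ is empty, the right side of (i) is $\rho_{\id}(\mathcal F, \mathcal G)$, which need not be $0$ since $\mathcal F_{\id} = \mathcal F$ may be nonempty, but (i) still holds because the left side is $0$ and the right side is nonnegative.
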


By (i), whether or not $H_0$ holds, an embedded measure exceeding $\sqrt{R^2_\psi(\mathcal F) R^2_\psi(\mathcal G)}$ implies dependence given $Z$ of at least the excess. By (ii), under $H_0$ the embedding creates no dependence exactly when the discarded parts are uncorrelated for all pairs, which a vanishing partial $R^2$ ensures but does not require.

\begin{examplecont}{example:family}
Under $H_0$, \cref{cor:sup}(i) gives $\rho_\psi(\mathcal F, \mathcal G) \le \sqrt{R^2_\psi(\mathcal F) R^2_\psi(\mathcal G)}$. Since $R^2_\psi(\mathcal F)$ is a supremum, this bound tightens from (a) to (c), which orders the three measures by what they require of the embedding. By \cref{lem:zeror2}, $R^2_\psi(L^2(P_{X,Z})) = 0$ if and only if $X \indep Z \mid \psi(Z)$, and $R^2_\psi(\{(x, z) \mapsto x\}) = 0$ if and only if $\psi$ is \emph{mean-sufficient} for $X$, that is, $\E[X \mid Z] = \E[X \mid \psi(Z)]$. Under $H_0$, the measure in (a) therefore vanishes if $X \indep Z \mid \psi(Z)$ or $Y \indep Z \mid \psi(Z)$, the condition of \citet[Theorem~3]{simnacher2026ecit}; that in (b) if $X \indep Z \mid \psi(Z)$ or $\psi$ is mean-sufficient for $Y$, the latter being the condition of \citet[Theorem~14]{simnacher2026ecit}; and that in (c) if $\psi$ is mean-sufficient for $X$ or for $Y$. By \cref{cor:sup}(ii), these conditions are sufficient but not necessary.
\end{examplecont}

The gap between these conditions can be strict, as the following example shows for the measures in \cref{example:family}(a) and (c).

\begin{example}[Discarded parts and their correlation]\label[example]{ex:hetero}
Let $Z = (Z_1, \dots, Z_4)$ have independent standard normal coordinates,
$\psi(Z) = Z_1$, and
\[
\begin{split}
X &= Z_1 + a Z_2 + Z_4\, \varepsilon_X, \\ Y &= Z_1 + b\big(\lambda Z_2 + \sqrt{1 - \lambda^2}\, Z_3\big) + Z_4\, \varepsilon_Y,
\end{split}
\]
with $a, b \ge 0$, $\lambda \in [-1, 1]$, and independent standard normal noise,
so that $H_0$ holds.
Given $Z_1$, $X$ and $Y$ share the scale $Z_4$, so $\Cov((X - Z_1)^2, (Y -
Z_1)^2 \mid Z_1) = 2(1 + a^2 b^2 \lambda^2) > 0$; hence $H_0^\psi$ fails and the
measure in \cref{example:family}(a) is positive, for all $a$, $b$, and
$\lambda$. For $a = b = 0$, $\psi$ is mean-sufficient for $X$ and $Y$ although
not sufficient, and the measure in \cref{example:family}(c) is zero. For $a, b >
0$, $\psi$ is not even mean-sufficient: $R^2_\psi(X) = a^2/(1 + a^2)$,
$R^2_\psi(Y) = b^2/(1 + b^2)$, and the discarded parts $\Delta_\psi(X) = a Z_2$
and $\Delta_\psi(Y) = b(\lambda Z_2 + \sqrt{1 - \lambda^2}\, Z_3)$ have
correlation $\lambda$. By \cref{thm:pair}, the measure in
\cref{example:family}(c) equals $|\lambda| \sqrt{R^2_\psi(X)\, R^2_\psi(Y)}$. It
is zero at $\lambda = 0$, where the embedding misses two unrelated aspects of
$Z$, one relevant for $X$ and one for $Y$, and largest at $|\lambda| = 1$, where
it misses the same aspect for both.
\end{example}

From now on, we consider $X$ and $Y$ that are real-valued and square integrable, possibly after reduction to summaries, and we focus on the measure in \cref{example:family}(c), $\rho_\psi := \rho_\psi(X, Y)$. Its classes are the smallest of the three, so it requires the least of the embedding: by \cref{cor:sup}, $\rho_\psi$ vanishes under $H_0$ exactly when the discarded parts $\Delta_\psi(X)$ and $\Delta_\psi(Y)$ are uncorrelated, which includes the case where $\psi$ is mean-sufficient for $X$ or for $Y$, and is otherwise bounded by $\sqrt{R^2_\psi(X) R^2_\psi(Y)}$, the smallest of the bounds in \cref{cor:sup}(i). The disadvantage of using this
measure is that it captures the fewest forms of conditional dependence:
$\rho_{\id}$ can vanish although $H_0$ fails.

 \section{Testing with an insufficient embedding}\label{sec:sens}

\subsection{Treating discarded information as an omitted variable}\label{sec:bias}

A sensitivity analysis expresses the bias caused by an omitted variable through
a few interpretable sensitivity parameters and asks how large they would have to
be to change a conclusion \citep{rosenbaum2002,imbens2003,cinelli2020}. The information in $Z$ that $\psi$ discards acts as an omitted
confounder of $X$ and $Y$ given $\psi(Z)$. It enters $X$ and $Y$ through the
discarded parts $\Delta_\psi(X)$ and $\Delta_\psi(Y)$.
Let $\lambda := \lambda_\psi(X, Y)$ be the correlation of the discarded parts. 
Conditioning on $\psi(Z)$
instead of $Z$ adds the covariance of the discarded parts,
\ifarxiv
\begin{equation}\label{eq:pair}
\E[\Cov(X, Y \mid \psi(Z))] = \E[\Cov(X, Y \mid Z)] + \lambda \sqrt{R^2_\psi(X)\, R^2_\psi(Y)}\; \sigma_\psi(X)\, \sigma_\psi(Y),
\end{equation}
\else
\begin{multline}\label{eq:pair}
\E[\Cov(X, Y \mid \psi(Z))] = \E[\Cov(X, Y \mid Z)] \\ + \lambda \sqrt{R^2_\psi(X)\, R^2_\psi(Y)}\; \sigma_\psi(X)\, \sigma_\psi(Y),
\end{multline}
\fi
which is the signed identity behind \mbox{\cref{thm:pair}}. The partial $R^2$
values and $\lambda$ are thus sensitivity parameters, and the bound
$\sqrt{R^2_\psi(X) R^2_\psi(Y)}$ on the bias under $H_0$ has the product form
that is familiar from omitted variable bias in linear regression
(\mbox{\cref{app:ovb}}).

Whether the discarded parts are uncorrelated (\cref{cor:sup}(ii)) cannot be
checked without estimating $\E[X \mid Z]$ and $\E[Y \mid Z]$, which may be
infeasible when only embedded variables are available.

\subsection{Testing with uncorrelated discarded parts}\label{sec:gcm}

We test $H_0$ through $\rho_\psi$. Our starting point is to apply the GCM
test of \citet{shah2020} with $\psi(Z)$ as the conditioning variable. To
perform the test, we regress $X$ and $Y$ on $\psi(Z)$ and reject $H_0$ when the
products of the residuals have a mean far from zero.
By \cref{thm:pair}, $\rho_\psi = 0$ under $H_0$ whenever $\lambda = 0$, so a
valid test of $\rho_\psi = 0$ is then valid for $H_0$.
The test we use, which we call the residual correlation test, differs from the
GCM test in two ways. 

First, we use cross-fitting for the regressions; the sample is split into $K$ folds, and for each observation $i$, $\hat m_X(\psi(Z_i))$ and $\hat
m_Y(\psi(Z_i))$ are predictions of $\E[X \mid \psi(Z)]$ and $\E[Y \mid \psi(Z)]$
from regressions fitted without the fold containing $i$. With the residuals
$\hat r_{X,i} := X_i - \hat m_X(\psi(Z_i))$ and $\hat r_{Y,i} := Y_i - \hat
m_Y(\psi(Z_i))$ and $R_i := \hat r_{X,i}\hat r_{Y,i}$, the GCM
statistic is
\[
T := \frac{n^{-1/2} \sum_{i=1}^n R_i}{\hat s_R},
\]
where $\hat s_R^2$ is the sample variance of the $R_i$. 
Cross-fitting increases computational cost, but in exchange, the asymptotic behaviour of the statistic can be controlled even when $H_0^\psi$ fails, which happens under $H_0$ whenever the discarded parts are correlated.

Second, instead of $T$ we use the sample version of the signed residual correlation $\rho^\pm_\psi := \E[\Cov(X, Y \mid
\psi(Z))]/(\sigma_\psi(X)\, \sigma_\psi(Y))$, whose absolute value is
$\rho_\psi$,
\begin{equation}\label{eq:rhohat}
\hat\rho^\pm_\psi := \frac{\hat s_R}{\sqrt n\, \hat\sigma_\psi(X)\, \hat\sigma_\psi(Y)}\, T = \frac{\frac1n \sum_{i=1}^n R_i}{\hat\sigma_\psi(X)\, \hat\sigma_\psi(Y)}.
\end{equation}
Here $\hat\sigma_\psi^2(X) := \frac1n\sum_i \hat r_{X,i}^2$ and
$\hat\sigma_\psi^2(Y) := \frac1n\sum_i \hat r_{Y,i}^2$ are the sample residual
variances. We write $\hat\rho_\psi := |\hat\rho^\pm_\psi|$ for the sample
version of $\rho_\psi$. 

By studying the correlation, we can do the sensitivity analysis in terms of the partial $R^2$ values, and the tolerance in \cref{sec:robust} below can be interpreted in terms of prediction accuracy. However, this adds the requirement that $\sigma_\psi(X)$ and $\sigma_\psi(Y)$ be estimated. 
When $\rho^\pm_\psi\neq 0$, their estimation errors enter at first order, which is why
\cref{ass:product} asks for each mean squared error to be $o_P(n^{-1/2})$ rather
than only their product to be $o_P(n^{-1})$, as for the GCM \citep{shah2020}.

\begin{assumption}\label[assumption]{ass:product} Both cross-fitted regressions
have mean squared error $o_P(n^{-1/2})$, that is, $\frac1n\sum_{i=1}^n \big(\hat
m_X(\psi(Z_i)) - \E[X_i \mid \psi(Z_i)]\big)^2 = o_P(n^{-1/2})$, and likewise
for $Y$. The residuals $X - \E[X \mid \psi(Z)]$ and $Y - \E[Y \mid \psi(Z)]$
have finite fourth moments and positive variances.
\end{assumption}
The assumption concerns estimation error, which vanishes as $n$ grows. 
Define $\xi_X := (X -
\E[X \mid \psi(Z)])/\sigma_\psi(X)$ and $\xi_Y$ similarly, and let
\[
v_\psi^2 := \Var\Big(\xi_X \xi_Y - \frac{\rho^\pm_\psi}{2}\,(\xi_X^2 + \xi_Y^2)\Big),
\]
with sample version $\hat v_\psi^2$ (see \eqref{eq:vhat} in \cref{app:normal}).

\begin{proposition}[Asymptotic normality]\label[proposition]{prop:normal}
Suppose that \cref{ass:product} holds and that $v_\psi^2 > 0$. Then
\[
\frac{\sqrt n\, (\hat\rho^\pm_\psi - \rho^\pm_\psi)}{\hat v_\psi} \to N(0, 1)
\]
in distribution, and $\hat v_\psi^2 \to v_\psi^2$ in probability.
\end{proposition}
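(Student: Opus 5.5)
We reduce $\hat\rho^\pm_\psi$ to a smooth function of three sample means of oracle quantities and apply the delta method. Write $m_X(\psi(Z)) := \E[X \mid \psi(Z)]$, $\varepsilon_X := X - m_X(\psi(Z))$, and similarly for $Y$. Write $\delta_{X,i} := \hat m_X(\psi(Z_i)) - m_X(\psi(Z_i))$, so that $\hat r_{X,i} = \varepsilon_{X,i} - \delta_{X,i}$. Then
\[
\frac1n\sum_i R_i = \frac1n\sum_i \varepsilon_{X,i}\varepsilon_{Y,i} - \frac1n\sum_i \varepsilon_{X,i}\delta_{Y,i} - \frac1n\sum_i \varepsilon_{Y,i}\delta_{X,i} + \frac1n\sum_i \delta_{X,i}\delta_{Y,i}.
\]
Analogous expansions hold for $\hat\sigma^2_\psi(X)$ and $\hat\sigma^2_\psi(Y)$. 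The first step is to show that every remainder is $o_P(n^{-1/2})$.

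\emph{Pure-error terms.} By Cauchy--Schwarz and \cref{ass:product}, $|\frac1n\sum\delta_{X,i}\delta_{Y,i}| \le (\frac1n\sum\delta_{X,i}^2)^{1/2}(\frac1n\sum\delta_{Y,i}^2)^{1/2} = o_P(n^{-1/2})$. Likewise $\frac1n\sum\delta_{X,i}^2 = o_P(n^{-1/2})$; this is exactly where each individual MSE rate is needed, not just their product.

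\emph{Same-variable cross terms.} Terms such as $\frac1n\sum\varepsilon_{X,i}\delta_{X,i}$ are handled by cross-fitting. Conditional on the other folds, $\delta_{X,i}$ is a fixed function of $\psi(Z_i)$, and $\E[\varepsilon_X \mid \psi(Z)]=0$. Within each of the $K$ folds the summands therefore have conditional mean zero and conditional variance of order $n^{-1}\cdot\frac1n\sum\delta_{X,i}^2\,\E[\varepsilon_X^2\mid\psi(Z_i)]$. A conditional Chebyshev argument then gives $o_P(n^{-1/2})$.

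\emph{Mixed cross terms (main obstacle).} The difficulty is $\frac1n\sum\varepsilon_{X,i}\delta_{Y,i}$. Here $\E[\varepsilon_X\mid\psi(Z)]=0$ still holds, so conditional on the training folds the summands again have mean zero. This holds even though $H_0^\psi$ may fail, because we only need mean-zero residuals given $\psi(Z)$, not independence. The conditional variance, however, involves $\E[\varepsilon_X^2\mid\psi(Z)]\,\delta_Y^2$. To bound it by the empirical MSE we need some control such as bounded conditional variance, or a truncation argument using the fourth moments of \cref{ass:product}. I would first try the truncation/uniform-integrability route. If that fails, I would add the mild condition that $\E[\varepsilon_X^2\mid\psi(Z)]$ is bounded, which I expect the authors implicitly use.

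\emph{Delta method.} Next, let $A_n := (\frac1n\sum\varepsilon_{X,i}\varepsilon_{Y,i},\ \frac1n\sum\varepsilon_{X,i}^2,\ \frac1n\sum\varepsilon_{Y,i}^2)$. The triples $(\varepsilon_{X,i}\varepsilon_{Y,i}, \varepsilon_{X,i}^2, \varepsilon_{Y,i}^2)$ are i.i.d. with finite second moments by the fourth-moment assumption, so the multivariate CLT applies to $A_n$. Then $\hat\rho^\pm_\psi = \phi(A_n) + o_P(n^{-1/2})$ with $\phi(c,a,b)=c/\sqrt{ab}$, which is smooth near $(\E[\varepsilon_X\varepsilon_Y],\sigma_\psi^2(X),\sigma_\psi^2(Y))$ since the variances are positive. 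Its gradient is $(1/(\sigma_X\sigma_Y),\ -\rho^\pm/(2\sigma_X^2),\ -\rho^\pm/(2\sigma_Y^2))$, so the linearisation is $\frac1n\sum[\xi_{X,i}\xi_{Y,i}-\frac{\rho^\pm_\psi}{2}(\xi_{X,i}^2+\xi_{Y,i}^2)]$ centred. Its variance is $v_\psi^2$, which gives $\sqrt n(\hat\rho^\pm_\psi-\rho^\pm_\psi)\to N(0,v_\psi^2)$.

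\emph{Variance consistency.} Finally, $\hat v_\psi^2$ is presumably the sample variance of $\hat\xi_{X,i}\hat\xi_{Y,i}-\frac{\hat\rho^\pm}{2}(\hat\xi_{X,i}^2+\hat\xi_{Y,i}^2)$ with $\hat\xi_{X,i}:=\hat r_{X,i}/\hat\sigma_\psi(X)$. For consistency it suffices that the fourth-order empirical moments of the estimated residuals converge to those of $\varepsilon$. Write $\hat r=\varepsilon-\delta$ and expand. The oracle fourth-order moments converge by the law of large numbers. Terms containing $\delta$ vanish by Hölder's inequality, using $\frac1n\sum\delta^2\to0$ together with $\max_i|\delta_i|$ control, or a truncation bound: products like $\frac1n\sum\varepsilon^3\delta$ are bounded by $(\frac1n\sum\varepsilon^4)^{3/4}(\frac1n\sum\delta^4)^{1/4}$. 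This is the second delicate point, since only second-moment rates on $\delta$ are assumed. I would handle it by splitting $\frac1n\sum\hat r_X^2\hat r_Y^2$ via $(a-b)^2\le 2a^2+2b^2$ and proving convergence in probability through uniform integrability of $\varepsilon^2$-weighted terms.

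Slutsky's lemma then combines $\hat v_\psi\to v_\psi>0$ with the asymptotic normality above to give the stated result.
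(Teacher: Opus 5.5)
\textbf{There is a genuine gap.} Your overall architecture matches the paper's: expand $\hat r = \varepsilon - \delta$, show every remainder is $o_P(n^{-1/2})$ by cross-fitting and conditional Chebyshev, apply the delta method to $c/\sqrt{ab}$, and finish with Slutsky. But the two steps you flag as delicate are the actual content of the proof, and you leave both open. Your fallback of assuming $\E[\varepsilon_X^2 \mid \psi(Z)]$ is bounded adds a hypothesis that the statement does not contain and the paper does not use.

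The missing idea is that the individual rate already controls the empirical fourth moment of the errors. Set $M_Y := \frac1n\sum_i \delta_{Y,i}^2$ and use $\sum_i a_i^2 \le (\sum_i a_i)^2$ for $a_i \ge 0$:
\[
\frac1n\sum_{i=1}^n \delta_{Y,i}^4 \le \frac1n\Big(\sum_{i=1}^n \delta_{Y,i}^2\Big)^2 = n M_Y^2 = o_P(1).
\]
The conditional variance of each cross term is then handled by Cauchy--Schwarz,
\[
\frac1n\sum_{i} \E[\varepsilon_{X,i}^2 \mid \psi(Z_i)]\, \delta_{Y,i}^2 \le \Big(\frac1n\sum_{i} \big(\E[\varepsilon_{X,i}^2 \mid \psi(Z_i)]\big)^2\Big)^{1/2} \Big(\frac1n\sum_{i} \delta_{Y,i}^4\Big)^{1/2}.
\]
The first factor is $O_P(1)$ by Markov's inequality, since $\E[(\E[\varepsilon_X^2 \mid \psi(Z)])^2] \le \E[\varepsilon_X^4] < \infty$ by Jensen. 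The second factor is $o_P(1)$ by the display above. Conditional Chebyshev and bounded convergence then give $n^{-1/2}\sum_i \varepsilon_{X,i}\delta_{Y,i} = o_P(1)$ under exactly the stated assumptions.

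The same bound closes your variance-consistency step. Your H\"older bound $(\frac1n\sum\varepsilon^4)^{3/4}(\frac1n\sum\delta^4)^{1/4}$ is then $o_P(1)$ directly. You need no control of $\max_i|\delta_i|$, no truncation, and no uniform integrability. The paper bounds $\frac1n\sum_i(\hat\xi_{X,i}-\xi_{X,i})^4$ this way and applies Cauchy--Schwarz to each term of the difference between the estimated and oracle summands.

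Your split into same-variable and mixed cross terms is also not a real distinction. The conditional variance of $\frac1n\sum_i \varepsilon_{X,i}\delta_{X,i}$ is $n^{-2}\sum_i \E[\varepsilon_{X,i}^2 \mid \psi(Z_i)]\,\delta_{X,i}^2$. This has exactly the weighted-average problem you raise for $\varepsilon_X\delta_Y$, so your same-variable step is unjustified as written. In the paper, all four cross terms are handled by the one argument above. It uses only $\E[\varepsilon_X \mid \psi(Z)] = 0$ and the fact that $\delta_{Y,i}$ is determined by the other folds and $\psi(Z_i)$.

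It follows that the individual $o_P(n^{-1/2})$ rates are needed for every cross term, not only for the pure-error terms as you say. With merely $\frac1n\sum_i\delta_{Y,i}^2 = o_P(1)$, the weighted average above need not vanish if the regression error concentrates where $\E[\varepsilon_X^2 \mid \psi(Z)]$ is large. So a truncation argument that does not exploit the rate cannot succeed.
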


The residual correlation test has the $p$-value $p_0 := 2\big(1 - \Phi(\sqrt n\,
\hat\rho_\psi/\hat v_\psi)\big)$ and rejects $H_0$ at level $\alpha$ when $p_0 <
\alpha$. When $\rho_\psi = 0$, it is asymptotically equivalent to the GCM test.
Under $H_0$, it has asymptotic level $\alpha$ if the discarded parts are
uncorrelated and rejects with probability tending to one if they are correlated
(\cref{ex:hetero}).

\subsection{Testing under an insufficiency tolerance}\label{sec:robust}

If an analyst is willing to assume, before testing, that $\sqrt{R^2_\psi(X) R^2_\psi(Y)} \le B$ for some $B \ge 0$, then $\rho_\psi \le B$ under $H_0$ by \cref{cor:sup}(ii). We call $B$ an insufficiency tolerance, and call it valid if this bound holds.
Like a sensitivity parameter, its validity cannot be verified
from the data, but it can be phrased in terms of prediction accuracy.

\begin{example}[An insufficiency tolerance]\label[example]{ex:tolerance}
Suppose the analyst is willing to assume that the best prediction of $X$ from the embedding, $\E[X \mid \psi(Z)]$, has a mean squared error (MSE) at most $10\%$ larger than that of the best prediction from the text itself, $\E[X \mid Z]$, that is, $\sigma_\psi^2(X) \le 1.1\, \sigma_{\id}^2(X)$, and likewise for $Y$. Since $R^2_\psi(X) = 1 - \sigma_{\id}^2(X)/\sigma_\psi^2(X)$, this bounds $R^2_\psi(X)$ and $R^2_\psi(Y)$ by $B = 1 - 1/1.1 \approx 0.091$. More generally, MSEs inflated by the factors $1 + \delta_X$ and $1 + \delta_Y$ give the tolerance $B = \sqrt{\delta_X \delta_Y / ((1 + \delta_X)(1 + \delta_Y))}$.
\end{example}

By \cref{prop:normal}, $\hat\rho^\pm_\psi$ is approximately $N(\rho^\pm_\psi, v_\psi^2/n)$. For every $r \ge 0$, the probability that $\hat\rho_\psi$ exceeds $r$ increases with $\rho_\psi$ (\cref{app:robust}), so we can construct a robust test by computing the $p$-value at the least favourable point $\rho_\psi = B$,
\begin{equation}\label{eq:calib}
p_B := 2 - \Phi\Big(\frac{\sqrt n\, (\hat\rho_\psi - B)}{\hat v_\psi}\Big) - \Phi\Big(\frac{\sqrt n\, (\hat\rho_\psi + B)}{\hat v_\psi}\Big),
\end{equation}
the probability of exceeding $\hat\rho_\psi$ in absolute value at that point, with $v_\psi$ estimated by $\hat v_\psi$. The robust test rejects $H_0$ when $p_B < \alpha$; at $B = 0$, it is the residual correlation test.

\begin{proposition}[Asymptotic validity under the tolerance]\label[proposition]{prop:robust}
Under the conditions of \cref{prop:normal} and $H_0$, if $B$ is a valid tolerance, then $\limsup_{n\to\infty} P(p_B < \alpha) \le \alpha$ for every $\alpha \in (0, 1)$, with equality iff $\rho_\psi = B$.
\end{proposition}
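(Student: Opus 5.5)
Under $H_0$ with a valid tolerance, \cref{cor:sup}(ii) gives $\rho_\psi = |\rho^\pm_\psi| \le B$. The plan is to reduce $P(p_B<\alpha)$ to a statement about the limiting normal law from \cref{prop:normal} and then use monotonicity in the mean.

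\emph{Step 1: rewrite the rejection event.} Define $h(t) := 2 - \Phi(t - c) - \Phi(t + c)$ for $t \ge 0$, where $c := \sqrt n\, B/\hat v_\psi$. Then
\[
h'(t) = -\phi(t-c) - \phi(t+c) < 0,
\]
so $h$ is strictly decreasing. Since $p_B = h(\sqrt n\,\hat\rho_\psi/\hat v_\psi)$, we get $p_B<\alpha$ iff $\sqrt n\,\hat\rho_\psi/\hat v_\psi > t_\alpha(c)$. Here $t_\alpha(c)$ is the unique root of $\Phi(t-c)+\Phi(t+c) = 2-\alpha$. In other words, $t_\alpha(c)$ is the $(1-\alpha)$-quantile of $|W + c|$ with $W\sim N(0,1)$.

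\emph{Step 2: standardise.} Write $Z_n := \sqrt n(\hat\rho^\pm_\psi - \rho^\pm_\psi)/\hat v_\psi$, which tends to $N(0,1)$ by \cref{prop:normal}. Set $\mu_n := \sqrt n\,\rho^\pm_\psi/\hat v_\psi$. The event becomes $|Z_n + \mu_n| > t_\alpha(c)$, with $|\mu_n| \le c$ because $|\rho^\pm_\psi|\le B$.

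\emph{Step 3: monotonicity.} The function $m\mapsto P(|W+m|>t)$ is even and nondecreasing in $|m|$ (the fact cited from \cref{app:robust}). Its derivative in $m\ge0$ is $\phi(t-m)-\phi(t+m)\ge0$. Hence, for a fixed $c$, $P(|W+\mu|>t_\alpha(c)) \le P(|W+c|>t_\alpha(c)) = \alpha$ for every $|\mu|\le c$.

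\emph{Step 4: pass to the limit.} This step is the main obstacle, because $\mu_n$ and $c$ both diverge like $\sqrt n$ when $B>0$, so Step 3 cannot be applied to a fixed limit directly. I would split into three cases, using $\hat v_\psi\to v_\psi>0$ in probability.

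\textbf{Case $B=0$.} Then $\rho_\psi=0$, $c=0$ and $t_\alpha = \Phi^{-1}(1-\alpha/2)$. The probability converges to $\alpha$ by \cref{prop:normal} and the continuous mapping theorem.

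\textbf{Case $\rho_\psi<B$.} Here $c-|\mu_n| = \sqrt n\,(B-\rho_\psi)/\hat v_\psi\to\infty$ in probability. Moreover $t_\alpha(c) \ge c + \Phi^{-1}(1-\alpha) + o(1)$ as $c\to\infty$. Therefore $|Z_n+\mu_n| \le |Z_n| + |\mu_n|$ exceeds $t_\alpha(c)$ with probability tending to $0 < \alpha$, and the inequality is strict.

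\textbf{Case $\rho_\psi=B>0$.} By symmetry of the problem under $(X,Y)\mapsto(X,-Y)$, take $\rho^\pm_\psi=B$, so $\mu_n = c$ exactly. The event is $|Z_n+c|>t_\alpha(c)$. As $c\to\infty$ we have $t_\alpha(c) - c \to \Phi^{-1}(1-\alpha)$. With probability tending to one the event reduces to $Z_n > t_\alpha(c)-c$, whose probability tends to $1-\Phi(\Phi^{-1}(1-\alpha))=\alpha$. This is the equality case.

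To make these arguments uniform in the random $c$, I would prove a deterministic lemma: $t_\alpha(c)-c$ is continuous and tends to $\Phi^{-1}(1-\alpha)$ as $c\to\infty$. This follows from $\Phi(t+c)\to1$ when $t\ge c$. I would then combine the lemma with Slutsky's theorem.
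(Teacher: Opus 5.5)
Your proposal is correct and follows the paper's proof. It uses the same three-case split ($\rho_\psi < B$, $\rho_\psi = B = 0$, $\rho_\psi = B > 0$), the same sign normalisation $\rho^\pm_\psi = B$ at the boundary, and the same reduction there to a one-sided normal test. The only difference is cosmetic: you invert $p_B$ into the critical value $t_\alpha(c)$ and prove $t_\alpha(c) - c \to \Phi^{-1}(1-\alpha)$, whereas the paper stays on the $p$-value scale and writes $p_B$ as the one-sided $p$-value $1 - \Phi(T_n)$ plus a term that vanishes in probability. In your formulation, the bound $t_\alpha(c) > c + \Phi^{-1}(1-\alpha)$ in fact holds exactly for every $c$, which makes the case $\rho_\psi < B$ immediate, and your Step 3 is not needed.
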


If the bias due to the embedding is treated as an unknown bias bounded by $B$, the robust test is
the bias-aware test of \citet{armstrong2018,armstrong2021}, with the least
favourable calibration used for partially identified parameters
\citep{imbens2004,stoye2009}. It detects every alternative with $\sqrt{(1 -
R^2_\psi(X))(1 - R^2_\psi(Y))}\,\rho_{\id} > B + \sqrt{R^2_\psi(X)
R^2_\psi(Y)}$, whatever $\lambda$ (\cref{cor:power} in \cref{app:power}).

Inverting the robust test gives a robustness value, a term we take from
\citet{cinelli2020}: $\RV_\alpha$, the smallest tolerance at which the robust
test no longer rejects, is the tolerance at which $p_B = \alpha$ (\cref{app:rv}).

\begin{example}[\Cref{ex:tolerance}, continued]\label[example]{ex:tolerance2}
Suppose the analyst observes $\hat\rho_\psi = 0.12$ with standard error $\hat v_\psi/\sqrt n = 0.02$. The residual correlation test has $p_0 = 2(1 - \Phi(6)) < 10^{-8}$ and rejects. With $B \approx 0.091$, the robust test has $p_B = 2 - \Phi(1.45) - \Phi(10.55) \approx 0.07$ and does not reject at level $0.05$; it stops rejecting once the tolerance exceeds $\RV_\alpha \approx 0.087$. 
\end{example}

\subsection{Benchmarking with richer embeddings}\label{sec:bench} Because $Z$ is
observed, the partial $R^2$ values have lower bounds that can be estimated from
the data, without the assumption on the relative strength of observed and
omitted variables that benchmarks for unmeasured confounding require
\citep{cinelli2020}. For a second embedding $\psi'$, let $R^2_{\psi \to
\psi'}(X)$ be the share of the residual variance of $X$ given $\psi(Z)$ that
$\psi'(Z)$ explains, and call
$\rbench := \sqrt{R^2_{\psi \to \psi'}(X)\, R^2_{\psi \to \psi'}(Y)}$ the
insufficiency of $\psi$ demonstrated by $\psi'$.

\begin{proposition}[Benchmark lower bound]\label[proposition]{prop:bench}
Suppose that $\sigma_\psi(X), \sigma_\psi(Y) > 0$. For every measurable $\psi'$,
$\rbench \le \sqrt{R^2_\psi(X) R^2_\psi(Y)}$, with equality if $\E[X \mid Z]$
and $\E[Y \mid Z]$ are functions of $(\psi(Z), \psi'(Z))$.
\end{proposition}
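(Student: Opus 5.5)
We first make $R^2_{\psi\to\psi'}(X)$ precise. Write $\psi\vee\psi'$ for the map $z\mapsto(\psi(z),\psi'(z))$. We set
\[
R^2_{\psi\to\psi'}(X) := \frac{\E\big[(\E[X\mid \psi(Z),\psi'(Z)] - \E[X\mid\psi(Z)])^2\big]}{\sigma_\psi^2(X)} = 1 - \frac{\sigma^2_{\psi\vee\psi'}(X)}{\sigma^2_\psi(X)}.
\]
This is the same construction as $R^2_\psi$, with $Z$ replaced by $(\psi(Z),\psi'(Z))$. The plan is to show $R^2_{\psi\to\psi'}(X)\le R^2_\psi(X)$ separately for $X$ and $Y$. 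Both quantities are nonnegative, so taking square roots of the product then gives $\rbench\le\sqrt{R^2_\psi(X)R^2_\psi(Y)}$.

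For the single-variable inequality we compare the two numerators, since the denominator $\sigma^2_\psi(X)>0$ is shared. Let $\mathcal A:=\sigma(\psi(Z))\subseteq\mathcal B:=\sigma(\psi(Z),\psi'(Z))\subseteq\mathcal C:=\sigma(Z)$. These σ-algebras are nested because $\psi$ and $\psi'$ are measurable. Conditional expectations onto nested σ-algebras are nested orthogonal projections in $L^2$. Hence $\E[X\mid\mathcal B]-\E[X\mid\mathcal A]$ is the projection of $\Delta_\psi(X)=\E[X\mid\mathcal C]-\E[X\mid\mathcal A]$ onto $L^2(\mathcal B)$, using the tower property. Its squared norm is therefore at most $\E[\Delta_\psi(X)^2]$, by Pythagoras:
\[
\E[\Delta_\psi(X)^2] = \E\big[(\E[X\mid\mathcal B]-\E[X\mid\mathcal A])^2\big] + \E\big[(\E[X\mid \mathcal C]-\E[X\mid\mathcal B])^2\big].
\]
Equivalently, $\sigma^2_{\id}(X)\le\sigma^2_{\psi\vee\psi'}(X)$. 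Dividing by $\sigma_\psi^2(X)$ yields $R^2_{\psi\to\psi'}(X)\le R^2_\psi(X)$.

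For the equality case, suppose $\E[X\mid Z]$ is a function of $(\psi(Z),\psi'(Z))$, that is, $\mathcal B$-measurable. Then $\E[X\mid\mathcal B]=\E[\E[X\mid\mathcal C]\mid\mathcal B]=\E[X\mid \mathcal C]$. The second term in the Pythagorean decomposition therefore vanishes, and $R^2_{\psi\to\psi'}(X)=R^2_\psi(X)$. The same argument applies to $Y$, so the product and its square root are equal as well.

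The argument is routine. The only point needing care is that the definition of $R^2_{\psi\to\psi'}$ uses $\E[X\mid\psi(Z),\psi'(Z)]$ and not $\E[X\mid\psi'(Z)]$ alone. With $\psi'(Z)$ alone, $\psi'(Z)$ could explain variance already explained by $\psi(Z)$ and the orthogonality would fail. The "share of residual variance that $\psi'$ explains" should therefore be read as the incremental share given $\psi(Z)$, which is the natural reading. A secondary check is that the degenerate convention ($R^2_\psi:=0$ when $\sigma_\psi=0$) is excluded by the assumption $\sigma_\psi(X),\sigma_\psi(Y)>0$.
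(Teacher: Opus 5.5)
Your proposal is correct and is essentially the paper's proof: the paper phrases your nested-projection/Pythagoras step as $\E[X \mid \psi(Z), \psi'(Z)]$ minimizing mean squared error among square-integrable functions of $(\psi(Z), \psi'(Z))$. This gives $\sigma^2_{\id}(X) \le \sigma^2_{(\psi,\psi')}(X) \le \sigma^2_\psi(X)$, hence $0 \le R^2_{\psi \to \psi'}(X) \le R^2_\psi(X)$, and the paper handles equality with the same tower-property argument and uses your incremental definition $R^2_{\psi \to \psi'}(X) = 1 - \sigma^2_{(\psi,\psi')}(X)/\sigma^2_\psi(X)$.
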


Every valid tolerance is therefore at least $\rbench$. Since the robust test
with tolerance $B$ rejects if and only if $B < \RV_\alpha$ (\cref{app:rv}), a
rejection with $\RV_\alpha \le \rbench$ could be explained by insufficiency that is
demonstrably present, and a rejection with $\RV_\alpha > \rbench$ is robust to
the demonstrated insufficiency. By \cref{prop:bench}, accurate predictions of
$X$ and of $Y$ from $Z$ demonstrate all of the insufficiency. We can therefore
take $\psi'(Z)$ to be such predictions, known as prognostic scores
\citep{hansen2008}, fitted on a split of the sample that is not used for testing
\citep{veitch2020}. Because they are one-dimensional, their demonstrated
insufficiency is estimated reliably at moderate $n$, unlike that of a richer
embedding with hundreds of coordinates (\cref{app:impl-bench}). In practice, $\rbench$
is replaced by its estimate $\rbenchhat$, or, for a strict version of the rule,
by a lower confidence bound.
 \section{Experiments}\label{sec:exp} We evaluate the sensitivity analysis in four settings of increasing realism: fully synthetic, semi-synthetic, LLM-generated, and real data.
We fit conditional means by OLS when $n \leq \text{dim}(\psi(Z)$, and by ridge regression otherwise. 

\begin{figure}[t]
    \centering
    \resizebox{\ifarxiv 0.85\textwidth\else\columnwidth\fi}{!}{\definecolor{cInk}{HTML}{52514E}
\definecolor{cMuted}{HTML}{898781}
\definecolor{cAxis}{HTML}{C3C2B7}
\definecolor{cGrid}{HTML}{E1E0D9}
\begin{tikzpicture}[baseline=(current bounding box.south)]
\node[anchor=south, font=\scriptsize, inner sep=1pt] at (1.550,0.030) {Synthetic (Experiment~1)};
\node[anchor=east, font=\scriptsize, text=cInk, inner sep=1pt] at (-0.120,-0.180) {standard, sufficient $\psi$};
\fill[color={rgb,255:red,191;green,216;blue,248}] (0.025,-0.335) rectangle (0.595,-0.025);
\node[font=\scriptsize, text=cInk, inner sep=0pt] at (0.310,-0.180) {0.05};
\fill[color={rgb,255:red,195;green,220;blue,249}] (0.645,-0.335) rectangle (1.215,-0.025);
\node[font=\scriptsize, text=cInk, inner sep=0pt] at (0.930,-0.180) {0.04};
\fill[color={rgb,255:red,191;green,216;blue,248}] (1.265,-0.335) rectangle (1.835,-0.025);
\node[font=\scriptsize, text=cInk, inner sep=0pt] at (1.550,-0.180) {0.05};
\fill[color={rgb,255:red,192;green,217;blue,248}] (1.885,-0.335) rectangle (2.455,-0.025);
\node[font=\scriptsize, text=cInk, inner sep=0pt] at (2.170,-0.180) {0.05};
\fill[color={rgb,255:red,195;green,220;blue,249}] (2.505,-0.335) rectangle (3.075,-0.025);
\node[font=\scriptsize, text=cInk, inner sep=0pt] at (2.790,-0.180) {0.04};
\node[anchor=east, font=\scriptsize, text=cInk, inner sep=1pt] at (-0.120,-0.540) {standard, sufficient $\psi$, 1152-dim.};
\node[anchor=east, font=\scriptsize, text=cInk, inner sep=1pt] at (-0.120,-0.900) {standard, insufficient $\psi$, $\lambda = 0$};
\fill[color={rgb,255:red,192;green,217;blue,248}] (0.025,-1.055) rectangle (0.595,-0.745);
\node[font=\scriptsize, text=cInk, inner sep=0pt] at (0.310,-0.900) {0.05};
\fill[color={rgb,255:red,196;green,220;blue,249}] (0.645,-1.055) rectangle (1.215,-0.745);
\node[font=\scriptsize, text=cInk, inner sep=0pt] at (0.930,-0.900) {0.03};
\fill[color={rgb,255:red,191;green,216;blue,248}] (1.265,-1.055) rectangle (1.835,-0.745);
\node[font=\scriptsize, text=cInk, inner sep=0pt] at (1.550,-0.900) {0.05};
\fill[color={rgb,255:red,195;green,219;blue,249}] (1.885,-1.055) rectangle (2.455,-0.745);
\node[font=\scriptsize, text=cInk, inner sep=0pt] at (2.170,-0.900) {0.04};
\fill[color={rgb,255:red,193;green,218;blue,248}] (2.505,-1.055) rectangle (3.075,-0.745);
\node[font=\scriptsize, text=cInk, inner sep=0pt] at (2.790,-0.900) {0.04};
\node[anchor=east, font=\scriptsize, text=cInk, inner sep=1pt] at (-0.120,-1.260) {standard, insufficient $\psi$, $\lambda = 1$};
\fill[color={rgb,255:red,43;green,121;blue,215}] (0.025,-1.415) rectangle (0.595,-1.105);
\node[font=\scriptsize, text=white, inner sep=0pt] at (0.310,-1.260) {0.58};
\fill[color={rgb,255:red,19;green,70;blue,135}] (0.645,-1.415) rectangle (1.215,-1.105);
\node[font=\scriptsize, text=white, inner sep=0pt] at (0.930,-1.260) {0.89};
\fill[color={rgb,255:red,13;green,55;blue,108}] (1.265,-1.415) rectangle (1.835,-1.105);
\node[font=\scriptsize, text=white, inner sep=0pt] at (1.550,-1.260) {1.00};
\fill[color={rgb,255:red,13;green,54;blue,107}] (1.885,-1.415) rectangle (2.455,-1.105);
\node[font=\scriptsize, text=white, inner sep=0pt] at (2.170,-1.260) {1.00};
\fill[color={rgb,255:red,13;green,54;blue,107}] (2.505,-1.415) rectangle (3.075,-1.105);
\node[font=\scriptsize, text=white, inner sep=0pt] at (2.790,-1.260) {1.00};
\node[anchor=east, font=\scriptsize, text=cInk, inner sep=1pt] at (-0.120,-1.620) {robust, $B$ = true insufficiency};
\fill[color={rgb,255:red,189;green,215;blue,247}] (0.025,-1.775) rectangle (0.595,-1.465);
\node[font=\scriptsize, text=cInk, inner sep=0pt] at (0.310,-1.620) {0.06};
\fill[color={rgb,255:red,193;green,218;blue,248}] (0.645,-1.775) rectangle (1.215,-1.465);
\node[font=\scriptsize, text=cInk, inner sep=0pt] at (0.930,-1.620) {0.04};
\fill[color={rgb,255:red,192;green,217;blue,248}] (1.265,-1.775) rectangle (1.835,-1.465);
\node[font=\scriptsize, text=cInk, inner sep=0pt] at (1.550,-1.620) {0.05};
\fill[color={rgb,255:red,194;green,218;blue,248}] (1.885,-1.775) rectangle (2.455,-1.465);
\node[font=\scriptsize, text=cInk, inner sep=0pt] at (2.170,-1.620) {0.04};
\fill[color={rgb,255:red,192;green,217;blue,248}] (2.505,-1.775) rectangle (3.075,-1.465);
\node[font=\scriptsize, text=cInk, inner sep=0pt] at (2.790,-1.620) {0.05};
\node[anchor=east, font=\scriptsize, text=cInk, inner sep=1pt] at (-0.120,-1.980) {robust, $B$ = half of it};
\fill[color={rgb,255:red,115;green,170;blue,237}] (0.025,-2.135) rectangle (0.595,-1.825);
\node[font=\scriptsize, text=cInk, inner sep=0pt] at (0.310,-1.980) {0.31};
\fill[color={rgb,255:red,69;green,142;blue,230}] (0.645,-2.135) rectangle (1.215,-1.825);
\node[font=\scriptsize, text=cInk, inner sep=0pt] at (0.930,-1.980) {0.46};
\fill[color={rgb,255:red,30;green,95;blue,175}] (1.265,-2.135) rectangle (1.835,-1.825);
\node[font=\scriptsize, text=white, inner sep=0pt] at (1.550,-1.980) {0.73};
\fill[color={rgb,255:red,15;green,61;blue,120}] (1.885,-2.135) rectangle (2.455,-1.825);
\node[font=\scriptsize, text=white, inner sep=0pt] at (2.170,-1.980) {0.95};
\fill[color={rgb,255:red,13;green,54;blue,107}] (2.505,-2.135) rectangle (3.075,-1.825);
\node[font=\scriptsize, text=white, inner sep=0pt] at (2.790,-1.980) {1.00};
\node[anchor=south, font=\scriptsize, inner sep=1pt] at (5.100,0.030) {Text (Experiment~2)};
\fill[color={rgb,255:red,191;green,217;blue,248}] (3.575,-0.335) rectangle (4.145,-0.025);
\node[font=\scriptsize, text=cInk, inner sep=0pt] at (3.860,-0.180) {0.05};
\fill[color={rgb,255:red,187;green,214;blue,247}] (4.195,-0.335) rectangle (4.765,-0.025);
\node[font=\scriptsize, text=cInk, inner sep=0pt] at (4.480,-0.180) {0.07};
\fill[color={rgb,255:red,194;green,218;blue,248}] (4.815,-0.335) rectangle (5.385,-0.025);
\node[font=\scriptsize, text=cInk, inner sep=0pt] at (5.100,-0.180) {0.04};
\fill[color={rgb,255:red,195;green,219;blue,249}] (5.435,-0.335) rectangle (6.005,-0.025);
\node[font=\scriptsize, text=cInk, inner sep=0pt] at (5.720,-0.180) {0.04};
\fill[color={rgb,255:red,187;green,214;blue,247}] (6.055,-0.335) rectangle (6.625,-0.025);
\node[font=\scriptsize, text=cInk, inner sep=0pt] at (6.340,-0.180) {0.07};
\fill[color={rgb,255:red,57;green,135;blue,229}] (3.575,-0.695) rectangle (4.145,-0.385);
\draw[cInk, dash pattern=on 1.1pt off 0.9pt, line width=0.45pt] (3.587,-0.683) rectangle (4.133,-0.397);
\node[font=\scriptsize, text=cInk, inner sep=0pt] at (3.860,-0.540) {0.50};
\fill[color={rgb,255:red,37;green,106;blue,191}] (4.195,-0.695) rectangle (4.765,-0.385);
\draw[white, dash pattern=on 1.1pt off 0.9pt, line width=0.45pt] (4.207,-0.683) rectangle (4.753,-0.397);
\node[font=\scriptsize, text=white, inner sep=0pt] at (4.480,-0.540) {0.67};
\fill[color={rgb,255:red,131;green,180;blue,239}] (4.815,-0.695) rectangle (5.385,-0.385);
\node[font=\scriptsize, text=cInk, inner sep=0pt] at (5.100,-0.540) {0.26};
\fill[color={rgb,255:red,172;green,205;blue,245}] (5.435,-0.695) rectangle (6.005,-0.385);
\node[font=\scriptsize, text=cInk, inner sep=0pt] at (5.720,-0.540) {0.12};
\fill[color={rgb,255:red,188;green,214;blue,247}] (6.055,-0.695) rectangle (6.625,-0.385);
\node[font=\scriptsize, text=cInk, inner sep=0pt] at (6.340,-0.540) {0.07};
\fill[color={rgb,255:red,54;green,132;blue,226}] (3.575,-1.055) rectangle (4.145,-0.745);
\node[font=\scriptsize, text=white, inner sep=0pt] at (3.860,-0.900) {0.51};
\fill[color={rgb,255:red,143;green,188;blue,241}] (4.195,-1.055) rectangle (4.765,-0.745);
\node[font=\scriptsize, text=cInk, inner sep=0pt] at (4.480,-0.900) {0.22};
\fill[color={rgb,255:red,178;green,208;blue,246}] (4.815,-1.055) rectangle (5.385,-0.745);
\node[font=\scriptsize, text=cInk, inner sep=0pt] at (5.100,-0.900) {0.10};
\fill[color={rgb,255:red,187;green,214;blue,247}] (5.435,-1.055) rectangle (6.005,-0.745);
\node[font=\scriptsize, text=cInk, inner sep=0pt] at (5.720,-0.900) {0.07};
\fill[color={rgb,255:red,184;green,212;blue,246}] (6.055,-1.055) rectangle (6.625,-0.745);
\node[font=\scriptsize, text=cInk, inner sep=0pt] at (6.340,-0.900) {0.08};
\fill[color={rgb,255:red,40;green,114;blue,204}] (3.575,-1.415) rectangle (4.145,-1.105);
\node[font=\scriptsize, text=white, inner sep=0pt] at (3.860,-1.260) {0.62};
\fill[color={rgb,255:red,25;green,81;blue,153}] (4.195,-1.415) rectangle (4.765,-1.105);
\node[font=\scriptsize, text=white, inner sep=0pt] at (4.480,-1.260) {0.82};
\fill[color={rgb,255:red,14;green,57;blue,112}] (4.815,-1.415) rectangle (5.385,-1.105);
\node[font=\scriptsize, text=white, inner sep=0pt] at (5.100,-1.260) {0.98};
\fill[color={rgb,255:red,13;green,54;blue,107}] (5.435,-1.415) rectangle (6.005,-1.105);
\node[font=\scriptsize, text=white, inner sep=0pt] at (5.720,-1.260) {1.00};
\fill[color={rgb,255:red,13;green,54;blue,107}] (6.055,-1.415) rectangle (6.625,-1.105);
\node[font=\scriptsize, text=white, inner sep=0pt] at (6.340,-1.260) {1.00};
\fill[color={rgb,255:red,146;green,189;blue,241}] (3.575,-1.775) rectangle (4.145,-1.465);
\node[font=\scriptsize, text=cInk, inner sep=0pt] at (3.860,-1.620) {0.21};
\fill[color={rgb,255:red,163;green,200;blue,244}] (4.195,-1.775) rectangle (4.765,-1.465);
\node[font=\scriptsize, text=cInk, inner sep=0pt] at (4.480,-1.620) {0.15};
\fill[color={rgb,255:red,184;green,212;blue,246}] (4.815,-1.775) rectangle (5.385,-1.465);
\node[font=\scriptsize, text=cInk, inner sep=0pt] at (5.100,-1.620) {0.08};
\fill[color={rgb,255:red,189;green,215;blue,247}] (5.435,-1.775) rectangle (6.005,-1.465);
\node[font=\scriptsize, text=cInk, inner sep=0pt] at (5.720,-1.620) {0.06};
\fill[color={rgb,255:red,202;green,224;blue,250}] (6.055,-1.775) rectangle (6.625,-1.465);
\node[font=\scriptsize, text=cInk, inner sep=0pt] at (6.340,-1.620) {0.01};
\fill[color={rgb,255:red,96;green,159;blue,233}] (3.575,-2.135) rectangle (4.145,-1.825);
\node[font=\scriptsize, text=cInk, inner sep=0pt] at (3.860,-1.980) {0.38};
\fill[color={rgb,255:red,64;green,139;blue,229}] (4.195,-2.135) rectangle (4.765,-1.825);
\node[font=\scriptsize, text=cInk, inner sep=0pt] at (4.480,-1.980) {0.48};
\fill[color={rgb,255:red,40;green,114;blue,204}] (4.815,-2.135) rectangle (5.385,-1.825);
\node[font=\scriptsize, text=white, inner sep=0pt] at (5.100,-1.980) {0.62};
\fill[color={rgb,255:red,14;green,60;blue,118}] (5.435,-2.135) rectangle (6.005,-1.825);
\node[font=\scriptsize, text=white, inner sep=0pt] at (5.720,-1.980) {0.96};
\fill[color={rgb,255:red,13;green,54;blue,107}] (6.055,-2.135) rectangle (6.625,-1.825);
\node[font=\scriptsize, text=white, inner sep=0pt] at (6.340,-1.980) {1.00};
\node[anchor=north, font=\scriptsize, text=cInk, inner sep=2pt] at (0.310,-2.180) {500};
\node[anchor=north, font=\scriptsize, text=cInk, inner sep=2pt] at (0.930,-2.180) {1k};
\node[anchor=north, font=\scriptsize, text=cInk, inner sep=2pt] at (1.550,-2.180) {2k};
\node[anchor=north, font=\scriptsize, text=cInk, inner sep=2pt] at (2.170,-2.180) {4k};
\node[anchor=north, font=\scriptsize, text=cInk, inner sep=2pt] at (2.790,-2.180) {10k};
\node[anchor=north, font=\scriptsize, text=cInk, inner sep=2pt] at (3.860,-2.180) {500};
\node[anchor=north, font=\scriptsize, text=cInk, inner sep=2pt] at (4.480,-2.180) {1k};
\node[anchor=north, font=\scriptsize, text=cInk, inner sep=2pt] at (5.100,-2.180) {2k};
\node[anchor=north, font=\scriptsize, text=cInk, inner sep=2pt] at (5.720,-2.180) {4k};
\node[anchor=north, font=\scriptsize, text=cInk, inner sep=2pt] at (6.340,-2.180) {10k};
\node[anchor=east, font=\scriptsize, text=cInk, inner sep=1pt] at (-0.120,-2.310) {sample size $n$};
\fill[color={rgb,255:red,203;green,224;blue,250}] (0.025,-2.760) rectangle (0.140,-2.600);
\fill[color={rgb,255:red,198;green,222;blue,250}] (0.135,-2.760) rectangle (0.250,-2.600);
\fill[color={rgb,255:red,194;green,218;blue,248}] (0.245,-2.760) rectangle (0.360,-2.600);
\fill[color={rgb,255:red,190;green,216;blue,248}] (0.355,-2.760) rectangle (0.470,-2.600);
\fill[color={rgb,255:red,185;green,212;blue,246}] (0.465,-2.760) rectangle (0.580,-2.600);
\fill[color={rgb,255:red,180;green,210;blue,246}] (0.575,-2.760) rectangle (0.690,-2.600);
\fill[color={rgb,255:red,176;green,207;blue,245}] (0.685,-2.760) rectangle (0.800,-2.600);
\fill[color={rgb,255:red,170;green,204;blue,245}] (0.795,-2.760) rectangle (0.910,-2.600);
\fill[color={rgb,255:red,166;green,201;blue,245}] (0.905,-2.760) rectangle (1.020,-2.600);
\fill[color={rgb,255:red,160;green,198;blue,244}] (1.015,-2.760) rectangle (1.130,-2.600);
\fill[color={rgb,255:red,156;green,196;blue,244}] (1.125,-2.760) rectangle (1.240,-2.600);
\fill[color={rgb,255:red,151;green,192;blue,242}] (1.235,-2.760) rectangle (1.350,-2.600);
\fill[color={rgb,255:red,146;green,190;blue,242}] (1.345,-2.760) rectangle (1.460,-2.600);
\fill[color={rgb,255:red,141;green,186;blue,240}] (1.455,-2.760) rectangle (1.570,-2.600);
\fill[color={rgb,255:red,136;green,184;blue,240}] (1.565,-2.760) rectangle (1.680,-2.600);
\fill[color={rgb,255:red,132;green,180;blue,239}] (1.675,-2.760) rectangle (1.790,-2.600);
\fill[color={rgb,255:red,126;green,178;blue,238}] (1.785,-2.760) rectangle (1.900,-2.600);
\fill[color={rgb,255:red,122;green,174;blue,238}] (1.895,-2.760) rectangle (2.010,-2.600);
\fill[color={rgb,255:red,116;green,172;blue,237}] (2.005,-2.760) rectangle (2.120,-2.600);
\fill[color={rgb,255:red,112;green,168;blue,236}] (2.115,-2.760) rectangle (2.230,-2.600);
\fill[color={rgb,255:red,107;green,166;blue,236}] (2.225,-2.760) rectangle (2.340,-2.600);
\fill[color={rgb,255:red,102;green,162;blue,234}] (2.335,-2.760) rectangle (2.450,-2.600);
\fill[color={rgb,255:red,97;green,160;blue,234}] (2.445,-2.760) rectangle (2.560,-2.600);
\fill[color={rgb,255:red,92;green,156;blue,232}] (2.555,-2.760) rectangle (2.670,-2.600);
\fill[color={rgb,255:red,87;green,154;blue,232}] (2.665,-2.760) rectangle (2.780,-2.600);
\fill[color={rgb,255:red,82;green,150;blue,231}] (2.775,-2.760) rectangle (2.890,-2.600);
\fill[color={rgb,255:red,77;green,147;blue,230}] (2.885,-2.760) rectangle (3.000,-2.600);
\fill[color={rgb,255:red,71;green,144;blue,230}] (2.995,-2.760) rectangle (3.110,-2.600);
\fill[color={rgb,255:red,65;green,140;blue,230}] (3.105,-2.760) rectangle (3.220,-2.600);
\fill[color={rgb,255:red,60;green,137;blue,229}] (3.215,-2.760) rectangle (3.330,-2.600);
\fill[color={rgb,255:red,56;green,134;blue,228}] (3.325,-2.760) rectangle (3.440,-2.600);
\fill[color={rgb,255:red,52;green,130;blue,224}] (3.435,-2.760) rectangle (3.550,-2.600);
\fill[color={rgb,255:red,49;green,127;blue,221}] (3.545,-2.760) rectangle (3.660,-2.600);
\fill[color={rgb,255:red,46;green,124;blue,218}] (3.655,-2.760) rectangle (3.770,-2.600);
\fill[color={rgb,255:red,44;green,122;blue,216}] (3.765,-2.760) rectangle (3.880,-2.600);
\fill[color={rgb,255:red,42;green,119;blue,212}] (3.875,-2.760) rectangle (3.990,-2.600);
\fill[color={rgb,255:red,40;green,116;blue,207}] (3.985,-2.760) rectangle (4.100,-2.600);
\fill[color={rgb,255:red,40;green,113;blue,202}] (4.095,-2.760) rectangle (4.210,-2.600);
\fill[color={rgb,255:red,39;green,110;blue,198}] (4.205,-2.760) rectangle (4.320,-2.600);
\fill[color={rgb,255:red,38;green,107;blue,193}] (4.315,-2.760) rectangle (4.430,-2.600);
\fill[color={rgb,255:red,36;green,105;blue,189}] (4.425,-2.760) rectangle (4.540,-2.600);
\fill[color={rgb,255:red,34;green,102;blue,185}] (4.535,-2.760) rectangle (4.650,-2.600);
\fill[color={rgb,255:red,32;green,99;blue,181}] (4.645,-2.760) rectangle (4.760,-2.600);
\fill[color={rgb,255:red,31;green,96;blue,177}] (4.755,-2.760) rectangle (4.870,-2.600);
\fill[color={rgb,255:red,29;green,93;blue,173}] (4.865,-2.760) rectangle (4.980,-2.600);
\fill[color={rgb,255:red,28;green,91;blue,169}] (4.975,-2.760) rectangle (5.090,-2.600);
\fill[color={rgb,255:red,27;green,88;blue,164}] (5.085,-2.760) rectangle (5.200,-2.600);
\fill[color={rgb,255:red,26;green,86;blue,160}] (5.195,-2.760) rectangle (5.310,-2.600);
\fill[color={rgb,255:red,25;green,83;blue,156}] (5.305,-2.760) rectangle (5.420,-2.600);
\fill[color={rgb,255:red,24;green,80;blue,151}] (5.415,-2.760) rectangle (5.530,-2.600);
\fill[color={rgb,255:red,23;green,78;blue,147}] (5.525,-2.760) rectangle (5.640,-2.600);
\fill[color={rgb,255:red,22;green,75;blue,143}] (5.635,-2.760) rectangle (5.750,-2.600);
\fill[color={rgb,255:red,20;green,72;blue,139}] (5.745,-2.760) rectangle (5.860,-2.600);
\fill[color={rgb,255:red,18;green,70;blue,135}] (5.855,-2.760) rectangle (5.970,-2.600);
\fill[color={rgb,255:red,17;green,67;blue,131}] (5.965,-2.760) rectangle (6.080,-2.600);
\fill[color={rgb,255:red,16;green,65;blue,127}] (6.075,-2.760) rectangle (6.190,-2.600);
\fill[color={rgb,255:red,15;green,62;blue,122}] (6.185,-2.760) rectangle (6.300,-2.600);
\fill[color={rgb,255:red,14;green,60;blue,118}] (6.295,-2.760) rectangle (6.410,-2.600);
\fill[color={rgb,255:red,14;green,58;blue,114}] (6.405,-2.760) rectangle (6.520,-2.600);
\fill[color={rgb,255:red,13;green,55;blue,109}] (6.515,-2.760) rectangle (6.630,-2.600);
\node[anchor=east, font=\scriptsize, text=cInk, inner sep=1pt] at (-0.120,-2.680) {rejection rate under $H_0$};
\draw[cAxis, line width=0.4pt] (0.025,-2.760) -- (0.025,-2.830);
\node[anchor=north, font=\scriptsize, text=cInk, inner sep=1pt] at (0.025,-2.830) {0.00};
\draw[cAxis, line width=0.4pt] (1.675,-2.760) -- (1.675,-2.830);
\node[anchor=north, font=\scriptsize, text=cInk, inner sep=1pt] at (1.675,-2.830) {0.25};
\draw[cAxis, line width=0.4pt] (3.325,-2.760) -- (3.325,-2.830);
\node[anchor=north, font=\scriptsize, text=cInk, inner sep=1pt] at (3.325,-2.830) {0.50};
\draw[cAxis, line width=0.4pt] (4.975,-2.760) -- (4.975,-2.830);
\node[anchor=north, font=\scriptsize, text=cInk, inner sep=1pt] at (4.975,-2.830) {0.75};
\draw[cAxis, line width=0.4pt] (6.625,-2.760) -- (6.625,-2.830);
\node[anchor=north, font=\scriptsize, text=cInk, inner sep=1pt] at (6.625,-2.830) {1.00};
\end{tikzpicture}
 }
    \caption{
        Rejection rate under $H_0$ against $n$ in Experiment~1 (left, $\psi = \psi_5$) and Experiment~2 (right, $\psi = E_1$) for the residual correlation test (standard) and the robust test, run at $\lambda = 1$, with the embeddings and tolerances named in the rows. Dashed cells are fitted by ridge regression, since $n$ is below the dimension of $\psi(Z)$.
    }
    \label{fig:exp123}
\end{figure}
    
\subsection{Gaussian data}\label{sec:exp1} \textbf{Design.} We draw $Z \sim N(0,
I_{p})$ with $p = \resOneDimZ$ and set $X = \theta_X^\top Z + \varepsilon_X$ and
$Y = \theta_Y^\top Z + \varepsilon_Y$ with independent standard normal noise, so
that $H_0$ holds. Of the embeddings $\psi_q(Z) = (Z_1, \dots, Z_q)$, $\psi_{p}$
is sufficient and $\psi_{\resOneQinsuff}$ is not. The coefficients of the
dropped coordinates set $R^2_{\psi_q}(X) = R^2_{\psi_q}(Y) = \resOneRtwoA$, so
the true insufficiency is $\sqrt{R^2_{\psi_q}(X) R^2_{\psi_q}(Y)} =
\resOneBound$. The correlation $\lambda$ of the discarded parts is the cosine of
the angle between the entries of $\theta_X$ and of $\theta_Y$ on the dropped
coordinates. We take them orthogonal ($\lambda = 0$) or parallel ($\lambda =
1$). We record the rejection rate at varying $n$
under $H_0$ of the residual correlation test with the sufficient embedding and
with the insufficient embedding for both $\lambda$, and of the robust test at
$\lambda = 1$ with the tolerance equal to the true insufficiency and to half of
it, the tolerance of an analyst who wrongly assumes $\lambda \le 1/2$.

\textbf{Results.} \Cref{fig:exp123} (left) shows the rejection rates. With the
sufficient embedding, and with the insufficient embedding at $\lambda = 0$, the
residual correlation test is approximately valid for every $n$; at $\lambda = 1$ its rejection rate rises to one. The robust test holds its level when
the tolerance is set to the true insufficiency, but its rejection rate is inflated when the tolerance is set to only half of the true insufficiency.
\Cref{fig:exp1a,fig:exp1b} in \cref{app:figs} add the tightness of the bound,
power, and the distribution of $\hat\rho_\psi$ and $\RV_\alpha$.

\subsection{Real text with synthetic outcomes}\label{sec:exp2} 
\textbf{Design.}
We repeat the Gaussian experiment with a real $Z$: \resTwoN\ posts from the
20~Newsgroups corpus \citep{lang1995}, encoded by two sentence encoders from the
Sentence-BERT family \citep{reimers2019}, $E_1$ with $d = \resTwoDimWeak$ and
$E_2$ with $d = \resTwoDimStrong$. We generate Gaussian $X$ and $Y$ from the
embeddings of $Z$ so that $E_1$ is insufficient with $\sqrt{R^2_{E_1}(X)
R^2_{E_1}(Y)} = \resTwoPopBound$ and a correlation $\lambda$ of the discarded
parts set by design, while the four generating features and the
\resTwoRichDim-dimensional $[E_1, E_2]$ are sufficient
(\cref{app:impl-exp2}).

\textbf{Results.} \Cref{fig:exp123} (right) shows the rejection rates under $H_0$ for $n$ from \resTwoNmin\ to \resTwoNmax. 
They match Experiment~1: with $\psi = E_1$, the residual correlation test stays close to its level at $\lambda = 0$ and rejects with certainty at $\lambda = 1$, where $\hat\rho_\psi$ sits at the true insufficiency, and the robust test holds its level with the true insufficiency as its tolerance and not with half of it. The sufficient sets stay close to their level.
\Cref{fig:exp2,fig:exp2power} in \cref{app:figs} report $\hat\rho_\psi$ and $\RV_\alpha$, and the power at
$n = \resTwoPowerN$.

\begin{figure*}[t]
\centering
\figorbox{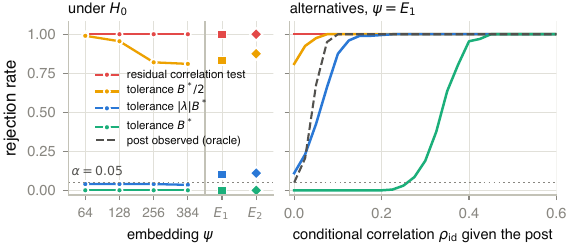}{0.57\textwidth}\hfill
\figorbox{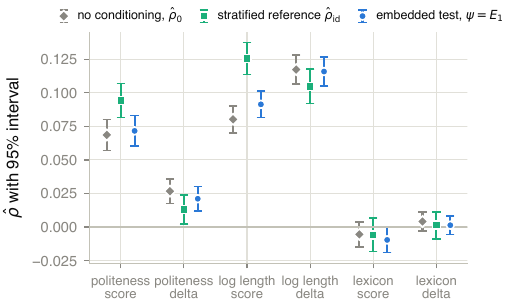}{0.41\textwidth}
\caption{
    (\textit{Left}): Experiment~3, rejection rate under $H_0$ over \resFourTruthR\ null datasets drawn from the replicates, for the residual correlation test and for the robust test with the tolerance equal to half the true insufficiency, to the true bias $|\lambda|\sqrt{R^2_\psi(X) R^2_\psi(Y)}$ and to the true insufficiency ($B^*$ in the legend); $\psi(Z)$ is 64 to 384 principal components of the generator's mean hidden state over the prompt tokens at its last layer, $E_1$ or $E_2$. (\textit{Middle}): The same tests with $\psi = E_1$ against a range of alternatives. We plot a reference test (dashed) that conditions on the post through the unused draws. 
    (\textit{Right}): Reddit data application. For each pair of $X$ (politeness rating, log length, politeness lexicon) and $Y$ (signed log score, delta indicator) we plot the unconditional correlation $\hat\rho_0$ (diamonds), $\hat\rho_{\id}$ from the baseline, which conditions on the comment being replied to (squares), and $\hat\rho_\psi$ from the embedded test with $\psi = E_1$ (circles), all with 95\% intervals. 
    }
\label{fig:exp4app}
\end{figure*}
\subsection{Text generated by a local model with accessible states}\label{sec:exp4}
\textbf{Design.} 
Using a language model, we generate text from real $Z$. We take \resThreeN\
posts $Z_i$ from the corpus of \cref{sec:exp2} and, in independent calls to
\resFourModel\ \citep{qwen2025}, a \resFourParams-parameter open model, generate
a reader's comment $\tilde X$ on the post, written from a randomly assigned
persona that acts as independent noise, and a newsletter summary $\tilde Y$ of
the post (prompts in \cref{app:prompts}), so that $H_0$ holds exactly. The comment ends with a line labelling its own sentiment as positive, neutral or negative, which is removed before the comment is used. The tested
variable $X$ is a sentiment score of the comment, namely the cross-fitted
prediction of its label from the $E_1$ embedding of the comment, and $Y$ applies
the same fitted direction to the embedding of the summary. We compare $\psi$
equal to $E_1$ and to $E_2$, the two encoders of \cref{sec:exp2}, with the
generator's internal states. From its forward pass over the post we keep the
mean of the hidden states over the prompt tokens at the last of its
\resFourLayers\ layers. For a $d$-dimensional embedding, we
take the first $d$ principal components of this mean, and we compare it with
$E_1$ and $E_2$ at the same dimension.

Since we have access to the generating model, we can get ground-truth values of
the sensitivity parameters. For every post we draw the comment and the summary
\resFourTruthM\ more times from the same model, prompts and sampling settings,
with the persona redrawn each time. By \cref{prop:replicates}, these draws identify $R^2_\psi(X)$, $R^2_\psi(Y)$ and $\lambda$ for every embedding, hence
the ground-truth insufficiency $\sqrt{R^2_\psi(X) R^2_\psi(Y)}$ and the bias
$|\lambda|\sqrt{R^2_\psi(X) R^2_\psi(Y)}$ of \cref{thm:pair}. Taking one draw of
each text per post gives an exact null dataset; on \resFourTruthR\ such datasets
we record the rejection rates of the residual correlation test and of the robust
test with the tolerance equal to the ground-truth insufficiency, to half of it,
and to the ground-truth bias. We can also use the replicates to evaluate power
under the alternative: within each post we pair the comment draw, with
probability $\pi$, with the summary draw of the same sentiment rank, and
otherwise with an independent draw. The probability $\pi$ controls the conditional correlation
$\rho_{\id}$ of $X$ and $Y$ given the post. Details are in \cref{app:impl-exp3}.

\textbf{Results.} 
Under $H_0$, the residual correlation test (red in \cref{fig:exp4app}, left) rejects with every embedding. Against an unadjusted correlation of \resFourMargNull, the encoders leave $\hat\rho_\psi = \resFourRhoEone$ with $E_1$ and \resFourRhoEtwo\ with $E_2$. The generating model's own states do not remove the bias either: at every dimension, from 64 to 768 principal components, they leave a residual correlation of the same size (\cref{fig:exp4views} in \cref{app:figs}). Two explanations are consistent with this: the aggregated states are less informative than attention to every token, or the regressions do not recover $\E[X \mid \psi(Z)]$ at this $n$. 

The discarded parts are correlated, with $\lambda$ between
\resFourTruthLambdaMinStates\ and \resFourTruthLambdaMaxStates\ across
embeddings. \Cref{fig:exp4app} (left) shows the rejection rates under the null. Using a tolerance equal to the true bias (blue) gives a robust test that approximately holds the level $\alpha = 0.05$. If we use the true insufficiency
and assume $\lambda = 1$, the robust test (green) is conservative and never
rejects. With half the true insufficiency (orange) it rejects almost as often as the residual correlation test. The encoders $E_1$ and $E_2$ behave similarly to the
generator's own states. \Cref{fig:exp4app} (middle) shows the corresponding
power. Using the replicates, we plot a reference test that does not use an embedding but conditions on the post itself (dashed). The residual correlation
test and the robust test with tolerance $B^*/2$ reject throughout, but since they do not hold
their level, their power means nothing. The robust test with the true bias as
tolerance detects almost the same alternatives as the reference test, while the
conservative test using the true insufficiency as tolerance only starts
rejecting at the strongest alternatives. 
In conclusion, this experiment shows that a plain residual correlation test can
be biased by an insufficient embedding, while the robust test with correct tolerance approximately holds its level and has power, but underestimating the
tolerance leads to a biased test and overestimating it to a loss of power.

\subsection{Application: style and success of replies in online discussions}\label{sec:app}
\textbf{Design.} We return to the conceptual example of \cref{sec:intro} with the ChangeMyView corpus \citep{tan2016,chang2020convokit}: \resAppNops\ posts with their discussion threads, the Reddit score of every comment, and the `deltas' that the original poster awards to comments that changed their view. Every observation is a reply to a comment, and $Z$ is the comment being replied to, with \resDeepN\ replies to \resDeepNparents\ comments (\cref{app:impl-app}). 
$X$ is either the politeness of the reply, rated by a model on a five-point scale, or its log length. $Y$ is the signed log score of the reply or its delta indicator.
Because several replies share a $Z$, we can construct a baseline test that conditions on $Z$ without a representation of it: under $H_0$, the values of $X$ among the replies to a comment carry no information about their values of $Y$, so the covariance of $X$ and $Y$ after demeaning within comments estimates $\E[\Cov(X, Y \mid Z)]$, and permuting $X$ among the replies to a comment gives an exact test under exchangeability. We compare the baseline to a test that conditions on an embedding $E_1(Z)$ of the comment, and also plot the unconditional correlation $\hat\rho_0$.

\textbf{Results.}
The embedded $\hat\rho_\psi$ stays close to the unconditional value, because the
embeddings do not fully recover the effect of the comment (\cref{fig:exp4app},
right, and \cref{fig:appviews} in \cref{app:figs}). The replies to a comment are
replicates in the sense of \cref{prop:replicates}, so we can recover the
sensitivity parameters from variance components: the comment-level components of
$X$ and $Y$ are correlated in the politeness and length pairs, with
$\hat\lambda$ between $\resDeepLambdaCompMin$ and $\resDeepLambdaCompMax$.
 \section{Discussion}\label{sec:discussion}

We have highlighted the bias arising in CI tests from using embeddings of the
conditioning variable. 
The bias of such a test is described by both the insufficiency of the embedding and the correlation of the discarded parts. 
We propose a robust test that controls the level if the insufficiency tolerance is correctly specified. 

Our results are largely negative about testing conditional independence given an embedding. The bias bound is worst-case in $\lambda$, and the
tolerance is in general not identified from the data. In our experiments, richer
embeddings demonstrated almost none of the insufficiency, in part because the
estimated benchmark loses the excess estimation error of the richer regression (\cref{app:impl-bench}). Even the generator's own states were insufficient, and
only replicates identified the sensitivity parameters. Without a tolerance
justified by domain knowledge, a rejection is therefore hard to distinguish from
an artifact of the embedding. Future work could investigate how to adapt the
embeddings to retain more information about the tested variables, to ensure the validity of the tests, or how to construct better benchmarks for the insufficiency.

\subsection*{AI use statement}
Generative AI tools were used for drafting and editing text and proofs and for writing and testing the experiment code. In Experiment~3 (\cref{sec:exp4}), we used a language model to generate the synthetic texts that constitute the data; we used a decision model in the application (\cref{sec:app}) to rate the politeness of the replies, and a language model to rate a subset of the replies as a check (\cref{app:impl-app}); the prompts are reported in \cref{app:prompts,app:impl-app}. We have reviewed all AI-assisted work, and all mathematical statements and proofs were checked by the authors, who take responsibility for the content of this work.

\acks{Nikolaj Thams is supported by the Novo Nordisk Foundation (grant NNF26OC0114284). The regenerated texts of Experiment~3 were computed on the Gefion AI supercomputer, operated by the Danish Centre for AI Innovation (DCAI).}

\clearpage
\appendix
\crefalias{section}{appendix}
\crefalias{subsection}{subappendix}
\numberwithin{equation}{section} \numberwithin{figure}{section} \numberwithin{table}{section}
\section{Proofs}\label{app:proofs}
In this section, write $V := \psi(Z)$, which takes values in $\mathcal V$.

\subsection{Proofs for \texorpdfstring{\cref{sec:setup}}{Section 2}}\label{app:setup}
 Since $\psi$ is measurable, $\sigma(V) \subseteq \sigma(Z)$, and $\mathcal F_\psi$ consists of the $f \in \mathcal F$ for which $f(X, Z)$ is a function of $(X, V)$. If $\sigma_\psi(f) = 0$, then $f = \E[f \mid V]$ almost surely, so $\E[\Cov(f, g \mid V)] = 0$ for every $g$ and $\Delta_\psi(f) = 0$. The conventions $\rho_\psi(f, g) := 0$ and $R^2_\psi(f) := 0$ thus replace $0/0$ by $0$, and every pair satisfies $\E[\Cov(f, g \mid V)] = 0$ whenever $\rho_\psi(f, g) = 0$.

\begin{proof}[Proof of the claims in \cref{example:family}]
(a) Here $\mathcal F_\psi$ and $\mathcal G_\psi$ consist of all $f \in L^2(P_{X,Z})$ for which $f(X, Z)$ is a function of $(X, V)$ and all $g \in L^2(P_{Y,Z})$ for which $g(Y, Z)$ is a function of $(Y, V)$. If $X \indep Y \mid V$, then $(X, V) \indep (Y, V) \mid V$, so $\E[fg \mid V] = \E[f \mid V]\, \E[g \mid V]$ for all $f \in \mathcal F_\psi$ and $g \in \mathcal G_\psi$, every $\rho_\psi(f, g)$ is zero, and so is the supremum. Conversely, if $\rho_\psi(\mathcal F, \mathcal G) = 0$, then every $\rho_\psi(f, g)$ is zero, so $\E[\Cov(f, g \mid V)] = 0$ for all $f \in \mathcal F_\psi$ and $g \in \mathcal G_\psi$. For measurable sets $A$, $B$, and $C$, take $f = 1_A(X)$ and $g = 1_B(Y) 1_C(V)$. Then $0 = \E[\Cov(f, g \mid V)] = \E\big[1_C(V) \Cov(1_A(X), 1_B(Y) \mid V)\big]$ for all $C$, so $\Cov(1_A(X), 1_B(Y) \mid V) = 0$ almost surely, that is, $P(X \in A, Y \in B \mid V) = P(X \in A \mid V)\, P(Y \in B \mid V)$ almost surely for all $A$ and $B$. This is $X \indep Y \mid V$.

(b) Let $f \in \mathcal F_\psi$. Since $f - \E[f \mid V]$ is a function of $(X, V)$, we have by the tower property and the Cauchy--Schwarz inequality that
\begin{align*}
\E[\Cov(f, Y \mid V)]^2 &= \E[(f - \E[f \mid V])(Y - \E[Y \mid V])]^2 \\
&= \E[(f - \E[f \mid V])(\E[Y \mid X, V] - \E[Y \mid V])]^2\\
&\leq \sigma^2_\psi(f) \E\left[ (\E[Y \mid X, V] - \E[Y \mid V])^2 \right].
\end{align*}
Therefore, for any $f \in \mathcal{F}_\psi$, we have 
\[
  \rho_{\psi}(f, (y,z) \mapsto y) \leq \frac{ \E\left[ (\E[Y \mid X, V] - \E[Y \mid V])^2 \right]^{1/2}}{\sigma_\psi(Y)}.
\]
The function $f(x, z) = \E[Y \mid X = x, V = \psi(z)]$ is a square-integrable
function of $(x, \psi(z))$, so it lies in $\mathcal F_\psi$, and it attains the
upper bound, which is therefore the value of $\rho_\psi(\mathcal F, \mathcal
G)$. From this, we see that $\rho_\psi(\mathcal{F}, \mathcal{G}) = 0$ if and
only if $\E[Y \mid X, V] = \E[Y \mid V]$. The argument assumes $\sigma_\psi(Y) >
0$. If $\sigma_\psi(Y) = 0$, then $Y = \E[Y \mid V]$, so $\E[Y \mid X, V] = \E[Y
\mid V]$, and the measure is zero by convention.

(c) The classes are singletons, and their elements do not depend on $z$, so $\mathcal F_\psi = \mathcal F$ and $\mathcal G_\psi = \mathcal G$.
\end{proof}

\begin{proof}[Proof of \cref{thm:pair}]
If $\sigma_\psi(f) = 0$ or $\sigma_\psi(g) = 0$, both sides of the bound are zero by convention, so let $\sigma_\psi(f), \sigma_\psi(g) > 0$. We have $f - \E[f \mid V] = (f - \E[f \mid Z]) + \Delta_\psi(f)$, and likewise for $g$. Since $f - \E[f \mid Z]$ has conditional mean zero given $Z$ and $\Delta_\psi(g)$ is a function of $Z$, the cross terms have expectation zero, so
\begin{equation}\label{eq:pair-fg}
\E[\Cov(f, g \mid V)] = \E[\Cov(f, g \mid Z)] + \E[\Delta_\psi(f)\, \Delta_\psi(g)].
\end{equation}
Since the discarded parts have mean zero,
\[
\E[\Delta_\psi(f)\Delta_\psi(g)] = \lambda_\psi(f, g) \sqrt{R^2_\psi(f) R^2_\psi(g)}\, \sigma_\psi(f) \sigma_\psi(g).
\]
Dividing by $\sigma_\psi(f)\sigma_\psi(g)$ and using $\sigma_{\id}^2(f) = (1 - R^2_\psi(f))\, \sigma_\psi^2(f)$, and likewise for $g$, we have
\[
\begin{split}
\frac{\E[\Cov(f, g \mid V)]}{\sigma_\psi(f)\, \sigma_\psi(g)} ={}& \sqrt{(1 - R^2_\psi(f))(1 - R^2_\psi(g))}\; \frac{\E[\Cov(f, g \mid Z)]}{\sigma_{\id}(f)\, \sigma_{\id}(g)} + \lambda_\psi(f, g) \sqrt{R^2_\psi(f)\, R^2_\psi(g)},
\end{split}
\]
where the second fraction is read as zero if $\sigma_{\id}(f)\sigma_{\id}(g) = 0$, in which case $\E[\Cov(f, g \mid Z)] = 0$. The two fractions have absolute values $\rho_\psi(f, g)$ and $\rho_{\id}(f, g)$, so the reverse triangle inequality gives the bound. Under $H_0$, $\E[\Cov(f, g \mid Z)] = 0$, so $\rho_\psi(f, g) = |\lambda_\psi(f, g)| \sqrt{R^2_\psi(f)\, R^2_\psi(g)}$.
\end{proof}

\begin{proof}[Proof of \cref{cor:sup}]
If $\mathcal F_\psi$ or $\mathcal G_\psi$ is empty, all quantities are zero, so let both be nonempty.

(i) Let $f \in \mathcal F_\psi$ and $g \in \mathcal G_\psi$. Since the factor multiplying $\rho_{\id}(f, g)$ in \cref{thm:pair} is at most one, we have
\[
\rho_\psi(f, g) \le \rho_{\id}(f, g) + |\lambda_\psi(f, g)| \sqrt{R^2_\psi(f)\, R^2_\psi(g)} \le \rho_{\id}(\mathcal F, \mathcal G) + \sqrt{R^2_\psi(\mathcal F)\, R^2_\psi(\mathcal G)},
\]
where the second inequality uses $|\lambda_\psi(f, g)| \le 1$, $R^2_\psi(f) \le R^2_\psi(\mathcal F)$, $R^2_\psi(g) \le R^2_\psi(\mathcal G)$, and that $(f, g) \in \mathcal F \times \mathcal G$, because $\mathcal F_\psi \subseteq \mathcal F$ and $\mathcal G_\psi \subseteq \mathcal G$. Taking the supremum over $f$ and $g$ gives (i).

(ii) Under $H_0$, \cref{thm:pair} gives $\rho_\psi(f, g) = |\lambda_\psi(f, g)| \sqrt{R^2_\psi(f)\, R^2_\psi(g)}$ for all $f \in \mathcal F_\psi$ and $g \in \mathcal G_\psi$, and taking the supremum gives the equality in (ii). The supremum is zero if and only if every term is zero. If $R^2_\psi(f) = 0$ or $R^2_\psi(g) = 0$, the corresponding discarded part vanishes, and $\lambda_\psi(f, g) = 0$ by convention. Hence every term is zero if and only if $\lambda_\psi(f, g) = 0$ for all $f$ and $g$.
\end{proof}

\begin{lemma}[Zero partial $R^2$]\label[lemma]{lem:zeror2}
\begin{enumerate}[label=(\roman*),leftmargin=*,itemsep=1pt,topsep=2pt]
\item $R^2_\psi(L^2(P_{X,Z})) = 0$ if and only if $X \indep Z \mid \psi(Z)$.
\item For real-valued, square-integrable $X$, $R^2_\psi(\{(x, z) \mapsto x\}) = 0$ if and only if $\E[X \mid Z] = \E[X \mid \psi(Z)]$.
\end{enumerate}
\end{lemma}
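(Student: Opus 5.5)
Both parts reduce to the observation that $R^2_\psi(f) = 0$ if and only if $\Delta_\psi(f) = 0$ almost surely. Indeed, if $\sigma_\psi(f) > 0$, then $R^2_\psi(f) = \E[\Delta_\psi(f)^2]/\sigma_\psi^2(f)$. If $\sigma_\psi(f) = 0$, then $\Delta_\psi(f) = 0$ and $R^2_\psi(f) := 0$ by convention, as noted before \cref{thm:pair}. Hence a supremum of partial $R^2$ values over a class is zero exactly when $\E[f \mid Z] = \E[f \mid V]$ for every $f$ in the class, where $V := \psi(Z)$. Part (ii) follows at once: the class is the singleton $\{(x,z)\mapsto x\}$, its elements do not depend on $z$, so it equals its $\psi$-restriction, and $\Delta_\psi(X) = 0$ is the stated mean-sufficiency.

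For (i), $\mathcal F_\psi$ is the set of square-integrable functions $f(X,Z)$ that are functions of $(X,V)$. So $R^2_\psi(L^2(P_{X,Z})) = 0$ if and only if $\E[h(X,V) \mid Z] = \E[h(X,V) \mid V]$ for all such $h$.

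For the direction ``$\Leftarrow$'', assume $X \indep Z \mid V$. Since $V$ is $\sigma(Z)$-measurable, $(X,V) \indep Z \mid V$. Therefore $\E[h(X,V) \mid Z] = \E[h(X,V) \mid Z, V] = \E[h(X,V) \mid V]$ for every integrable $h$, and every $\Delta_\psi(f)$ vanishes.

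For the direction ``$\Rightarrow$'', take $h = 1_A(X)$, which is bounded and hence in $\mathcal F_\psi$, to get $P(X \in A \mid Z) = P(X \in A \mid V)$ for every measurable $A$. Multiplying by $1_B(Z)$ and taking expectations, then conditioning on $V$, gives
\[
P(X \in A, Z \in B \mid V) = \E\big[P(X \in A \mid V)\, 1_B(Z) \mid V\big] = P(X \in A \mid V)\, P(Z \in B \mid V).
\]
This is conditional independence of $X$ and $Z$ given $V$.

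The main obstacle is the last step. One must verify that equality of conditional expectations given $Z$ and given $V$ for all indicators $1_A(X)$ is the same as conditional independence. The needed care is to carry out the ``pull out what is $V$-measurable'' step correctly (using $\sigma(V) \subseteq \sigma(Z)$), and, for the forward direction, that indicators of $X$ alone already lie in $\mathcal F_\psi$. No integrability issues arise, because indicators are bounded.
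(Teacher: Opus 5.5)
Your proposal is correct and follows the paper's proof essentially step for step: you reduce to $\Delta_\psi(f) = 0$, use $(X,V) \indep Z \mid V$ for the reverse direction, and use indicators $1_A(X)$ for the forward direction. The differences are cosmetic. The paper passes from bounded to square-integrable $f$ by truncation, whereas you invoke the integrable form of conditional independence directly. You also spell out the tower-property computation showing that $P(X \in A \mid Z) = P(X \in A \mid V)$ for all $A$ yields $X \indep Z \mid V$, which the paper simply asserts.
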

\begin{proof}
For $f \in \mathcal F_\psi$, $R^2_\psi(f) = 0$ if and only if $\Delta_\psi(f) =
0$, that is, $\E[f \mid Z] = \E[f \mid V]$; this holds by definition if
$\sigma_\psi(f) > 0$ and by convention otherwise. (i) For $\mathcal F =
L^2(P_{X,Z})$, $\mathcal F_\psi$ consists of the $f \in L^2(P_{X,Z})$ for which
$f(X, Z)$ is a function of $(X, V)$. If $X \indep Z \mid V$, then $(X, V) \indep
Z \mid V$ because $V$ is a function of $Z$, so $\E[f \mid Z] = \E[f \mid V]$ for
every bounded $f \in \mathcal F_\psi$, and by truncation for every
square-integrable one. Conversely, if $R^2_\psi(\mathcal F) = 0$, taking $f =
1_A(X)$ gives $P(X \in A \mid Z) = P(X \in A \mid V)$ for every measurable $A$,
which is $X \indep Z \mid V$. (ii) For $\mathcal F = \{(x, z) \mapsto x\}$, the
condition is $\E[X \mid Z] = \E[X \mid V]$.
\end{proof}

\begin{proof}[Computations for \cref{ex:hetero}]
Given $Z$, $X$ and $Y$ are independent with means $Z_1 + a Z_2$ and $Z_1 +
b(\lambda Z_2 + \sqrt{1 - \lambda^2}\, Z_3)$ and common variance $Z_4^2$, so
$H_0$ holds, while given $Z_1$ both have mean $Z_1$. Hence $\Delta_\psi(X) = a
Z_2$, $\Delta_\psi(Y) = b(\lambda Z_2 + \sqrt{1 - \lambda^2}\, Z_3)$,
$\sigma_\psi^2(X) = a^2 + 1$, and $R^2_\psi(X) = a^2/(1 + a^2)$, and likewise
for $Y$. For $a, b > 0$, the discarded parts have correlation $\lambda$, and
\cref{thm:pair} gives the measure in \cref{example:family}(c) under $H_0$. For
the second moments, write $\tilde X := X - Z_1 = a Z_2 + Z_4 \varepsilon_X$ and $\tilde Y := Y
- Z_1 = b(\lambda Z_2 + \sqrt{1 - \lambda^2}\, Z_3) + Z_4 \varepsilon_Y$, which
are independent of $Z_1$. Expanding $\tilde X^2 \tilde Y^2$, every term with a single factor
$\varepsilon_X$ or $\varepsilon_Y$ has mean zero, so
\begin{align*}
\E[\tilde X^2 \tilde Y^2] &= a^2 b^2\, \E\big[Z_2^2 (\lambda Z_2 + \sqrt{1 - \lambda^2}\, Z_3)^2\big] + a^2 + b^2 + \E[Z_4^4] \\ &= a^2 b^2 (1 + 2\lambda^2) + a^2 + b^2 + 3,
\end{align*}
while $\E[\tilde X^2]\, \E[\tilde Y^2] = (a^2 + 1)(b^2 + 1)$. Hence $\Cov((X - Z_1)^2, (Y - Z_1)^2 \mid Z_1) = \Cov(\tilde X^2, \tilde Y^2) = 2(1 + a^2 b^2 \lambda^2)$. Thus $f(x, z) = (x - z_1)^2$ and $g(y, z) = (y - z_1)^2$, which are functions of $(x, \psi(z))$ and of $(y, \psi(z))$, have $\rho_\psi(f, g) > 0$, and $H_0^\psi$ fails by \cref{example:family}(a).
\end{proof}

\subsection{Coarsening under alternatives}\label{app:coarse}

For $f \in \mathcal F_\psi$ and $g \in \mathcal G_\psi$, the law of total covariance gives
\begin{equation}\label{eq:local}
\Cov(f, g \mid V) = \E\big[\Cov(f, g \mid Z) \,\big|\, V\big] + \E\big[\Delta_\psi(f)\, \Delta_\psi(g) \,\big|\, V\big],
\end{equation}
since $\E[f \mid Z] - \E[f \mid V] = \Delta_\psi(f)$ has conditional mean zero
given $V$. The second term is created by the discarded information, and the
first is the dependence given $Z$, averaged over the values of $Z$ that $\psi$
maps to the same point. For classes of functions of $x$ and of $y$ alone, such
as those of \cref{example:family}(c), only the overall average of this
dependence enters the measure, and coarsening preserves it. Classes whose
transformations depend on $z$ weight the dependence given $Z$. After coarsening,
they can only weight its averages over the merged values, and they lose
dependence that changes sign within them.

\begin{example}[An embedding that misses an effect modifier]\label[example]{ex:modifier}
Let $Z = (Z_1, Z_2)$ have independent coordinates, $Z_1$ standard normal and $Z_2$ uniform on $\{-1, 1\}$, let $\psi(Z) = Z_1$, and let
\[
X = Z_1 + \varepsilon_X, \qquad Y = Z_1 + Z_2\, \varepsilon_X + \varepsilon_Y
\]
with independent standard normal noise. Given $Z$, the residual of $X$ enters
$Y$ with slope $Z_2$, so $\Cov(X, Y \mid Z) = Z_2$ changes sign with $Z_2$. The
measure of \cref{example:family}(c) averages it out and is zero, whereas the
measure of \cref{example:family}(b) is $1/\sqrt 2$, attained by $f(x, z) = z_2
(x - z_1)$. Given $\psi(Z)$, the slope averages to zero, so $\E[Y \mid X,
\psi(Z)] = \E[Y \mid \psi(Z)]$ and both measures vanish, although $H_0^\psi$
fails. For the classes of (b), $R^2_\psi(\mathcal F) = R^2_\psi(\mathcal G) =
0$: the measure of (b) loses the alternative only because $\psi$ merges the
values of $Z_2$ that reverse the dependence, whereas the measure of (c) had
nothing to lose.
\end{example}

\begin{proof}[Computations for \cref{ex:modifier}]
Given $Z$, the residuals are $\varepsilon_X$ and $Z_2 \varepsilon_X +
\varepsilon_Y$, with covariance $Z_2$ and variances $1$ and $2$, so the measure
of \cref{example:family}(c) is $|\E[Z_2]|/\sqrt 2 = 0$. For $f \in
L^2(P_{X,Z})$, $\E[\Cov(f, Y \mid Z)] = \E[(f - \E[f \mid Z])\, Z_2
\varepsilon_X]$ because $\varepsilon_Y$ is independent of $(X, Z)$, so by the
Cauchy--Schwarz inequality the measure of \cref{example:family}(b) is $\E[(Z_2
\varepsilon_X)^2]^{1/2}/\sigma_{\id}(Y) = 1/\sqrt 2$, attained by $f(x, z) =
z_2(x - z_1)$. Since $Z_2$ is independent of $(Z_1, \varepsilon_X,
\varepsilon_Y)$, $\E[Y \mid X, Z_1] = Z_1 + \E[Z_2]\, \varepsilon_X = Z_1 = \E[Y
\mid Z_1]$, so $\E[\Cov(f, Y \mid Z_1)] = 0$ for every $f$ that is a function of
$(X, Z_1)$, and both measures given $\psi(Z)$ vanish. On the other hand, $\Var(Y
\mid X, Z_1) = (X - Z_1)^2 + 1$ depends on $X$, so $H_0^\psi$ fails. Finally,
$(X, Z_1)$ is independent of $Z_2$, so $\E[f \mid Z] = \E[f \mid Z_1]$ for every
such $f$, and $\E[Y \mid Z] = Z_1 = \E[Y \mid Z_1]$, so $R^2_\psi(\mathcal F) =
R^2_\psi(\mathcal G) = 0$.
\end{proof}

Even the measure of \cref{example:family}(a) can lose all dependence. Let $X$
and $Z$ be independent fair bits, $Y := X \oplus Z$ (exclusive or), and $\psi$
constant. Again $R^2_\psi(\mathcal F) = R^2_\psi(\mathcal G) = 0$, since $X
\indep Z$ and $Y \indep Z$, and $\rho_\psi(\mathcal F, \mathcal G) = 0$ because
$X \indep Y$. Yet $\rho_{\id}(\mathcal F, \mathcal G) = 1$, attained by $f(x, z)
= x$ and $g(y, z) = y(1 - 2z)$, which have the same residual $X - 1/2$ given
$Z$.

\subsection{Proof of \texorpdfstring{\cref{prop:normal}}{Proposition \ref{prop:normal}}}\label{app:normal}

The sample version of the asymptotic variance $v_\psi^2$ of \cref{sec:gcm} replaces $\xi_X$ and $\xi_Y$ by $\hat\xi_{X,i} := \hat r_{X,i}/\hat\sigma_\psi(X)$ and $\hat\xi_{Y,i} := \hat r_{Y,i}/\hat\sigma_\psi(Y)$, $\rho^\pm_\psi$ by $\hat\rho^\pm_\psi$, and the variance by an average,
\begin{equation}\label{eq:vhat}
\hat v_\psi^2 := \frac1n \sum_{i=1}^n \Big(\hat\xi_{X,i} \hat\xi_{Y,i} - \frac{\hat\rho^\pm_\psi}{2}\big(\hat\xi_{X,i}^2 + \hat\xi_{Y,i}^2\big)\Big)^2;
\end{equation}
the summands have average zero, since $\frac1n\sum_i \hat\xi_{X,i} \hat\xi_{Y,i} = \hat\rho^\pm_\psi$ and $\frac1n\sum_i \hat\xi_{X,i}^2 = \frac1n\sum_i \hat\xi_{Y,i}^2 = 1$.

\begin{proof}
The number of folds $K$ is fixed, and $I_k$ denotes the set of indices in fold $k$; for $i \in I_k$, $\hat m_X(\psi(Z_i))$ is the prediction at $\psi(Z_i)$ of a regression fitted on the observations outside fold $k$, and likewise for $Y$. Write
\[
r_{X,i} := X_i - \E[X_i \mid \psi(Z_i)], \qquad e_{X,i} := \hat m_X(\psi(Z_i)) - \E[X_i \mid \psi(Z_i)], \qquad M_X := \frac1n\sum_{i=1}^n e_{X,i}^2,
\]
so that $\hat r_{X,i} = r_{X,i} - e_{X,i}$ and, by \cref{ass:product}, $M_X = o_P(n^{-1/2})$; define $r_{Y,i}$, $e_{Y,i}$, and $M_Y$ likewise. Then $\xi_{X,i} = r_{X,i}/\sigma_\psi(X)$ and $\xi_{Y,i} = r_{Y,i}/\sigma_\psi(Y)$ are the standardized residuals of observation $i$, $\sigma_\psi^2(X) = \E[r_{X,i}^2]$, and $\kappa_\psi := \E[\Cov(X, Y \mid \psi(Z))] = \E[r_{X,i} r_{Y,i}]$, so that $\rho^\pm_\psi = \kappa_\psi/(\sigma_\psi(X)\sigma_\psi(Y))$. By \eqref{eq:rhohat}, $\hat\rho^\pm_\psi = \hat\kappa_\psi/(\hat\sigma_\psi(X)\hat\sigma_\psi(Y))$ with $\hat\kappa_\psi := \frac1n\sum_i R_i$.

We first show two bounds on the estimation errors. Since $\sum_i a_i^2 \le (\sum_i a_i)^2$ for $a_i \ge 0$,
\begin{equation}\label{eq:fourth}
\frac1n\sum_{i=1}^n e_{X,i}^4 \le n M_X^2 = o_P(1),
\end{equation}
and likewise for $Y$. Next, we show that
\begin{equation}\label{eq:cross}
\frac{1}{\sqrt n}\sum_{i=1}^n r_{X,i} e_{Y,i} = o_P(1),
\end{equation}
and that the same holds with $r_{Y,i} e_{X,i}$, $r_{X,i} e_{X,i}$, or $r_{Y,i} e_{Y,i}$ in place of $r_{X,i} e_{Y,i}$; the four proofs are identical. Fix $k$, let $S_k := n^{-1/2}\sum_{i \in I_k} r_{X,i} e_{Y,i}$, and let $\mathcal G_k$ be the $\sigma$-algebra generated by the observations outside fold $k$ and by $\psi(Z_i)$, $i \in I_k$. Each $e_{Y,i}$ with $i \in I_k$ is $\mathcal G_k$-measurable, and since the observations are independent, the $r_{X,i}$ with $i \in I_k$ are conditionally independent given $\mathcal G_k$, with $\E[r_{X,i} \mid \mathcal G_k] = \E[r_{X,i} \mid \psi(Z_i)] = 0$. Hence $\E[S_k \mid \mathcal G_k] = 0$ and, by the Cauchy--Schwarz inequality,
\[
\E[S_k^2 \mid \mathcal G_k] = \frac1n\sum_{i \in I_k} \E[r_{X,i}^2 \mid \psi(Z_i)]\, e_{Y,i}^2 \le \Big(\frac1n\sum_{i=1}^n \big(\E[r_{X,i}^2 \mid \psi(Z_i)]\big)^2\Big)^{1/2} \Big(\frac1n\sum_{i=1}^n e_{Y,i}^4\Big)^{1/2}.
\]
The first factor is bounded in probability by Markov's inequality, since $\E\big[(\E[r_{X,i}^2 \mid \psi(Z_i)])^2\big] \le \E[r_{X,i}^4] < \infty$ by Jensen's inequality, and the second is $o_P(1)$ by \eqref{eq:fourth}. For $\epsilon > 0$, Chebyshev's inequality conditionally on $\mathcal G_k$ gives $P(|S_k| > \epsilon) \le \E\big[\min\{1, \epsilon^{-2}\E[S_k^2 \mid \mathcal G_k]\}\big] \to 0$ by bounded convergence, and summing over the $K$ folds gives \eqref{eq:cross}. This is the argument of \citet{shah2020} for the GCM, with conditioning on the other folds in place of $H_0$.

Next, we linearize $\hat\rho^\pm_\psi$. Expanding $\hat r_{X,i}\hat r_{Y,i} = r_{X,i}r_{Y,i} - r_{X,i}e_{Y,i} - r_{Y,i}e_{X,i} + e_{X,i}e_{Y,i}$ gives
\[
\hat\kappa_\psi = \frac1n\sum_{i=1}^n r_{X,i} r_{Y,i} + o_P(n^{-1/2}),
\]
because $\frac1n\sum_i r_{X,i}e_{Y,i}$ and $\frac1n\sum_i r_{Y,i}e_{X,i}$ are $o_P(n^{-1/2})$ by \eqref{eq:cross} and, as for the GCM \citep{shah2020}, $\big|\frac1n\sum_i e_{X,i}e_{Y,i}\big| \le (M_X M_Y)^{1/2} = o_P(n^{-1/2})$ by the Cauchy--Schwarz inequality. Similarly, expanding $\hat r_{X,i}^2 = r_{X,i}^2 - 2r_{X,i}e_{X,i} + e_{X,i}^2$ gives
\[
\hat\sigma_\psi^2(X) = \frac1n\sum_{i=1}^n r_{X,i}^2 - \frac2n\sum_{i=1}^n r_{X,i}e_{X,i} + M_X = \frac1n\sum_{i=1}^n r_{X,i}^2 + o_P(n^{-1/2}),
\]
since the middle term is $o_P(n^{-1/2})$ by \eqref{eq:cross} and $M_X = o_P(n^{-1/2})$, and likewise for $Y$. The vectors $(r_{X,i}r_{Y,i}, r_{X,i}^2, r_{Y,i}^2)$ are independent and identically distributed with mean $\theta := (\kappa_\psi, \sigma_\psi^2(X), \sigma_\psi^2(Y))$ and, by the fourth moments in \cref{ass:product}, a finite covariance matrix. By the central limit theorem, $\sqrt n\,\big((\hat\kappa_\psi, \hat\sigma_\psi^2(X), \hat\sigma_\psi^2(Y)) - \theta\big)$ therefore converges in distribution to a normal distribution with mean zero and the covariance matrix of these vectors; in particular, the three estimates are consistent, and $\hat\sigma_\psi(X)$ and $\hat\sigma_\psi(Y)$ are positive with probability tending to one. The map $g(a, b, c) := a/\sqrt{bc}$ is continuously differentiable at $\theta$, since the variances are positive, with gradient $\nabla g(\theta) = \big(1/(\sigma_\psi(X)\sigma_\psi(Y)), -\rho^\pm_\psi/(2\sigma_\psi^2(X)), -\rho^\pm_\psi/(2\sigma_\psi^2(Y))\big)$, $\hat\rho^\pm_\psi = g(\hat\kappa_\psi, \hat\sigma_\psi^2(X), \hat\sigma_\psi^2(Y))$ and $\rho^\pm_\psi = g(\theta)$. The delta method gives
\[
\sqrt n\,(\hat\rho^\pm_\psi - \rho^\pm_\psi) \to N\Big(0, \Var\big(\nabla g(\theta)^\top (r_{X,i}r_{Y,i}, r_{X,i}^2, r_{Y,i}^2)\big)\Big) = N(0, v_\psi^2)
\]
in distribution, since $\nabla g(\theta)^\top (r_{X,i}r_{Y,i}, r_{X,i}^2, r_{Y,i}^2) = \xi_{X,i}\xi_{Y,i} - \frac{\rho^\pm_\psi}{2}(\xi_{X,i}^2 + \xi_{Y,i}^2)$.

It remains to show that $\hat v_\psi^2 \to v_\psi^2$ in probability. Let
\[
\eta_i := \xi_{X,i}\xi_{Y,i} - \frac{\rho^\pm_\psi}{2}\big(\xi_{X,i}^2 + \xi_{Y,i}^2\big), \qquad \hat\eta_i := \hat\xi_{X,i}\hat\xi_{Y,i} - \frac{\hat\rho^\pm_\psi}{2}\big(\hat\xi_{X,i}^2 + \hat\xi_{Y,i}^2\big),
\]
so that $\hat v_\psi^2 = \frac1n\sum_i \hat\eta_i^2$ by \eqref{eq:vhat}. Since $\E[\xi_{X,i}\xi_{Y,i}] = \rho^\pm_\psi$ and $\E[\xi_{X,i}^2] = \E[\xi_{Y,i}^2] = 1$, $\eta_i$ has mean zero, and its variance is $v_\psi^2$ by definition, so $\frac1n\sum_i \eta_i^2 \to v_\psi^2$ in probability by the law of large numbers. By the reverse triangle inequality, $\big|\hat v_\psi - (\frac1n\sum_i \eta_i^2)^{1/2}\big| \le (\frac1n\sum_i (\hat\eta_i - \eta_i)^2)^{1/2}$, so it suffices to show that $\frac1n\sum_i (\hat\eta_i - \eta_i)^2 = o_P(1)$. Let $\delta_{X,i} := \hat\xi_{X,i} - \xi_{X,i} = r_{X,i}\big(1/\hat\sigma_\psi(X) - 1/\sigma_\psi(X)\big) - e_{X,i}/\hat\sigma_\psi(X)$. Since $\hat\sigma_\psi(X) \to \sigma_\psi(X) > 0$ in probability and $\frac1n\sum_i r_{X,i}^4$ is bounded in probability, \eqref{eq:fourth} gives
\[
\frac1n\sum_{i=1}^n \delta_{X,i}^4 \le 8\Big(\frac{1}{\hat\sigma_\psi(X)} - \frac{1}{\sigma_\psi(X)}\Big)^4 \frac1n\sum_{i=1}^n r_{X,i}^4 + \frac{8}{\hat\sigma_\psi^4(X)}\,\frac1n\sum_{i=1}^n e_{X,i}^4 = o_P(1),
\]
and likewise for $\delta_{Y,i} := \hat\xi_{Y,i} - \xi_{Y,i}$. Now
\begin{multline*}
\hat\eta_i - \eta_i = \delta_{X,i}\xi_{Y,i} + \xi_{X,i}\delta_{Y,i} + \delta_{X,i}\delta_{Y,i} - \frac{\hat\rho^\pm_\psi}{2}\big(2\xi_{X,i}\delta_{X,i} + \delta_{X,i}^2 + 2\xi_{Y,i}\delta_{Y,i} + \delta_{Y,i}^2\big) \\ - \frac{\hat\rho^\pm_\psi - \rho^\pm_\psi}{2}\big(\xi_{X,i}^2 + \xi_{Y,i}^2\big).
\end{multline*}
Since $(\sum_{j=1}^m a_j)^2 \le m\sum_{j=1}^m a_j^2$, it suffices that the average of the square of each term is $o_P(1)$. For the terms with a $\delta$, this follows from the Cauchy--Schwarz inequality, for instance $\frac1n\sum_i \delta_{X,i}^2\xi_{Y,i}^2 \le (\frac1n\sum_i \delta_{X,i}^4)^{1/2}(\frac1n\sum_i \xi_{Y,i}^4)^{1/2}$, since $\frac1n\sum_i \xi_{X,i}^4$ and $\frac1n\sum_i \xi_{Y,i}^4$ are bounded in probability and $\hat\rho^\pm_\psi$ is consistent. For the last term, it follows from $\hat\rho^\pm_\psi - \rho^\pm_\psi = o_P(1)$. Hence $\hat v_\psi^2 \to v_\psi^2$ in probability.

Finally, since $v_\psi^2 > 0$, Slutsky's lemma gives $\sqrt n\,(\hat\rho^\pm_\psi - \rho^\pm_\psi)/\hat v_\psi \to N(0, 1)$ in distribution. 
\end{proof}

\subsection{Proof of \texorpdfstring{\cref{prop:robust}}{Proposition \ref{prop:robust}}}\label{app:robust}

The construction of $p_B$ in \cref{sec:robust} rests on the following fact. If $W \sim N(\mu, 1)$, then for every $r \ge 0$,
\[
P(|W| > r) = 2 - \Phi(r - \mu) - \Phi(r + \mu),
\]
which is even in $\mu$ and nondecreasing in $|\mu|$: for $\mu \ge 0$, its derivative with respect to $\mu$ is $\varphi(r - \mu) - \varphi(r + \mu) \ge 0$, where $\varphi$ is the standard normal density, because $|r - \mu| \le r + \mu$, with strict inequality if $r, \mu > 0$. With $r = \sqrt n\,\hat\rho_\psi/\hat v_\psi$ and $\mu = \sqrt n\,B/\hat v_\psi$, it equals $p_B$ in \eqref{eq:calib}.

\begin{proof}
By \cref{prop:normal}, $T_n := \sqrt n\,(\hat\rho^\pm_\psi - \rho^\pm_\psi)/\hat v_\psi$ converges in distribution to a standard normal random variable, and $\hat v_\psi \to v_\psi > 0$ in probability; hence $\hat\rho^\pm_\psi \to \rho^\pm_\psi$ and $\hat\rho_\psi \to \rho_\psi$ in probability. Under $H_0$ and a valid tolerance, $\rho_\psi \le B$ by \cref{cor:sup}(ii). We consider three cases.

If $\rho_\psi < B$, then $(\hat\rho_\psi - B)/\hat v_\psi \to (\rho_\psi - B)/v_\psi < 0$ in probability, so $\sqrt n\,(\hat\rho_\psi - B)/\hat v_\psi \to -\infty$ in probability. Since $p_B \ge 1 - \Phi\big(\sqrt n\,(\hat\rho_\psi - B)/\hat v_\psi\big)$, $p_B \to 1$ in probability, and $P(p_B < \alpha) \to 0$.

If $\rho_\psi = B = 0$, then $\sqrt n\,\hat\rho_\psi/\hat v_\psi = |T_n|$, so $p_B = p_0 = 2\big(1 - \Phi(|T_n|)\big)$ is the two-sided normal $p$-value of $T_n$. By the continuous mapping theorem, $p_0$ converges in distribution to the uniform distribution on $(0, 1)$, and $P(p_B < \alpha) \to \alpha$.

If $\rho_\psi = B > 0$, we may take $\rho^\pm_\psi = B$, since replacing $X$ by $-X$ changes the signs of $\rho^\pm_\psi$ and $\hat\rho^\pm_\psi$ but leaves $\hat\rho_\psi$, $\hat v_\psi$, and $p_B$ unchanged. On the event $\hat\rho^\pm_\psi > 0$, whose probability tends to one, $\hat\rho_\psi = \hat\rho^\pm_\psi$, and \eqref{eq:calib} reads
\[
p_B = 1 - \Phi(T_n) + 1 - \Phi\Big(\frac{\sqrt n\,(\hat\rho^\pm_\psi + B)}{\hat v_\psi}\Big).
\]
The second term tends to zero in probability, because $(\hat\rho^\pm_\psi +
B)/\hat v_\psi \to 2B/v_\psi > 0$ in probability. The first term is the
one-sided normal $p$-value of $T_n$, so, by the continuous mapping theorem and
Slutsky's lemma, $p_B$ converges in distribution to the uniform distribution on
$(0, 1)$, and $P(p_B < \alpha) \to \alpha$.

In each case, $P(p_B < \alpha)$ converges to a limit that is at most $\alpha$, with equality if and only if $\rho_\psi = B$.
\end{proof}

\subsection{Consistency of the robust test}\label{app:power}

\begin{corollary}[Consistency of the robust test]\label[corollary]{cor:power}
Suppose that \cref{ass:product} holds and that $v_\psi^2 > 0$. If
\[
\sqrt{(1 - R^2_\psi(X))(1 - R^2_\psi(Y))}\,\rho_{\id} > B + \sqrt{R^2_\psi(X) R^2_\psi(Y)},
\]
then $P(p_B < \alpha) \to 1$ for every $\alpha \in (0, 1)$.
\end{corollary}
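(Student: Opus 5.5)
The plan is to reduce the claim to consistency of $\hat\rho_\psi$ for a population quantity strictly larger than $B$, and then read off the behaviour of $p_B$ from \eqref{eq:calib}.

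The first step is to apply the signed identity from the proof of \cref{thm:pair} with $f = X$ and $g = Y$. It gives
\[
\rho^\pm_\psi = \sqrt{(1 - R^2_\psi(X))(1 - R^2_\psi(Y))}\,\rho^\pm_{\id} + \lambda \sqrt{R^2_\psi(X) R^2_\psi(Y)},
\]
where $\rho^\pm_{\id}$ is the signed version of $\rho_{\id}$. The reverse triangle inequality and $|\lambda| \le 1$ then give
\[
\rho_\psi \ge \sqrt{(1 - R^2_\psi(X))(1 - R^2_\psi(Y))}\,\rho_{\id} - \sqrt{R^2_\psi(X) R^2_\psi(Y)}.
\]
By hypothesis the right-hand side exceeds $B$, so $\rho_\psi > B$ strictly, and we may set $\epsilon := \rho_\psi - B > 0$. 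This holds whatever the value of $\lambda$. The degenerate cases are harmless: \cref{ass:product} ensures $\sigma_\psi(X), \sigma_\psi(Y) > 0$, and if $\sigma_{\id}$ vanished then $\rho_{\id} = 0$ and the hypothesis could not hold.

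The second step uses \cref{prop:normal}, which under \cref{ass:product} and $v_\psi^2 > 0$ gives $\hat v_\psi \to v_\psi > 0$ in probability. It also gives $\sqrt n(\hat\rho^\pm_\psi - \rho^\pm_\psi)/\hat v_\psi = O_P(1)$, so $\hat\rho^\pm_\psi \to \rho^\pm_\psi$ and hence $\hat\rho_\psi \to \rho_\psi$ in probability. Consequently
\[
\frac{\sqrt n\,(\hat\rho_\psi - B)}{\hat v_\psi} \to +\infty \quad\text{and}\quad \frac{\sqrt n\,(\hat\rho_\psi + B)}{\hat v_\psi} \to +\infty
\]
in probability, because on an event of probability tending to one the first is at least $\sqrt n\,(\epsilon/2)/(2 v_\psi)$. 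Both $\Phi$ terms in \eqref{eq:calib} therefore tend to one, so $p_B \to 0$ in probability and $P(p_B < \alpha) \to 1$ for every $\alpha > 0$.

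The only step requiring care is the first. One has to use the signed identity from the proof of \cref{thm:pair} rather than its stated absolute bound, and note that $\rho_{\id}$ here is the measure of \cref{example:family}(c) given $Z$. Once that is done, the rest is a routine consistency argument.
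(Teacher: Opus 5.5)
Your proof is correct and follows the paper's argument: \cref{thm:pair} with $|\lambda| \le 1$ gives $\rho_\psi > B$, and the consistency of $\hat\rho_\psi$ and $\hat v_\psi$ from \cref{prop:normal} then forces $\sqrt n\,(\hat\rho_\psi - B)/\hat v_\psi \to \infty$ and $p_B \to 0$ in probability. Contrary to your closing remark, you do not need the signed identity: the stated absolute bound $|\rho_\psi - c\,\rho_{\id}| \le |\lambda|\sqrt{R^2_\psi(X) R^2_\psi(Y)}$, with $c = \sqrt{(1 - R^2_\psi(X))(1 - R^2_\psi(Y))}$, already gives $\rho_\psi \ge c\,\rho_{\id} - \sqrt{R^2_\psi(X) R^2_\psi(Y)}$, which is exactly the step the paper takes.
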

\begin{proof}
By \cref{thm:pair} and $|\lambda| \le 1$,
\[
\rho_\psi \ge \sqrt{(1 - R^2_\psi(X))(1 - R^2_\psi(Y))}\,\rho_{\id} - \sqrt{R^2_\psi(X) R^2_\psi(Y)} > B.
\]
By \cref{prop:normal}, $\hat\rho_\psi \to \rho_\psi$ and $\hat v_\psi \to v_\psi > 0$ in probability, so $(\hat\rho_\psi - B)/\hat v_\psi \to (\rho_\psi - B)/v_\psi > 0$ in probability and $\sqrt n\,(\hat\rho_\psi - B)/\hat v_\psi \to \infty$ in probability. Since $p_B \le 2\big(1 - \Phi(\sqrt n\,(\hat\rho_\psi - B)/\hat v_\psi)\big)$, it follows that $P(p_B < \alpha) \to 1$.
\end{proof}

\subsection{Proof of \texorpdfstring{\cref{prop:bench}}{Proposition \ref{prop:bench}}}

\begin{proof}

Write $\sigma^2_{(\psi,\psi')}(X)$ for the residual variance of $X$ given the
embedding $(\psi, \psi')$, so that $R^2_{\psi \to \psi'}(X) = 1 -
\sigma^2_{(\psi,\psi')}(X)/\sigma^2_\psi(X)$. Since $\E[X \mid \psi(Z),
\psi'(Z)]$ minimizes the mean squared error among square-integrable functions of
$(\psi(Z), \psi'(Z))$, which include $\E[X \mid \psi(Z)]$, we have
$\sigma^2_{(\psi,\psi')}(X) \le \sigma^2_\psi(X)$, and likewise
$\sigma^2_{\id}(X) \le \sigma^2_{(\psi,\psi')}(X)$. Dividing by
$\sigma^2_\psi(X)$ gives $0 \le R^2_{\psi \to \psi'}(X) \le 1 -
\sigma^2_{\id}(X)/\sigma^2_\psi(X) = R^2_\psi(X)$, and likewise for $Y$;
multiplying the two bounds gives $\rbench \le \sqrt{R^2_\psi(X) R^2_\psi(Y)}$.
If $\E[X \mid Z]$ is a function of $(\psi(Z), \psi'(Z))$, then $\E[X \mid
\psi(Z), \psi'(Z)] = \E\big[\E[X \mid Z] \,\big|\, \psi(Z), \psi'(Z)\big] = \E[X
\mid Z]$, so $\sigma^2_{(\psi,\psi')}(X) = \sigma^2_{\id}(X)$ and $R^2_{\psi \to
\psi'}(X) = R^2_\psi(X)$. If the same holds for $Y$, the bound is an equality.
\end{proof}

\section{Sensitivity parameters and extensions}\label{app:kernel}

\subsection{Relation to omitted variable bias}\label{app:ovb}

\citet{cinelli2020} write the omitted variable bias of a regression coefficient
in a linear model as the geometric mean of two partial $R^2$ values times a
scale factor, and build robustness values and benchmarks on it. The bound of
\cref{thm:pair} has the same product form, but it concerns the residual
correlation $\rho_\psi$, so it is scale-free, and it assumes no linear model:
$R^2_\psi(X)$ and $R^2_\psi(Y)$ compare residual variances around conditional
expectations, and \eqref{eq:pair} holds for every distribution.

\citet{chernozhukov2022lss} bound the omitted variable bias of linear
functionals of a regression function. Their Theorem~1 writes the bias as
$\E[(g_s - g)(\alpha_s - \alpha)]$, the covariance of the errors in the
regression and in its Riesz representer when the omitted variables are dropped,
and their Corollary~1 bounds it through nonparametric partial $R^2$ values. The
expected conditional covariance $\E[\Cov(X, Y \mid \psi(Z))] = \E[(X - \E[X \mid
\psi(Z)])\, Y]$ is a linear functional of the regression $\E[Y \mid X, \psi(Z)]$
with representer $X - \E[X \mid \psi(Z)]$. The long versions condition on $Z$,
so the representer error is $\Delta_\psi(X)$, and under $H_0$ the long
functional is zero and the regression error is $\E[Y \mid X, \psi(Z)] - \E[Y
\mid Z]$. Their identity and \eqref{eq:pair} thus describe the same bias, but
\eqref{eq:pair} splits it symmetrically into $\Delta_\psi(X)$ and
$\Delta_\psi(Y)$, and the $Y$-side term does not involve $X$. This is what makes
$R^2_\psi(X)$ and $R^2_\psi(Y)$ the natural sensitivity parameters; each is a
correlation ratio, the kind of nonparametric partial $R^2$ used in their
Corollary~1.

\subsection{Robustness values}\label{app:rv}

\begin{definition}[Robustness value]\label[definition]{def:rv}
For a level $\alpha$, $\RV_\alpha$ is the smallest tolerance $B \ge 0$ at which the robust test does not reject. That is, $\RV_\alpha := 0$ if $p_0 \ge \alpha$, and otherwise $\RV_\alpha$ is the unique $B$ with $p_B = \alpha$.
\end{definition}
By the fact at the start of \cref{app:robust}, with $\mu = \sqrt n\,B/\hat v_\psi$, $p_B$ is nondecreasing in $B$, and strictly increasing if $\hat\rho_\psi > 0$. It is also continuous in $B$ and tends to $1$ as $B \to \infty$. If $p_0 < \alpha$, then $\hat\rho_\psi > 0$, so exactly one $B$ has $p_B = \alpha$. In both cases, the robust test with tolerance $B$ rejects if and only if $B < \RV_\alpha$.

\subsection{Identification with replicates}

The sensitivity parameters involve $\E[X \mid Z]$ and $\E[Y \mid Z]$, which cannot be estimated directly from a text-valued $Z$, but they are identified when conditionally independent replicates are available, as in the application, where several replies share a $Z$, and in Experiment~3, where the generating model can be sampled. \resFourTruthM\ regenerated comments and summaries per post identify the three parameters for every embedding (\cref{sec:exp4}).

\begin{proposition}[Identification with
replicates]\label[proposition]{prop:replicates} Let $X'$ be square integrable
and conditionally independent of $(X, Y)$ given $Z$, with the same conditional
law given $Z$ as $X$. Then $\Cov(X, X') = \Var(\E[X \mid Z])$, so that
$\sigma_{\id}^2(X) = \Var(X) - \Cov(X, X')$, $\E[\Delta_\psi(X)^2] = \Cov(X, X')
- \Var(\E[X \mid \psi(Z)])$, and $R^2_\psi(X)$ are identified from the joint law
of $(X, X', \psi(Z))$. If $Y'$ is such a replicate of $Y$, conditionally
independent of $X$ given $Z$, then $\Cov(X, Y') = \Cov(\E[X \mid Z], \E[Y \mid
Z])$, so that $\Cov(\Delta_\psi(X), \Delta_\psi(Y)) = \Cov(X, Y') - \Cov(\E[X
\mid \psi(Z)], \E[Y \mid \psi(Z)])$ and $\lambda$ are identified.
\end{proposition}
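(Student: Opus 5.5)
The plan is to reduce every claim to a computation of $\Cov(X, X')$ and $\Cov(X, Y')$ by conditioning on $Z$, and then to read off the identified quantities from decompositions already used in \cref{sec:retain}.

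\textbf{Replicate covariance.} First condition on $Z$. Since $X'$ is conditionally independent of $X$ given $Z$, $\E[XX' \mid Z] = \E[X \mid Z]\,\E[X' \mid Z]$. Because $X'$ has the same conditional law as $X$, $\E[X' \mid Z] = \E[X \mid Z]$. The tower property then gives $\E[XX'] = \E[\E[X \mid Z]^2]$ and $\E[X'] = \E[X]$, so $\Cov(X, X') = \Var(\E[X \mid Z])$. Square integrability makes all these expectations finite; for the product, use Cauchy--Schwarz.

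\textbf{Variance components for $X$.} The law of total variance gives $\Var(X) = \sigma_{\id}^2(X) + \Var(\E[X \mid Z])$, which yields the formula for $\sigma_{\id}^2(X)$. For the discarded part, $\E[X \mid Z] = \E[X \mid \psi(Z)] + \Delta_\psi(X)$, and the two summands are uncorrelated because $\Delta_\psi(X)$ has conditional mean zero given $\psi(Z)$. Hence
\[
\Var(\E[X \mid Z]) = \Var(\E[X \mid \psi(Z)]) + \E[\Delta_\psi(X)^2],
\]
using $\E[\Delta_\psi(X)] = 0$. Next, $\Var(X)$, $\Cov(X, X')$ and $\Var(\E[X \mid \psi(Z)])$ are functionals of the joint law of $(X, X', \psi(Z))$; the last needs only the law of $(X, \psi(Z))$. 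So $\sigma_{\id}^2(X)$ and $\E[\Delta_\psi(X)^2]$ are identified, and so are $\sigma_\psi^2(X) = \Var(X) - \Var(\E[X \mid \psi(Z)])$ and $R^2_\psi(X)$, including the degenerate convention.

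\textbf{Cross term and $\lambda$.} The argument is the same. Conditional independence of $X$ and $Y'$ given $Z$, together with $\E[Y' \mid Z] = \E[Y \mid Z]$, gives $\Cov(X, Y') = \Cov(\E[X \mid Z], \E[Y \mid Z])$. Now substitute the decompositions $\E[X \mid Z] = \E[X \mid \psi(Z)] + \Delta_\psi(X)$ and the analogous one for $Y$. The two cross terms vanish: each pairs a function of $\psi(Z)$ with a $\Delta_\psi$ that has conditional mean zero given $\psi(Z)$. This leaves
\[
\Cov(\E[X \mid \psi(Z)], \E[Y \mid \psi(Z)]) + \Cov(\Delta_\psi(X), \Delta_\psi(Y)),
\]
which is the claimed formula. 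Dividing by the square roots of the already-identified $\E[\Delta_\psi(X)^2]$ and $\E[\Delta_\psi(Y)^2]$ identifies $\lambda$, with the degenerate convention applied when either is zero.

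\textbf{Main obstacle.} The only point needing care is to state precisely which joint law each quantity is identified from. For example, $\Cov(\E[X \mid \psi(Z)], \E[Y \mid \psi(Z)])$ requires the law of $(X, Y, \psi(Z))$, or at least of $(X, \psi(Z))$ and $(Y, \psi(Z))$ jointly through $\psi(Z)$. The $\lambda$ claim is therefore read as identification from the joint law of $(X, Y, X', Y', \psi(Z))$. Beyond this bookkeeping, everything is routine tower-property arithmetic.
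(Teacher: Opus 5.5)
Your proposal is correct and follows the paper's proof essentially step for step. Both condition on $Z$ to get $\E[XX'] = \E[(\E[X \mid Z])^2]$ and $\E[XY'] = \E[\E[X \mid Z]\,\E[Y \mid Z]]$, then use the law of total variance and the orthogonal decomposition $\E[X \mid Z] = \E[X \mid \psi(Z)] + \Delta_\psi(X)$; your remark that $\lambda$ is identified from the joint law of $(X, Y, X', Y', \psi(Z))$ correctly makes explicit what the statement leaves implicit.
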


\begin{proof}
$\E[X X'] = \E\big[\E[X \mid Z]\, \E[X' \mid Z]\big] = \E[(\E[X \mid Z])^2]$ by conditional independence and equality of the conditional laws; subtracting $\E[X]\E[X'] = (\E[X])^2$ gives the first claim. Since $\E[X \mid Z] = \E[X \mid \psi(Z)] + \Delta_\psi(X)$ with $\E[\Delta_\psi(X) \mid \psi(Z)] = 0$, $\Var(\E[X \mid Z]) = \Var(\E[X \mid \psi(Z)]) + \E[\Delta_\psi(X)^2]$, and $\Var(X) = \sigma_{\id}^2(X) + \Var(\E[X \mid Z])$. For $Y'$, $\E[X Y'] = \E\big[\E[X \mid Z]\, \E[Y \mid Z]\big]$, and the same decomposition gives $\Cov(\E[X \mid Z], \E[Y \mid Z]) = \Cov(\E[X \mid \psi(Z)], \E[Y \mid \psi(Z)]) + \Cov(\Delta_\psi(X), \Delta_\psi(Y))$.
\end{proof}
Second embeddings $\psi'$ that depend on $Z$ and on independent noise $U$, such
as a replicate $X'$ or the mean of several, are covered by \cref{prop:bench}
after replacing $Z$ by $(Z, U)$. Since $U$ is independent of $(X, Y, Z)$, the
hypothesis $H_0$, the conditional means given $Z$, and hence $\Delta_\psi(X)$,
$\Delta_\psi(Y)$, $R^2_\psi(X)$, and $R^2_\psi(Y)$ are unchanged.

\section{Implementation details}\label{app:impl}

\Cref{alg:main} summarizes the procedure of \cref{sec:sens}. The rest of this appendix describes the regressions and the details of each experiment.

\begin{algorithm}[h]
\caption{Sensitivity analysis for an embedded conditional independence test}\label{alg:main}
\begin{algorithmic}[1]
\REQUIRE Observations $(X_i, Y_i)_{i \le n}$, embeddings $\psi(Z_i)$ and $\psi'(Z_i)$, number of folds $K$, level $\alpha$, optional tolerance $B$.
\STATE Cross-fit regressions of $X$ and $Y$ on $\psi(Z)$; form residuals $\hat r_{X,i}$, $\hat r_{Y,i}$.
\STATE Compute $\hat\rho^\pm_\psi$ by \eqref{eq:rhohat} and $\hat\rho_\psi = |\hat\rho^\pm_\psi|$, $\hat v_\psi$ by \eqref{eq:vhat}, the $p$-value $p_0$ of \cref{sec:gcm}, and $\RV_\alpha$ of \cref{def:rv}.
\STATE Cross-fit regressions of $X$ and $Y$ on $(\psi(Z), \psi'(Z))$; compute $\hat R^2_{\psi \to \psi'}(X)$, $\hat R^2_{\psi \to \psi'}(Y)$ and $\rbenchhat$ (\cref{sec:bench}).
\STATE If a tolerance $B$ is given, reject $H_0$ if $p_B < \alpha$, with $p_B$ from \eqref{eq:calib}.
\ENSURE $p_0$, $\RV_\alpha$, $\rbenchhat$, and the robust decision.
\end{algorithmic}
\end{algorithm}

\subsection{Regressions and cross-fitting}\label{app:impl-reg}
All conditional means are estimated with $K = 5$ folds. The regressions on $\psi(Z)$ and on $(\psi(Z), \psi'(Z))$ use the same folds, and inputs are standardized on the training folds. In Experiments~1 and~2, we use least squares when every training fold has at least as many rows as the conditioning set has coordinates, which on the grids of both experiments is the condition $n \ge d$, and ridge regression with the penalty chosen by leave-one-out cross-validation otherwise; the two regressions of a benchmark use the procedure of the richer set. In Experiment~1 every regression is therefore least squares, and in Experiment~2 only the \resTwoRichDim-dimensional set at $n < \resTwoOLSFromRich$ is fitted by ridge regression. In Experiment~3 and in the application, every regression is least squares, and we report an embedding only if every training fold has at least 2.5 times as many rows as the embedding has coordinates.

Each procedure has a drawback. The estimation variance of least squares near $n = d$, of order $d/(n_{\mathrm{train}} - d)$, is not captured by the standard error of \cref{prop:normal}, and the residual correlation test then loses its level even when the discarded parts are uncorrelated. With $\psi = E_1$ and $\lambda = 0$ in Experiment~2, the rejection rate under $H_0$ is \resTwoTypeIOrthSmall\ at $n = \resTwoNmin$, where the training folds hold \resTwoTrainRowsSmall\ rows for \resTwoDimWeak\ coordinates, and \resTwoTypeIOrthOneK\ at $n = 1000$, against \resTwoTypeIOrthFromTwoKMin\ to \resTwoTypeIOrthFromTwoKMax\ from $n = 2000$ on. Ridge regression has no such variance, but a penalty chosen for prediction shrinks the fitted means and can leave correlated parts of the discarded information in both residuals. On the sufficient \resTwoRichDim-dimensional set of Experiment~2 it rejects at rate \resTwoTypeIRichRidgeLarge\ at $n = \resTwoNmax$, where $\hat\rho_\psi = \resTwoRhoRichRidgeLarge$, which is why we use least squares wherever $n \ge d$. The sentiment direction of Experiment~3 and the prognostic score of the application are ridge predictions; they define variables and are not regressions of the test.

\subsection{Standard errors and \texorpdfstring{$p$}{p}-values}\label{app:impl-se}
The standard error of $\hat\rho^\pm_\psi$ is $\hat v_\psi/\sqrt n$, with $\hat v_\psi^2$ from \eqref{eq:vhat}. Where rows are clustered, as in the application, the estimated influence function is summed within clusters. The $p$-values $p_0$ and $p_B$ of \cref{sec:gcm,sec:robust} use this standard error, and $\RV_\alpha$ is computed by bisection.

\subsection{Demonstrated insufficiency}\label{app:impl-bench}
We estimate $R^2_{\psi \to \psi'}(X)$ by one minus the ratio of the cross-fitted mean squared error of the regression on $(\psi(Z), \psi'(Z))$ to that of the regression on $\psi(Z)$, truncated at zero, with the same folds for both regressions. Each cross-fitted mean squared error is the residual variance plus the mean squared estimation error of its regression. To leading order, the estimate is therefore $R^2_{\psi \to \psi'}(X)$ minus the excess estimation error of the richer regression over the base regression, relative to $\sigma^2_\psi(X)$, and the truncated estimate is zero unless $R^2_{\psi \to \psi'}(X)$ exceeds this excess, which is of order $d'/n$ for least squares with $d'$ added coordinates. A one-dimensional prognostic score adds almost no estimation error, which is why it demonstrates insufficiency more reliably than a richer embedding (\cref{sec:bench}). In Experiment~2, the \resTwoDimStrong\ coordinates of $E_2$ demonstrate none of an insufficiency that they explain in full (\cref{app:figs}).

\subsection{Corpus and encoders}\label{app:impl-corpus}
The 20~Newsgroups posts have headers, footers and quoted text removed. We drop posts with fewer than 30 words and cut longer posts at 300 words, leaving \resTwoN\ posts. $E_1$ is all-MiniLM-L6-v2 ($d = \resTwoDimWeak$) and $E_2$ is all-mpnet-base-v2 ($d = \resTwoDimStrong$), both used through the sentence-transformers library \citep{reimers2019} with default pooling and a window of 256 word pieces.

\subsection{Experiment~1}\label{app:impl-exp1}
Let $\theta_{X, q:p}$ and $\theta_{Y, q:p}$ be the coefficients of the dropped coordinates $Z_{q:p}$. Then $\Delta_{\psi_q}(X) = \theta_{X, q:p}^\top Z_{q:p}$, $R^2_{\psi_q}(X) = \|\theta_{X, q:p}\|^2 / (\|\theta_{X, q:p}\|^2 + 1)$, and $\lambda$ is the cosine of the angle between $\theta_{X, q:p}$ and $\theta_{Y, q:p}$. We set $R^2_{\psi_q}(X) = R^2_{\psi_q}(Y) = \resOneRtwoA$, so that $\sqrt{R^2_{\psi_q}(X) R^2_{\psi_q}(Y)} = \resOneBound$, and repeat every configuration \resOneNrep\ times. The power experiment uses $n = \resOnePowerN$, the sufficient embedding and the alternative $Y = \theta_Y^\top Z + \gamma X + \varepsilon_Y$.

\subsection{Experiment~2}\label{app:impl-exp2}
The outcomes are
\begin{align*}
    X &= h_X + c\,g_1 + \varepsilon_X, & Y &= h_Y + c\,(\lambda g_1 + \sqrt{1-\lambda^2}\, g_2) + \varepsilon_Y,
\end{align*}
with independent standard normal $\varepsilon_X, \varepsilon_Y$. Here $h_X$ and $h_Y$ are the standardized first two principal components of $E_1(Z)$, which $E_1$ captures fully. The features $g_1$ and $g_2$ are standardized projections on two random directions of the residual of the standardized $E_2(Z)$ after a least squares regression on $E_1(Z)$ over the corpus, with $g_2$ orthogonalized against $g_1$. Both are linear functions of $[E_1(Z), E_2(Z)]$ and uncorrelated with $E_1(Z)$, so $E_1$ captures none of them and $E_2$ demonstrates all of the insufficiency of $E_1$. The constant $c = \sqrt{r^2/(1-r^2)}$ with $r^2 = 0.1$ sets $R^2_{E_1}(X) \approx R^2_{E_1}(Y) \approx \resTwoPopRtwo$ and $\rho_{E_1} \approx \resTwoPopBound$ at $\lambda = 1$. Under the alternative, $Y$ receives an additional term $\gamma X$. The population values of $R^2_{E_1}(X)$, $R^2_{E_1}(Y)$, $\rho_{E_1}$ and the benchmark are computed by cross-fitted least squares on \resTwoPopN\ posts; the partial $R^2$ is \resTwoPopRtwo\ rather than $0.1$ because the cross-fitted regression on the \resTwoDimWeak\ coordinates of $E_1(Z)$ leaves an estimation error of order $d/n$ in the residual. The oracle tolerance of the robust test in \cref{fig:exp123} is this cross-fitted value, so it includes the estimation error of a least squares regression at $n = \resTwoPopN$; the design value under the exact linear projection is $0.1$. The population benchmark of $E_2$ is \resTwoPopBench, the full insufficiency, because $\E[X \mid Z]$ and $\E[Y \mid Z]$ are linear in $[E_1(Z), E_2(Z)]$; its estimate uses the \resTwoRichDim-dimensional set and follows that set's regression rule. The rejection rates under $H_0$ use \resTwoNrep\ draws of the outcomes per configuration, the sensitivity analysis \resTwoNrepBench\ draws, and the power curves, at $n = \resTwoPowerN$, \resTwoNrepPower\ draws. Kernel ridge regression with a Gaussian kernel in place of least squares gives the same picture for $\psi = E_1$ (population $R^2_{E_1}(X) = \resTwoPopRtwoKRR$, benchmark \resTwoPopBenchKRR) and inflates the level of the \resTwoRichDim-dimensional set, as ridge regression does.

\begin{remark}[The null hypothesis for generated text]\label[remark]{rem:llm}
Suppose the comment $C_i = g_W(Z_i, U_i)$ is generated by a language model with weights $W$ from the post $Z_i$ and fresh sampling noise $U_i$, and that $X_i$ is a function of $C_i$. Then $P(X_i \in \cdot \mid Z_i, Y_i, W) = P(X_i \in \cdot \mid Z_i, W)$ for every $W$, however the weights were obtained, so conditional on the model, $X \indep Y \mid Z$ holds by construction. A dependence can only arise if $W$ is treated as random and dependent on the sample, for instance through memorization \citep{simnacher2026ecit}; this is a single shared realization for which the i.i.d.\ calibration of a test is not obviously appropriate. Text generated from real $Z$ by a fixed model is therefore a benchmark in which the null hypothesis is exact, and a rejection on such data is a type~I error, caused by insufficiency of $\psi$ or by miscalibration.
\end{remark}

\subsection{Experiment~3}\label{app:impl-exp3}
The generator is \resFourModel\ \citep{qwen2025} (bfloat16, Hugging Face transformers on a laptop GPU) with the system prompt and the two instructions of \cref{app:prompts}, the persona drawn per post with a fixed seed, sampling at temperature $1$ with nucleus probability $0.95$ and at most 160 new tokens, and the model's thinking mode disabled. The comment ends with a line in which it labels its own sentiment as positive, neutral or negative (coded $+1, 0, -1$), which we parse and remove before the comment is used; all but one of the \resThreeN\ comments carried a label, and that one is dropped. The label is the target of a cross-fitted ridge regression on the $E_1$ embedding of the comment, and its out-of-fold predictions on the $E_1$ embeddings of the comment and of the summary are $X$ and $Y$. The direction is cross-fitted with the same five folds as the residual regressions, so the comments of a fold enter, through the direction, the values of $X$ and $Y$ in the other folds; for a ridge direction at $n = \resThreeN$ this leakage is small. Both encoders read the first 256 word pieces of a post, and only \resThreeTruncFrac\ of the posts are longer.

The hidden states come from a forward pass over the summary prompt, which contains the post and a fixed instruction and nothing else, in batches of ten posts in a fixed order, since left padding changes the states slightly. The four embeddings are the mean over the prompt tokens and the state at the last prompt token, each at the output of the last layer and of layer \resFourMidLayer\ of \resFourLayers; principal components are fitted on the \resThreeN\ posts. By the rule above, every embedding of at most 960 coordinates is reported, and the \resTwoRichDim- and \resFourHidden-dimensional ones are not.

We drew the replicates with vLLM, with the same bfloat16 weights, prompts, sampling settings and label parsing, and the persona redrawn per draw: \resFourTruthM\ comments and \resFourTruthM\ summaries for each of the \resThreeN\ posts, 120{,}000 texts in all. \resFourTruthN\ posts have all draws complete and a labelled observed comment. The regenerated comments match the observed ones in length, label frequencies and the share written in the first person, so the two runs sample the same law. The analysis follows \cref{prop:replicates}. $\Var(\E[X \mid Z])$ and $\Cov(\E[X \mid Z], \E[Y \mid Z])$ are the between-post components of a balanced one-way analysis of variance over the draws. $\Var(\E[X \mid \psi(Z)])$ is estimated by the sample covariance of the cross-fitted prediction $\hat m_X(\psi(Z_i))$ with $X_i$, which, unlike the cross-fitted mean squared error, is not inflated by the estimation error of the regression, and $\Cov(\E[X \mid \psi(Z)], \E[Y \mid \psi(Z)])$ by the average of the two cross covariances of a prediction with the other variable. Then $\sigma^2_\psi(X) = \Var(X) - \Var(\E[X \mid \psi(Z)])$ gives $R^2_\psi(X)$, $R^2_\psi(Y)$, $\lambda$ and the true insufficiency, with 95\% percentile intervals from 200 bootstrap samples of posts. Each null dataset takes one comment draw and one summary draw per post, independently, is scored with the sentiment direction of the observed data, and is tested exactly like the observed data. \Cref{fig:exp4identity} in \cref{app:figs} compares $\hat\rho_\psi$ on these datasets with the prediction and the bound of \cref{thm:pair} for every embedding.

The alternatives re-pair the draws within each post. With probability $\pi$, a comment draw is paired with the summary draw whose rank among the \resFourPowerP\ summary draws of the post equals its own rank among the comment draws, and otherwise with an independent draw. Only the pairing changes, so every quantity identified above is unchanged, and the conditional covariance given the post is $\pi$ times that of the sorted draws. Hence $\pi$ sets $\rho_{\id}$ exactly, up to \resFourPowerRange\ at $\pi = 1$, and the induced dependence is positive, the direction of the bias here. By \cref{thm:pair}, the residual correlation with $E_1$ is $\resFourPowerFactorEone\,\rho_{\id} + \resFourTruthPredRhoEone$, and the mean of $\hat\rho_\psi$ over the \resFourPowerR\ datasets per value follows this line to within \resFourPowerRhoDiffMaxEone. The reference test residualizes $X$ and $Y$ on the means of the draws of the post that the dataset does not use. Under $H_0$ these means are independent of the dataset given the post, and their error enters the residual variances, so the reference approximates the test with the post observed. \Cref{fig:exp4power} in \cref{app:figs} shows the rejection rates for $E_1$, $E_2$ and the generator's state.

\subsection{Application}\label{app:impl-app}
From the \resAppNops\ conversations of the ChangeMyView corpus we keep every reply whose parent is a comment rather than the post, written by neither the original poster, a bot, nor the author of the parent, not deleted, of at least twenty words and with a score; the parent must not be deleted and must have at least five words. We keep the earliest reply per user and parent, at most \resDeepMaxPerParent\ replies per parent (sampled at random with a fixed seed), and parents with at least two such replies. This gives \resDeepN\ rows under \resDeepNparents\ parent comments in \resDeepNposts\ posts, \resDeepRowsPerParent\ rows per parent on average. $Z$ is the parent comment, of median length \resDeepParentWords\ words; \resDeepParentDepthOne\ of the parents are top-level replies to the post and the rest lie deeper in the thread. Scores are Reddit scores at collection time, and $Y = \mathrm{sign}(s)\log(1+|s|)$ for a score $s$. The delta indicator marks the comment that a delta was awarded to, the ancestor of the DeltaBot confirmation written by the awarded user (\resDeepDeltaRate\ of the rows). Besides the politeness rating and the log length, we use as $X$ a lexicon score, politeness markers minus hostile markers per hundred words.

Politeness is rated by Jev \citep{typesafe2026jev}, a decision model that answers a rubric question with a probability-weighted score instead of generating text. The reply is passed as the state, the instruction is ``Rate the politeness of this online comment. Judge tone and courtesy only, not the opinion expressed.'', and the five levels of the rubric are rude or hostile, somewhat impolite or curt, neutral, polite, and very polite and courteous, coded 1 to 5, so that the score is continuous (median \resDeepRaterMedian, standard deviation \resDeepRaterSd). All \resDeepRated\ replies were rated in twelve minutes at a cost of \resDeepRatingCost\ USD. As a check, \resDeepAgreeN\ of the replies were also rated by \resThreeModel\ \citep{anthropic2025haiku} at temperature $0$, with the system prompt ``You rate the politeness of online comments for a research dataset. Judge tone and courtesy only, not the opinion expressed. Answer with a single digit.'' and the user prompt ``Comment: \texttt{<{}<{}<}[reply]\texttt{>{}>{}>} Rate the politeness of this comment from 1 (rude or hostile) to 5 (very polite and courteous). Answer with a single digit.'' The two ratings have correlation \resDeepAgreeCorr\ (rank correlation \resDeepAgreeSpearman). The rounded score agrees exactly in \resDeepAgreeExact\ of the cases and within one point in \resDeepAgreeWithinOne, and its mean rises from \resDeepRaterAtHaikuOne\ to \resDeepRaterAtHaikuFive\ across the five levels of the other rating, so the two scales differ by a shift of about half a point and not in order.

The embeddings of $Z$ are $E_1$ and $E_2$ of the parent comment, which fits into the 256 word-piece window for \resDeepParentFit\ of the parents; its surface statistics $S$ (length, sentence count, character classes, mean word length and presence of a URL) and 300 TF-IDF singular directions $L$; its metadata $M$ (log length, signed log score, whether the original poster wrote it, depth, log number of replies, whether it quotes its parent, presence of a URL); the $E_1$ embedding $P$ of the post it belongs to; combinations of these with $E_1$; and the prognostic score, the cross-fitted ridge predictions of $X$ and of $Y$ from $[L, S, M]$. Folds are grouped by post, and standard errors sum the influence function within posts. The baseline demeans and permutes $X$ within parents. The variance components are unbiased analysis-of-variance estimates for unbalanced groups with parents as groups, and $\hat\lambda$ is the between-parent covariance of the two variables divided by the product of their between-parent standard deviations.

\subsection{Computation}\label{app:impl-comp}
Experiments~1 and~2 run on a laptop, in minutes and in about one hour, respectively. The generation of Experiment~3 took about \resFourGenHours\ hours on the laptop's GPU and the hidden states ten minutes; the replicates took five minutes on one H100 GPU.

\subsection{Licenses}\label{app:impl-lic}
\resFourModel\ and the two sentence encoders are released under the Apache~2.0 license. The 20~Newsgroups corpus and the ChangeMyView corpus of ConvoKit are distributed by their creators for research use without a stated license. The ratings by Jev and \resThreeModel\ were obtained through the providers' commercial APIs under their terms of use.

\section{Additional figures and results}\label{app:figs}

\textbf{Experiment~1.} \Cref{fig:exp1a} plots $\hat\rho_\psi$ under $H_0$ against $\sqrt{R^2_\psi(X) R^2_\psi(Y)}$ for five values of $\lambda$. The points lie on the lines $\lambda\sqrt{R^2_\psi(X) R^2_\psi(Y)}$ of \eqref{eq:pair}, and the bound is attained at $\lambda = 1$. \Cref{fig:exp1b} (left) shows the power at $n = \resOnePowerN$ with the sufficient embedding. The power curve of the robust test is that of the residual correlation test shifted to the right by roughly $B$, as \cref{cor:power} predicts: at $\gamma = 0.1$, the residual correlation test has power \resOnePowerNaive\ and the robust test with $B = \resOnePowerBudgetRef$ has power \resOnePowerRobust. \Cref{fig:exp1b} (right) shows $\hat\rho_\psi$ and $\RV_\alpha$ under $H_0$ with the insufficient embedding at $\lambda = 1$. The statistic $\hat\rho_\psi$ is centred at $\rho_{\psi_q} = \resOneRVTrue$, and $\RV_\alpha$ lies below this population value in \resOneRVCoverage\ of the runs.

\begin{figure}[h]
\centering
\figorbox{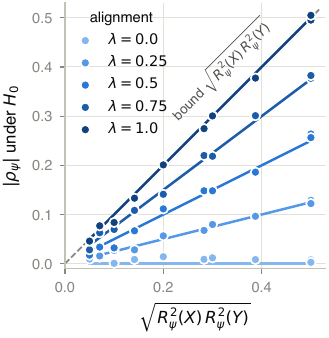}{0.4\linewidth}
\caption{Experiment~1, $\hat\rho_\psi$ under $H_0$ against $\sqrt{R^2_\psi(X) R^2_\psi(Y)}$ for five values of $\lambda$. Solid lines are $\lambda\sqrt{R^2_\psi(X) R^2_\psi(Y)}$, the value under $H_0$ by \eqref{eq:pair}; the dashed line is the bound.}
\label{fig:exp1a}
\end{figure}

\begin{figure}[h]
\centering
\figorboxtall{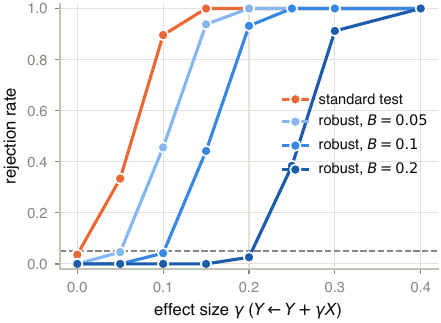}{0.48\linewidth}\hfill
\figorboxtall{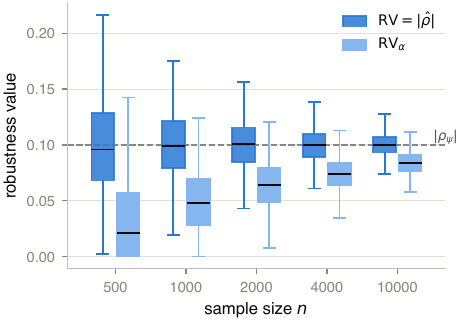}{0.48\linewidth}
\caption{Experiment~1. Left: power at $n = \resOnePowerN$ against the effect size $\gamma$ of the alternative $Y = \theta_Y^\top Z + \gamma X + \varepsilon_Y$, with the sufficient embedding, for the residual correlation test and for the robust test with tolerances $B \in \resOnePowerBudgets$. Right: $\hat\rho_\psi$ (labelled $\mathrm{RV} = |\hat\rho|$ in the legend, the robustness value of the point estimate) and $\RV_\alpha$ against $n$ under $H_0$ with the insufficient embedding at $\lambda = 1$; the dashed line is the population value $\rho_{\psi_q} = \resOneRVTrue$.}
\label{fig:exp1b}
\end{figure}

\textbf{Experiment~2.} \Cref{fig:exp2} shows the sensitivity analysis with $\psi = E_1$ and $\lambda = 1$. The median of $\hat\rho_\psi$ is \resTwoRVmedSmall\ at $n = \resTwoNmin$ and \resTwoRVmed\ at $n = \resTwoNmax$, against the oracle bound \resTwoPopBound, and the median $\RV_\alpha$ rises from \resTwoRValphaSmall\ to \resTwoRValphaLarge. The insufficiency of $E_1$ demonstrated by $E_2$ has population value \resTwoPopBench, all of the truth, but its estimate has median \resTwoBenchSampleMed\ at every $n$: the excess estimation variance of the \resTwoDimStrong\ added coordinates dominates the increment (\cref{app:impl-bench}). \Cref{fig:exp2power} shows the power at $n = \resTwoPowerN$. At $\lambda = 1$, the robust test with the oracle tolerance approximately holds its level (\resTwoRobustBoundNull\ against \resTwoTypeIWeakPowerN\ for the residual correlation test) and loses power at small effects.

\begin{figure}[h]
\centering
\figorboxtall{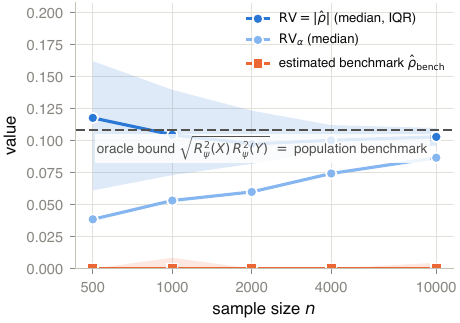}{0.55\linewidth}
\caption{Experiment~2, $\hat\rho_\psi$ (median and interquartile range; labelled $\mathrm{RV} = |\hat\rho|$ in the legend), median $\RV_\alpha$, and the estimated benchmark $\hat\rho_{E_1 \to E_2}$ against $n$ with $\psi = E_1$ and $\lambda = 1$. The dashed line is the oracle bound, which equals the population benchmark.}
\label{fig:exp2}
\end{figure}

\begin{figure}[h]
\centering
\figorboxh{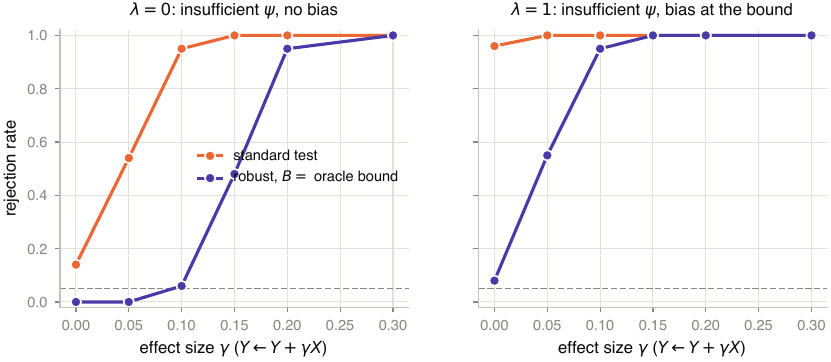}{\linewidth}{3.4cm}
\caption{Experiment~2, power at $n = \resTwoPowerN$ against the effect size $\gamma$ with $\psi = E_1$, for the residual correlation test and for the robust test with tolerance equal to the oracle bound. Left: $\lambda = 0$, where $E_1$ is insufficient but the residual correlation test is unbiased. Right: $\lambda = 1$, where the residual correlation test rejects at rate \resTwoTypeIWeakPowerN\ under $H_0$.}
\label{fig:exp2power}
\end{figure}

\textbf{Experiment~3.} \Cref{fig:exp4views} shows $\hat\rho_\psi$ under $H_0$ for the four embeddings of the generator's state against their dimension, together with the true insufficiency of each, and the rejection rates of the tests for the mean and the last-token state at the last layer. \Cref{fig:exp4identity} checks the identity of \cref{thm:pair}. For every embedding, the mean of $\hat\rho_\psi$ over the \resFourTruthR\ null datasets lies within \resFourTruthPredDiffMax\ of the prediction $|\lambda|\sqrt{R^2_\psi(X) R^2_\psi(Y)}$ identified from the replicates, and below the bound $\sqrt{R^2_\psi(X) R^2_\psi(Y)}$, since $\lambda$ lies between \resFourTruthLambdaMinStates\ and \resFourTruthLambdaMaxStates. \Cref{fig:exp4power} shows the rejection rates against the coupled alternatives for $E_1$, $E_2$ and 384 principal components of the mean state at the last layer; the three embeddings give the same picture.

\begin{figure}[h]
\centering
\figorboxh{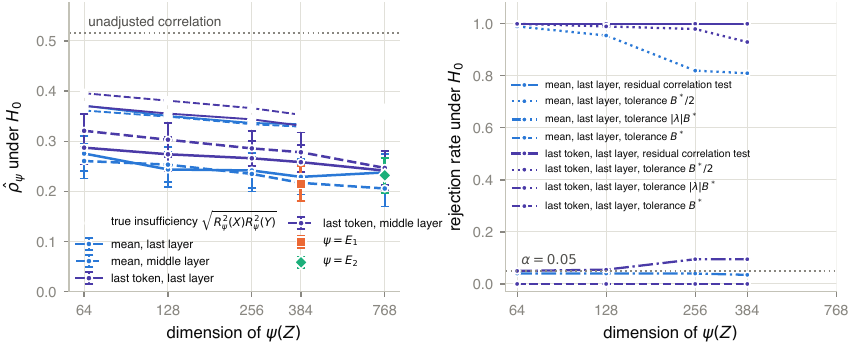}{\linewidth}{5cm}
\caption{Experiment~3, the four embeddings of the generator's state. Left: $\hat\rho_\psi$ under $H_0$ with 95\% intervals against the dimension of $\psi(Z)$, for the principal components of the generator's states (mean over the prompt tokens and last prompt token, at the last and the middle layer), for $E_1$ and $E_2$, and the unadjusted correlation (dotted); hollow markers are the true insufficiency $\sqrt{R^2_\psi(X) R^2_\psi(Y)}$ of each embedding, identified from the replicates. Right: rejection rate under $H_0$ over \resFourTruthR\ null datasets drawn from the replicates, for the residual correlation test and for the robust test with the tolerance equal to half the true insufficiency, to the true bias $|\lambda|\sqrt{R^2_\psi(X) R^2_\psi(Y)}$ and to the true insufficiency ($B^*$ in the legend), for the mean and the last-token state at the last layer. All regressions are least squares.}
\label{fig:exp4views}
\end{figure}

\begin{figure}[h]
\centering
\figorbox{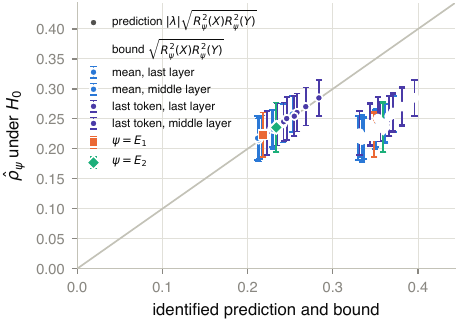}{0.5\linewidth}
\caption{Experiment~3, identification check. For every embedding $\psi$, $\hat\rho_\psi$ under $H_0$ (mean over the \resFourTruthR\ null datasets, with twice its standard deviation) against the prediction $|\lambda|\sqrt{R^2_\psi(X) R^2_\psi(Y)}$ of \cref{thm:pair} (filled) and the bound $\sqrt{R^2_\psi(X) R^2_\psi(Y)}$ (hollow), both identified from the replicates; the line is the diagonal. Every filled marker lies on the diagonal, to within \resFourTruthPredDiffMax, and every hollow marker to its right.}
\label{fig:exp4identity}
\end{figure}

\begin{figure}[h]
\centering
\figorboxh{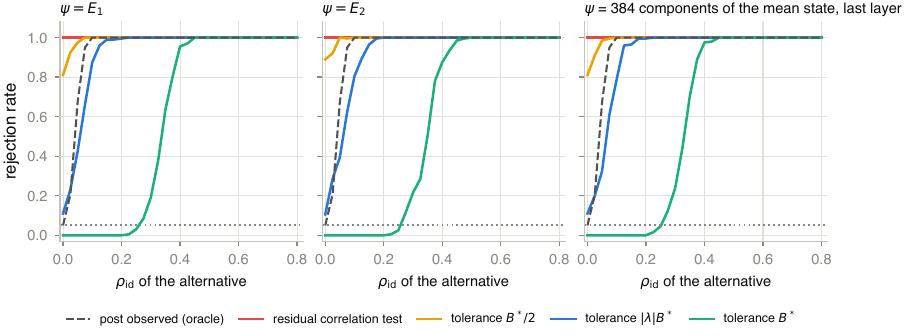}{\linewidth}{3.6cm}
\caption{Experiment~3, rejection rates against the coupled alternatives for $\psi = E_1$, $E_2$ and 384 principal components of the generator's mean state at the last layer, by the conditional correlation $\rho_{\id}$ of comment and summary given the post, over \resFourPowerR\ datasets per value: the residual correlation test, the robust test with the tolerance equal to half the true insufficiency, to the true bias and to the true insufficiency ($B^*$ in the legend), and the reference test that conditions on the post through the unused draws (\cref{app:impl-exp3}).}
\label{fig:exp4power}
\end{figure}

\textbf{Application.} \Cref{fig:appviews} shows politeness and log length against score under every embedding of the comment being replied to. In both pairs, every embedded $\hat\rho_\psi$ lies between the unconditional correlation and the baseline, and closer to the former; for politeness, the embedded values range from \resDeepPolEmbMin\ to \resDeepPolEmbMax\ against a baseline of \resDeepPolStrat. The insufficiency of $E_1$ that any other embedding demonstrates is at most \resDeepBenchMax\ over the six pairs. The baseline estimates $\rho_{\id}$ directly, so the gap to it is the bias of the embedded test. For $E_1$, the identity \eqref{eq:pair} with $\rho_{\id}$ from the baseline and $R^2_{E_1}(X)$, $R^2_{E_1}(Y)$ and $\lambda$ from the variance components predicts $\rho_{E_1} = \resDeepPolPredRho$ for politeness and \resDeepLenPredRho\ for log length, against measured values of \resDeepPolEone\ and \resDeepLenEone.

\begin{figure}[h]
\centering
\figorboxh{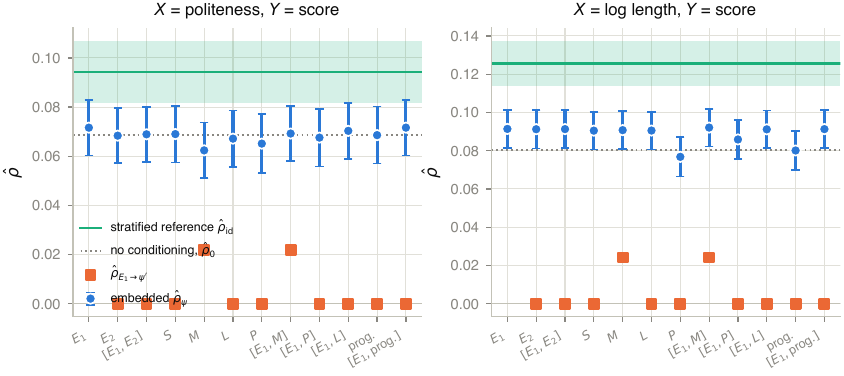}{\linewidth}{5cm}
\caption{Application, politeness against score (left) and log length against score (right) under every embedding of the comment being replied to: $E_1$, $E_2$, $[E_1, E_2]$, the surface statistics $S$, the metadata $M$, the TF-IDF directions $L$, the post embedding $P$, combinations with $E_1$, and the prognostic score. Circles: $\hat\rho_\psi$ with 95\% intervals; squares: the insufficiency of $E_1$ demonstrated by the added embedding, $\hat\rho_{E_1 \to \psi'}$; line and band: the baseline with its 95\% interval; dotted: the unconditional correlation.}
\label{fig:appviews}
\end{figure}

\clearpage
\section{Language model prompts}\label{app:prompts}

Every call in Experiment~3 uses the system prompt below and one of the two instructions, with the post inserted in place of \texttt{\{Z\}}. The comment instruction inserts one of three personas, drawn uniformly at random per post: ``who tends to be critical and skeptical'', ``who tends to be appreciative and enthusiastic'', or ``who is matter-of-fact and reserved''. The two calls for a post share nothing but the model weights, so $X_i \indep Y_i \mid Z_i$ holds conditional on the model (\cref{rem:llm}). The sampling settings are in \cref{app:impl-exp3}.

\textbf{System prompt.} \emph{You write short pieces of text for a research dataset on online discussions. Follow the instructions exactly. Write in natural, varied English. Output plain text only: no preamble, no headings, no quotation marks around the whole answer.}

\textbf{Comment.} \emph{Below is a post from an online discussion forum. POST: \texttt{<{}<{}<} \{Z\} \texttt{>{}>{}>} Write a reader comment of 2-3 sentences reacting to this post, as a forum member \{PERSONA\} would. Then, on a new final line, write exactly one of the words POSITIVE, NEGATIVE or NEUTRAL to label the overall sentiment of your comment.}

\textbf{Summary.} \emph{Below is a post from an online discussion forum. POST: \texttt{<{}<{}<} \{Z\} \texttt{>{}>{}>} Write a 2-3 sentence summary of this post for a newsletter that digests forum discussions. Output only the summary.}

The label line is removed from the comment before embedding. Of the \resFourN\ comments with a label, 1815 report a positive, 1140 a negative and 44 a neutral sentiment.
 

\begin{thebibliography}{41}
\providecommand{\natexlab}[1]{#1}
\providecommand{\url}[1]{\texttt{#1}}
\expandafter\ifx\csname urlstyle\endcsname\relax
  \providecommand{\doi}[1]{doi: #1}\else
  \providecommand{\doi}{doi: \begingroup \urlstyle{rm}\Url}\fi

\bibitem[{Anthropic}(2025)]{anthropic2025haiku}
{Anthropic}.
\newblock Claude {Haiku} 4.5 system card.
\newblock \url{https://www.anthropic.com/system-cards}, 2025.
\newblock Model \texttt{claude-haiku-4-5} through the Anthropic API.

\bibitem[Armstrong and Koles{\'a}r(2018)]{armstrong2018}
Timothy~B. Armstrong and Michal Koles{\'a}r.
\newblock Optimal inference in a class of regression models.
\newblock \emph{Econometrica}, 86\penalty0 (2):\penalty0 655--683, 2018.

\bibitem[Armstrong and Koles{\'a}r(2021)]{armstrong2021}
Timothy~B. Armstrong and Michal Koles{\'a}r.
\newblock Sensitivity analysis using approximate moment condition models.
\newblock \emph{Quantitative Economics}, 12\penalty0 (1):\penalty0 77--108,
  2021.

\bibitem[Bergen et~al.(2026)Bergen, Sejdinovic, and Didelez]{bergen2026gkcm}
Luca Bergen, Dino Sejdinovic, and Vanessa Didelez.
\newblock The generalised kernel covariance measure.
\newblock \emph{arXiv preprint arXiv:2604.03721}, 2026.

\bibitem[Berrett et~al.(2020)Berrett, Wang, Barber, and Samworth]{berrett2020}
Thomas~B. Berrett, Yi~Wang, Rina~Foygel Barber, and Richard~J. Samworth.
\newblock The conditional permutation test for independence while controlling
  for confounders.
\newblock \emph{Journal of the Royal Statistical Society: Series B},
  82\penalty0 (1):\penalty0 175--197, 2020.

\bibitem[Cand{\`e}s et~al.(2018)Cand{\`e}s, Fan, Janson, and Lv]{candes2018}
Emmanuel Cand{\`e}s, Yingying Fan, Lucas Janson, and Jinchi Lv.
\newblock Panning for gold: `model-{X}' knockoffs for high dimensional
  controlled variable selection.
\newblock \emph{Journal of the Royal Statistical Society: Series B},
  80\penalty0 (3):\penalty0 551--577, 2018.

\bibitem[Chang et~al.(2020)Chang, Chiam, Fu, Wang, Zhang, and
  Danescu-Niculescu-Mizil]{chang2020convokit}
Jonathan~P. Chang, Caleb Chiam, Liye Fu, Andrew~Z. Wang, Justine Zhang, and
  Cristian Danescu-Niculescu-Mizil.
\newblock {ConvoKit}: A toolkit for the analysis of conversations.
\newblock In \emph{Proceedings of the 21st Annual Meeting of the Special
  Interest Group on Discourse and Dialogue}, pages 57--60, 2020.

\bibitem[Chernozhukov et~al.(2022)Chernozhukov, Cinelli, Newey, Sharma, and
  Syrgkanis]{chernozhukov2022lss}
Victor Chernozhukov, Carlos Cinelli, Whitney Newey, Amit Sharma, and Vasilis
  Syrgkanis.
\newblock Long story short: omitted variable bias in causal machine learning.
\newblock \emph{arXiv preprint arXiv:2112.13398}, 2022.

\bibitem[Cinelli and Hazlett(2020)]{cinelli2020}
Carlos Cinelli and Chad Hazlett.
\newblock Making sense of sensitivity: extending omitted variable bias.
\newblock \emph{Journal of the Royal Statistical Society: Series B},
  82\penalty0 (1):\penalty0 39--67, 2020.

\bibitem[Egami et~al.(2022)Egami, Fong, Grimmer, Roberts, and
  Stewart]{egami2022}
Naoki Egami, Christian~J. Fong, Justin Grimmer, Margaret~E. Roberts, and
  Brandon~M. Stewart.
\newblock How to make causal inferences using texts.
\newblock \emph{Science Advances}, 8\penalty0 (42):\penalty0 eabg2652, 2022.

\bibitem[Feder et~al.(2022)Feder, Keith, Manzoor, Pryzant, Sridhar,
  Wood-Doughty, Eisenstein, Grimmer, Reichart, Roberts, Stewart, Veitch, and
  Yang]{feder2022}
Amir Feder, Katherine~A. Keith, Emaad Manzoor, Reid Pryzant, Dhanya Sridhar,
  Zach Wood-Doughty, Jacob Eisenstein, Justin Grimmer, Roi Reichart,
  Margaret~E. Roberts, Brandon~M. Stewart, Victor Veitch, and Diyi Yang.
\newblock Causal inference in natural language processing: estimation,
  prediction, interpretation and beyond.
\newblock \emph{Transactions of the Association for Computational Linguistics},
  10:\penalty0 1138--1158, 2022.

\bibitem[Hansen(2008)]{hansen2008}
Ben~B. Hansen.
\newblock The prognostic analogue of the propensity score.
\newblock \emph{Biometrika}, 95\penalty0 (2):\penalty0 481--488, 2008.

\bibitem[Hardt et~al.(2016)Hardt, Price, and Srebro]{hardt2016equality}
Moritz Hardt, Eric Price, and Nati Srebro.
\newblock Equality of opportunity in supervised learning.
\newblock In \emph{Advances in Neural Information Processing Systems},
  volume~29, pages 3315--3323. Curran Associates, Inc., 2016.
\newblock URL
  \url{https://proceedings.neurips.cc/paper_files/paper/2016/file/6a9659feb1216f14f7384ba499518b38-Paper.pdf}.

\bibitem[Imbens(2003)]{imbens2003}
Guido~W. Imbens.
\newblock Sensitivity to exogeneity assumptions in program evaluation.
\newblock \emph{American Economic Review}, 93\penalty0 (2):\penalty0 126--132,
  2003.

\bibitem[Imbens and Manski(2004)]{imbens2004}
Guido~W. Imbens and Charles~F. Manski.
\newblock Confidence intervals for partially identified parameters.
\newblock \emph{Econometrica}, 72\penalty0 (6):\penalty0 1845--1857, 2004.

\bibitem[Keith et~al.(2020)Keith, Jensen, and O'Connor]{keith2020}
Katherine~A. Keith, David Jensen, and Brendan O'Connor.
\newblock Text and causal inference: a review of using text to remove
  confounding from causal estimates.
\newblock In \emph{Proceedings of the 58th Annual Meeting of the Association
  for Computational Linguistics}, pages 5332--5344, 2020.

\bibitem[Kim et~al.(2022)Kim, Neykov, Balakrishnan, and
  Wasserman]{kim2022local}
Ilmun Kim, Matey Neykov, Sivaraman Balakrishnan, and Larry Wasserman.
\newblock Local permutation tests for conditional independence.
\newblock \emph{The Annals of Statistics}, 50\penalty0 (6):\penalty0
  3388--3414, 2022.

\bibitem[Kook and Lundborg(2024)]{kook2024}
Lucas Kook and Anton~Rask Lundborg.
\newblock Algorithm-agnostic significance testing in supervised learning with
  multimodal data.
\newblock \emph{Briefings in Bioinformatics}, 25\penalty0 (6):\penalty0
  bbae475, 2024.

\bibitem[Lang(1995)]{lang1995}
Ken Lang.
\newblock {NewsWeeder}: learning to filter netnews.
\newblock In \emph{Proceedings of the 12th International Conference on Machine
  Learning}, pages 331--339, 1995.

\bibitem[Lundborg et~al.(2024)Lundborg, Kim, Shah, and
  Samworth]{lundborg2024pcm}
Anton~Rask Lundborg, Ilmun Kim, Rajen~D. Shah, and Richard~J. Samworth.
\newblock The projected covariance measure for assumption-lean variable
  significance testing.
\newblock \emph{The Annals of Statistics}, 52\penalty0 (6):\penalty0
  2851--2878, 2024.

\bibitem[Neykov et~al.(2021)Neykov, Balakrishnan, and Wasserman]{neykov2021}
Matey Neykov, Sivaraman Balakrishnan, and Larry Wasserman.
\newblock Minimax optimal conditional independence testing.
\newblock \emph{The Annals of Statistics}, 49\penalty0 (4):\penalty0
  2151--2177, 2021.

\bibitem[Ogburn and VanderWeele(2012)]{ogburn2012}
Elizabeth~L. Ogburn and Tyler~J. VanderWeele.
\newblock On the nondifferential misclassification of a binary confounder.
\newblock \emph{Epidemiology}, 23\penalty0 (3):\penalty0 433--439, 2012.

\bibitem[Ogburn and VanderWeele(2013)]{ogburn2013}
Elizabeth~L. Ogburn and Tyler~J. VanderWeele.
\newblock Bias attenuation results for nondifferentially mismeasured ordinal
  and coarsened confounders.
\newblock \emph{Biometrika}, 100\penalty0 (1):\penalty0 241--248, 2013.

\bibitem[Peters et~al.(2016)Peters, B{\"u}hlmann, and
  Meinshausen]{peters2016causal}
Jonas Peters, Peter B{\"u}hlmann, and Nicolai Meinshausen.
\newblock Causal inference by using invariant prediction: identification and
  confidence intervals.
\newblock \emph{Journal of the Royal Statistical Society Series B: Statistical
  Methodology}, 78\penalty0 (5):\penalty0 947--1012, 2016.
\newblock \doi{10.1111/rssb.12167}.

\bibitem[{Qwen Team}(2025)]{qwen2025}
{Qwen Team}.
\newblock {Qwen3} technical report.
\newblock arXiv preprint arXiv:2505.09388, 2025.

\bibitem[Reimers and Gurevych(2019)]{reimers2019}
Nils Reimers and Iryna Gurevych.
\newblock Sentence-{BERT}: sentence embeddings using {S}iamese {BERT}-networks.
\newblock In \emph{Proceedings of the 2019 Conference on Empirical Methods in
  Natural Language Processing}, pages 3982--3992, 2019.

\bibitem[Ren et~al.(2025)Ren, Jin, Xia, Ke, Huang, Xue, Zhang, Guan, and
  Zhou]{ren2025score}
Yixin Ren, Chenghou Jin, Yewei Xia, Li~Ke, Longtao Huang, Hui Xue, Hao Zhang,
  Jihong Guan, and Shuigeng Zhou.
\newblock Score-based generative modeling for conditional independence testing.
\newblock In \emph{Proceedings of the 31st ACM SIGKDD Conference on Knowledge
  Discovery and Data Mining}, 2025.

\bibitem[Rosenbaum(2002)]{rosenbaum2002}
Paul~R. Rosenbaum.
\newblock \emph{Observational Studies}.
\newblock Springer, New York, 2nd edition, 2002.

\bibitem[Scheidegger et~al.(2022)Scheidegger, H{\"o}rrmann, and
  B{\"u}hlmann]{scheidegger2022}
Cyrill Scheidegger, Julia H{\"o}rrmann, and Peter B{\"u}hlmann.
\newblock The weighted generalised covariance measure.
\newblock \emph{Journal of Machine Learning Research}, 23:\penalty0 1--68,
  2022.

\bibitem[Shah and Peters(2020)]{shah2020}
Rajen~D. Shah and Jonas Peters.
\newblock The hardness of conditional independence testing and the generalised
  covariance measure.
\newblock \emph{The Annals of Statistics}, 48\penalty0 (3):\penalty0
  1514--1538, 2020.

\bibitem[Simnacher et~al.(2026{\natexlab{a}})Simnacher, Keilbar, K{\"o}nig,
  Lippert, and Greven]{simnacher2026ecit}
Marco Simnacher, Georg Keilbar, Benjamin K{\"o}nig, Christoph Lippert, and
  Sonja Greven.
\newblock Embedded conditional independence tests for large language model
  generated text with an application to {G}erman parliament speeches.
\newblock \emph{arXiv preprint arXiv:2609.00946}, 2026{\natexlab{a}}.

\bibitem[Simnacher et~al.(2026{\natexlab{b}})Simnacher, Xu, Park, Lippert, and
  Greven]{simnacher2026dncit}
Marco Simnacher, Xiangnan Xu, Hani Park, Christoph Lippert, and Sonja Greven.
\newblock Deep nonparametric conditional independence tests for images.
\newblock \emph{Journal of Machine Learning Research}, 27\penalty0
  (96):\penalty0 1--62, 2026{\natexlab{b}}.

\bibitem[Spirtes et~al.(2000)Spirtes, Glymour, and
  Scheines]{spirtes2000causation}
Peter Spirtes, Clark Glymour, and Richard Scheines.
\newblock \emph{Causation, Prediction, and Search}.
\newblock Adaptive Computation and Machine Learning. MIT Press, Cambridge, MA,
  2nd edition, 2000.
\newblock ISBN 0-262-19440-6.
\newblock \doi{10.7551/mitpress/1754.001.0001}.

\bibitem[Stoye(2009)]{stoye2009}
J{\"o}rg Stoye.
\newblock More on confidence intervals for partially identified parameters.
\newblock \emph{Econometrica}, 77\penalty0 (4):\penalty0 1299--1315, 2009.

\bibitem[Tan et~al.(2016)Tan, Niculae, Danescu-Niculescu-Mizil, and
  Lee]{tan2016}
Chenhao Tan, Vlad Niculae, Cristian Danescu-Niculescu-Mizil, and Lillian Lee.
\newblock Winning arguments: Interaction dynamics and persuasion strategies in
  good-faith online discussions.
\newblock In \emph{Proceedings of the 25th International Conference on World
  Wide Web}, pages 613--624, 2016.

\bibitem[Tansey et~al.(2022)Tansey, Veitch, Zhang, Rabadan, and
  Blei]{tansey2022}
Wesley Tansey, Victor Veitch, Haoran Zhang, Raul Rabadan, and David~M. Blei.
\newblock The holdout randomization test for feature selection in black box
  models.
\newblock \emph{Journal of Computational and Graphical Statistics}, 31\penalty0
  (1):\penalty0 151--162, 2022.

\bibitem[{TypeSafe AI}(2026)]{typesafe2026jev}
{TypeSafe AI}.
\newblock Jev: a {System One} decision model.
\newblock \url{https://typesafe.ai}, 2026.
\newblock Model \texttt{jev-latest} through the TypeSafe API, accessed 30
  September 2026.

\bibitem[Veitch and Zaveri(2020)]{veitch2020sense}
Victor Veitch and Anisha Zaveri.
\newblock Sense and sensitivity analysis: simple post-hoc analysis of bias due
  to unobserved confounding.
\newblock In \emph{Advances in Neural Information Processing Systems},
  volume~33, 2020.

\bibitem[Veitch et~al.(2020)Veitch, Sridhar, and Blei]{veitch2020}
Victor Veitch, Dhanya Sridhar, and David~M. Blei.
\newblock Adapting text embeddings for causal inference.
\newblock In \emph{Proceedings of the 36th Conference on Uncertainty in
  Artificial Intelligence}, pages 919--928, 2020.

\bibitem[Zhang et~al.(2011)Zhang, Peters, Janzing, and
  Sch{\"o}lkopf]{zhang2011kci}
Kun Zhang, Jonas Peters, Dominik Janzing, and Bernhard Sch{\"o}lkopf.
\newblock Kernel-based conditional independence test and application in causal
  discovery.
\newblock In \emph{Proceedings of the 27th Conference on Uncertainty in
  Artificial Intelligence}, pages 804--813, 2011.

\bibitem[Zhang et~al.(2024)Zhang, Huang, Yang, and Shao]{zhang2024doubly}
Yi~Zhang, Linjun Huang, Yun Yang, and Xiaofeng Shao.
\newblock Doubly robust conditional independence testing with generative neural
  networks.
\newblock \emph{arXiv preprint arXiv:2407.17694}, 2024.

\end{thebibliography}
\end{document}